\documentclass{article}

\PassOptionsToPackage{numbers, compress}{natbib}

\usepackage[acronym,smallcaps,nowarn]{glossaries}
\glsdisablehyper{}

\usepackage[preprint]{neurips_2021}
\usepackage[utf8]{inputenc} 
\usepackage[T1]{fontenc}    
\usepackage{hyperref}       
\usepackage{url}            
\usepackage{booktabs}       
\usepackage{amsfonts}       
\usepackage{nicefrac}       
\usepackage{microtype}      
\usepackage{amsmath}
\usepackage{algorithm,algorithmic}
\usepackage{graphicx}
\usepackage{amsmath,amssymb,amsthm}
\usepackage{algorithm,algorithmic}
\usepackage{subfigure}
\usepackage[outercaption]{sidecap}
\makeatletter
\def\SC@figure@vpos{m}
\makeatother

\usepackage{xcolor}
\usepackage{framed}
\usepackage{graphicx}
\usepackage{mathtools}
\graphicspath{ {images/} }
\usepackage{tikz}
\usetikzlibrary{arrows,shapes,snakes,automata,backgrounds,petri}
\usepackage{pdfpages}
\usepackage{hyperref}
\hypersetup{
	colorlinks=true,
	linkcolor=blue,
	filecolor=magenta,      
	urlcolor=cyan,
}

\usepackage[T1]{fontenc}
\usepackage[utf8]{inputenc}
\usepackage{tabularx,ragged2e,booktabs,caption}
\newcolumntype{C}[1]{>{\Centering}m{#1}}

\usepackage{caption}

\usepackage{enumitem}

\usepackage{thmtools}

\usepackage{hyperref}
\usepackage{cleveref}

\usepackage{mathrsfs}
\usepackage{mathtools}

\usepackage{enumitem}
\usepackage{natbib}
\usepackage{verbatim}
\usepackage{url}
\usepackage{thm-restate}

\usepackage{wrapfig}

\usepackage{bbm}

\usepackage{parskip}

\usepackage{graphbox}

\usepackage{mathrsfs}
\newcommand{\probspace}{\mathscr{P}}

\usepackage{subfigure}

\newtheoremstyle{definition}
{3pt} 
{3pt} 
{} 
{} 
{\bfseries} 
{.} 
{.5em} 
{} 

\theoremstyle{definition}



\renewcommand{\epsilon}{\varepsilon}
\renewcommand{\hat}{\widehat}
\renewcommand{\tilde}{\widetilde}
\renewcommand{\bar}{\overline}

\title{The Nature of Temporal Difference Errors in\\ Multi-step Distributional Reinforcement Learning}

\author{%
  Yunhao Tang \\
  DeepMind\\
  \texttt{robintyh@deepmind.com}
    \And
    Mark Rowland \\
    DeepMind \\
    \texttt{markrowland@deepmind.com}
    \And
    R\'emi Munos \\
    DeepMind \\
    \texttt{munos@deepmind.com}
    \And
    Bernardo \'Avila Pires \\
    DeepMind \\
    \texttt{bavilapires@deepmind.com}
    \And
    Will Dabney \\
    DeepMind \\
    \texttt{wdabney@deepmind.com}
    \And
    Marc G. Bellemare \\
    Google Brain \\
    \texttt{bellemare@google.com}
}

\begin{document}

\maketitle

\begin{abstract}

We study the multi-step off-policy learning approach to distributional RL. Despite the apparent similarity between value-based RL and distributional RL, our study reveals intriguing and fundamental differences between the two cases in the multi-step setting. We identify a novel notion of path-dependent distributional TD error, which is indispensable for principled multi-step distributional RL. 
The distinction from the value-based case bears important implications on concepts such as backward-view algorithms. Our work provides the first theoretical guarantees on multi-step off-policy distributional RL algorithms, including results that apply to the small number of existing approaches to multi-step distributional RL. In addition, we derive a novel algorithm, Quantile Regression-Retrace, which leads to a deep RL agent QR-DQN-Retrace that shows empirical improvements over QR-DQN on the Atari-57 benchmark. Collectively, we shed light on how unique challenges in multi-step distributional RL can be addressed both in theory and practice.

\end{abstract}

\section{Introduction}

The return $\sum_{t=0}^\infty\gamma^t R_t$ is a fundamental concept in reinforcement learning (RL). In general, the return is a random variable, whose distribution captures important information such as the stochasticity in future events. While the classic view of value-based RL typically focuses on the expected return \citep{bertsekas1996neuro,sutton1998,szepesvari2010algorithms}, learning the full return distribution is of both theoretical and practical importance \citep{morimura2010nonparametric,morimura2012parametric,bellemare2017distributional,yang2019fully,bodnar2019quantile,wurman2022outracing,bdr2022}. 

To design efficient algorithms for learning return distributions, a natural idea is to construct distributional equivalents of existing multi-step off-policy value-based algorithms.
In value-based RL, multi-step learning tends to propagate useful information more efficiently and off-policy learning is ubiquitous in modern RL systems.
Meanwhile, the return distribution shares inherent commonalities with the expected return, thanks to the close connection between the distributional Bellman equation \citep{morimura2010nonparametric,morimura2012parametric,bellemare2017distributional,bdr2022} and the celebrated value-based Bellman equation \citep{sutton1998}.
The Bellman equation is foundational to value-based RL algorithms, including many multi-step off-policy methods \citep{precup2001off,harutyunyan2016q,munos2016safe,mahmood2017multi}. Due to the apparent similarity between distributional and value-based Bellman equations, should we expect key value-based concepts and algorithms to seamlessly transfer to distributional learning?

Our study indicates that the answer is no. There are critical differences between distributional and value-based RL, which requires a distinct treatment of multi-step learning. Indeed, thanks to the focus on expected returns, the value-based setup offers many unique conceptual and computational simplifications in algorithmic design. 
However, we find that such simplifications do not hold for distributional learning. Multi-step distributional RL requires a deeper look at the connections between fundamental concepts such as $n$-step returns, TD errors and importance weights for off-policy learning. To this end, we make the following conceptual, theoretical and algorithmic contributions: 

\paragraph{Distributional TD error.} We demonstrate the emergence of a novel notion of path-dependent distributional TD error (Section~\ref{sec:understanding}). Intriguingly, as the name suggests,  path-dependent distributional TD errors are \emph{path-dependent}, i.e., distributional TD errors at time $t$ depend on the sequence of immediate rewards $(R_s)_{s=0}^{t-1}$. This differs from value-based TD errors, which are path-independent. We will show that the path-dependency property is not an artifact, but rather a fundamental property of distributional learning. We show numerically that naively constructing certain path-independent distributional TD errors does not produce convergent algorithms. The path-dependency property also has conceptual and computational impacts on forward-view estimates and backward-view algorithms.

\paragraph{Theory of multi-step distributional RL.} We derive distributional Retrace, a novel and generic multi-step off-policy operator for distributional learning.
We prove that distributional Retrace is contractive and has the target return distribution as its fixed point. Distributional Retrace interpolates between the one-step distributional Bellman operator \citep{bellemare2017distributional} and Monte-Carlo (MC) estimation with importance weighting \citep{chandak2021universal}, trading-off the strengths from the two extremes.

\paragraph{Approximate multi-step distributional RL.} Finally, we derive Quantile Regression-Retrace, a novel algorithm combining distributional Retrace with quantile representations of distributions \citep{dabney2018distributional} (Section~\ref{sec:parametric}). One major technical challenge is to define the quantile regression (QR) loss against signed measures, which are unavoidable in sample-based settings. We bypass the issue of ill-defined QR loss and derive unbiased stochastic estimates to the QR loss gradient. This leads up to QR-DQN-Retrace, a deep RL agent with performance improvements over QR-DQN on  Atari-57 games.

In Figure~\ref{fig:illustration}, we illustrate how the back-up target is computed for multi-step distributional RL. In summary, we take our findings to demonstrate how the set of unique challenges presented by multi-step distributional RL can be addressed both theoretically and empirically. Our study also opens up many exciting research pathways in this domain, paving
the way for future investigations.

\section{Background}\label{sec:background}

Consider a Markov decision process (MDP) represented as the tuple $\left(\mathcal{X},\mathcal{A},P_R,P,\gamma\right)$ where $\mathcal{X}$ is the state space, $\mathcal{A}$ the action space, $P_R:\mathcal{X}\times\mathcal{A}\rightarrow \probspace(\mathscr{R})$ the reward kernel (with $\mathscr{R}$ a finite set of possible rewards), $P:\mathcal{X}\times\mathcal{A}\rightarrow\probspace(\mathcal{X})$ the transition kernel and $\gamma\in [0,1)$ the discount factor. In general, we use $\probspace(A)$ denote a distribution over set $A$.
We assume the reward to take a finite set of values mainly because it is notationally simpler to present results; it is straightforward to extend our results to the general case.
Let $\pi:\mathcal{X}\rightarrow\probspace(\mathcal{A})$ be a fixed policy.
We use $(X_t,A_t,R_t)_{t=0}^\infty\sim\pi$ to denote a random trajectory sampled from $\pi$, such that $A_t\sim \pi(\cdot|X_t),R_t\sim P_R(\cdot|X_t,A_t),X_{t+1}\sim P(\cdot|X_t,A_t)$. Define $G^\pi(x,a)\coloneqq\sum_{t=0}^\infty \gamma^t R_t$ as the random return, obtained by following $\pi$ starting from $(x,a)$. The Q-function $Q^\pi(x,a)\coloneqq\mathbb{E}[G^\pi(x,a)]$ is defined as the expected return under policy $\pi$. For convenience, we also adopt the vector notation $Q\in\mathbb{R}^{\mathcal{X}\times\mathcal{A}}$.
Define the one-step value-based Bellman operator $T^\pi:\mathbb{R}^{\mathcal{X\times\mathcal{A}}}\rightarrow\mathbb{R}^{\mathcal{X\times\mathcal{A}}}$ such that $T^\pi Q(x,a) \coloneqq \mathbb{E}[R_0+\gamma Q\left(X_1,A_1^\pi\right)|X_0=x,A_0=a]$ where $Q(X_t,A_t^\pi)\coloneqq\sum_a \pi(a|X_t)Q(X_t,a)$.  The Q-function $Q^\pi$ satisfies  $Q^\pi=T^\pi Q^\pi$ and is also the unique fixed point of $T^\pi$.

\begin{figure}[t]
    \centering
    \includegraphics[keepaspectratio,width=.8\textwidth]{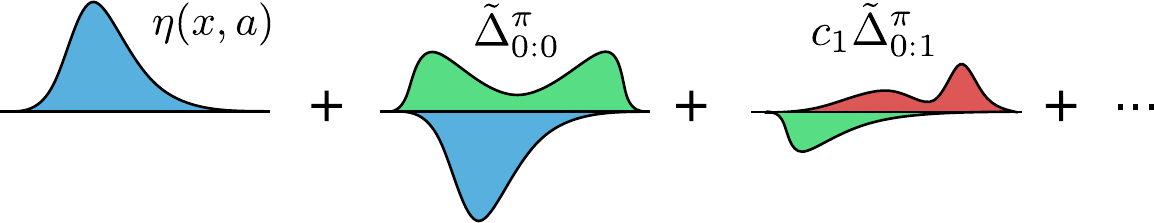}
    \caption{Illustration of a multi-step distributional RL target, constructed as a sum of the initial distribution (left) and weighted distributional TD errors $\tilde{\Delta}^\pi_{0:0}, c_1 \tilde{\Delta}^\pi_{0:1}, \ldots$ across multiple time steps (middle and right); see Section~\ref{sec:dist} for further details and notation. In general, distributional TD errors are signed measures, as reflected by the downwards probability mass; they are also scaled by trace coefficients $c_1$ to correct for off-policy discrepancies between target and behavior policy.}
    \label{fig:illustration}
\end{figure}

\subsection{Distributional reinforcement learning}

In general, the return $G^\pi(x,a)$ is a random variable and we define its distribution as $\eta^\pi(x,a)\coloneqq \text{Law}_\pi\left(G^\pi(x,a)\right)$. 
The return distribution satisfies the distributional Bellman equation \citep{morimura2010nonparametric,morimura2012parametric,bellemare2017distributional,rowland2018analysis,bdr2022},
\begin{align}
     \eta^\pi(x,a) = \mathbb{E}_\pi\left[ \left(\textrm{b}_{R_0,\gamma}\right)_{\#} \eta^\pi\left(X_1,A_1^\pi\right)\; \middle| \; X_0=x,A_0=a\right] \, , \label{eq:dist-bellman}
\end{align}
where $(\textrm{b}_{r,\gamma})_\#:\probspace(\mathbb{R})\rightarrow \probspace(\mathbb{R})$ is the pushforward operation defined through the function $\textrm{b}_{r,\gamma}(z)=r+\gamma z$ \citep{rowland2018analysis}. For convenience, we adopt the notation $\eta^\pi(X_t,A_t^\pi)\coloneqq\sum_a \pi(a|X_t)\eta^\pi(X_t,a)$.
Throughout the paper, we focus on the space of distributions with bounded support $\probspace_\infty(\mathbb{R})$. 
Let $\eta\in\probspace_\infty(\mathbb{R})^{\mathcal{X}\times\mathcal{A}}$ be any distribution vector, we define the \emph{distributional Bellman operator} $\mathcal{T}^\pi: \probspace_\infty(\mathbb{R})^{\mathcal{X}\times\mathcal{A}}\rightarrow\probspace_\infty(\mathbb{R})^{\mathcal{X}\times\mathcal{A}}$ as follows \citep{rowland2018analysis,bdr2022},
\begin{align}
    \mathcal{T}^\pi\eta(x,a)\coloneqq \mathbb{E}\left[(\textrm{b}_{R_0,\gamma})_\# \eta(X_1,A_1^\pi)\; \middle| \; X_0=x,A_0=a\right]\, .\label{eq:dist-bellman-op}
\end{align}
Let $\eta^\pi$ be the collection of return distributions under $\pi$; 
the distributional Bellman equation can then be rewritten as
$
    \eta^\pi = \mathcal{T}^\pi \eta^\pi
$. The distributional Bellman operator $\mathcal{T}^\pi$ is $\gamma$-contractive under the supremum $p$-Wasserstein distance \citep{dabney2018distributional,bdr2022}, so that $\eta^\pi$ is the unique fixed point of $\mathcal{T}^\pi$. See Appendix~\ref{appendix:metric} for details of the distance metrics.

\subsection{Multi-step off-policy value-based learning}

We provide a brief background on the value-based multi-step off-policy setting as a reference for the distributional case discussed below. In off-policy learning,  the data is generated under a behavior policy $\mu$, which potentially differs from target policy $\pi$. The aim is to evaluate the target Q-function $Q^\pi$.
As a standard assumption, we require $\text{supp}(\pi(\cdot|x))\subseteq\text{supp}(\mu(\cdot|x)),\forall x\in\mathcal{X}$. Let $\rho_t\coloneqq \pi(A_t|X_t)/\mu(A_t|X_t)$ be the step-wise importance sampling (IS) ratio at time step $t$. Step-wise IS ratios are critical in correcting for the off-policy discrepancy between $\pi$ and $\mu$.

Let $c_t\in [0,\rho_t]$ be a time-dependent trace coefficient. We denote $c_{1:t}=c_1\cdots c_t$ and define $c_{1:0}=1$ by convention. Consider a generic form of the return-based off-policy operator $R^{\pi,\mu}$ as in \citep{munos2016safe},
\begin{align}
    R^{\pi,\mu}Q(x,a)\coloneqq Q(x,a) + \mathbb{E}_{\mu}\left[ \sum_{t=0}^\infty c_{1:t}\gamma^t\underbrace{\left(R_t+\gamma Q\left(X_{t+1},A_{t+1}^\pi\right) - Q(X_t,A_t)\right)}_{\delta_t^\pi=\text{value-based\ TD\ error}}\right],\label{eq:retrace}
\end{align}
In the above and below, we omit the notation conditioning on $X_0=x,A_0=a$ for conciseness. The general form of $R^{\pi,\mu}$ encompasses many important special cases: when on-policy and $c_t=\lambda$, it recovers the Q-function variant of TD($\lambda$) \citep{sutton1998,harutyunyan2016q}; when $c_t=\lambda \min(\bar{c},\rho_t)$, it recovers a specific form of Retrace \citep{munos2016safe}; when $c_t=\rho_t$, it recovers the importance sampling (IS) operator. The back-up target is computed as a mixture over TD errors $\delta_t^\pi$, each calculated from the one-step transition data. We also define the \emph{discounted TD error} $\tilde{\delta}_{t}^\pi=\gamma^t \delta_t^\pi$, which can be interpreted as the difference between $n$-step returns from two time steps $t$ and $t+1$, as we discuss in Section~\ref{sec:understanding}. As we will detail, the property of $\tilde{\delta}_t^\pi$ marks a significant difference from the distributional RL setting.

By design,
$R^{\pi,\mu}$ has $Q^\pi$ as the unique fixed point. Multi-step updates make use of rewards from multiple time steps, propagating learning signals more efficiently. This is reflected by the fact that $R^{\pi,\mu}$ is $\beta$-contractive with $\beta\in[0,\gamma]$ \citep{munos2016safe} and often contracts to $Q^\pi$ faster than the one-step Bellman operator $T^\pi$. Our goal is to design distributional equivalents of multi-step off-policy operators, which can lead to concrete algorithms with sample-based learning. 

\section{Multi-step off-policy distributional reinforcement learning}
\label{sec:dist}

We now present the core theoretical results relating to multi-step distributional operators. In general, the aim is to evaluate the target distribution $\eta^\pi$ with access to off-policy data generated under $\mu$.

Below, we use $G_{t':t}=\sum_{s=t'}^t \gamma^{s-t'} R_s$ to denote the partial sum of discounted rewards between two time steps $t'\leq t$. We define the generic form of multi-step off-policy distributional operator $\mathcal{R}^{\pi,\mu}$ such that for any $\eta\in\probspace_\infty(\mathbb{R})^{\mathcal{X}\times\mathcal{A}}$, its back-up target $\mathcal{R}^{\pi,\mu}\eta(x,a)$ is computed as 
\begin{align}
    \eta(x,a) + \mathbb{E}_{\mu}\left[ \sum_{t=0}^\infty c_{1:t} \cdot \left(\underbrace{\left(\textrm{b}_{G_{0:t},\gamma^{t+1}}\right)_{\#}\eta\left(X_{t+1},A_{t+1}^\pi\right) - \left(\textrm{b}_{G_{0:t-1},\gamma^{t}}\right)_{\#}\eta(X_t,A_t)}_{\tilde{\Delta}_{0:t}^\pi=\text{Multi-step\ Distributional\ TD\ error}}\right)\right].\label{eq:dist-retrace-operator}
\end{align}

As an effort to simplify the naming, we  call $\mathcal{R}^{\pi,\mu}$ the \emph{distributional Retrace} operator. Distributional Retrace only requires $c_t\in[0,\rho_t]$ and represents a large family of distributional operators. Throughout, we will heavily adopt the pushforward notations. This is mainly because instead of directly working with the random variable $G^\pi$, we find it much more convenient to express various important multi-step operations with pushfoward notations.

The back-up target $\mathcal{R}^{\pi,\mu}\eta(x,a)$ is written as a weighted sum of the path-dependent distributional TD errors $\tilde{\Delta}_{0:t}^\pi$, which we extensively discuss in Section~\ref{sec:understanding}. 
Though the form of $\mathcal{R}^{\pi,\mu}$ seems to bear certain similarities to the value-based operator in Equation~\eqref{eq:retrace}, the critical differences lie in subtle definitions of the distributional TD errors $\tilde{\Delta}_{0:t}^\pi$ and where to place the traces $c_{1:t}$ for off-policy corrections. We resume to unpack the insights entailed by the design of the operator in Section~\ref{sec:understanding}.

Below, we first present theoretical properties of the distributional Retrace operator. We start with a key property which underlies many ensuing theoretical results. Given a fixed $n$-step reward sequence $r_{0:n-1}$ and a fixed state-action pair $(x,a)\in\mathcal{X}\times\mathcal{A}$, we call pushfoward distributions of the form $\left(\textrm{b}_{\sum_{s=0}^{n-1}\gamma^s r_s,\gamma^{n}}\right)_\#\eta(x,a)$ the \emph{$n$-step target distributions}. Our result shows that the back-up target of Retrace is a convex combination of $n$-step target distributions with varying values of $n$.

\begin{restatable}{lemma}{lemmaconvexcombination}\label{lemma:convexcombination} (\textbf{Convex combination}) The Retrace back-up target  is a convex combination of $n$-step target distributions. Formally, there exists an index set $I(x,a)$ such that $\mathcal{R}^{\pi,\mu}\eta(x,a)=\sum_{i\in I(x,a)} w_i \eta_i$ where $w_i\geq 0$,  $\sum_{i\in I(x,a)}w_i=1$ and $(\eta_i)_{i\in I(x,a)}$ are $n_i$-return target distributions.
\end{restatable}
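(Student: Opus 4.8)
The plan is to rearrange the defining series of $\mathcal{R}^{\pi,\mu}\eta(x,a)$ in Equation~\eqref{eq:dist-retrace-operator} so that the signed contributions of the distributional TD errors regroup into a nonnegatively-weighted mixture of $n$-step target distributions. Write $\phi_t \defeq \left(\textrm{b}_{G_{0:t},\gamma^{t+1}}\right)_{\#}\eta(X_{t+1},A_{t+1}^\pi)$ and $\psi_t \defeq \left(\textrm{b}_{G_{0:t-1},\gamma^{t}}\right)_{\#}\eta(X_t,A_t)$, so that $\tilde{\Delta}_{0:t}^\pi = \phi_t - \psi_t$. Two structural facts drive the argument: first, $\psi_0 = \eta(x,a)$ (since $G_{0:-1}=0$ and $\gamma^0=1$), which will cancel the leading $\eta(x,a)$ term; and second, $\psi_{t+1} = \left(\textrm{b}_{G_{0:t},\gamma^{t+1}}\right)_{\#}\eta(X_{t+1},A_{t+1})$ carries exactly the same pushforward map as $\phi_t$, the two differing only in that $\phi_t$ averages $\eta(X_{t+1},\cdot)$ under the target policy $\pi$ whereas $\psi_{t+1}$ evaluates it at the sampled action $A_{t+1}$.

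To keep the manipulation rigorous I would first truncate, denoting by $\mathcal{R}^{(T)}\eta(x,a)$ the same expression with $\sum_{t=0}^\infty$ replaced by $\sum_{t=0}^{T}$, and only afterwards let $T\to\infty$. Shifting the index of the $\psi$-terms by one and using $\psi_0=\eta(x,a)$ then gives the pairing
\begin{align}
    \mathcal{R}^{(T)}\eta(x,a) = \sum_{t=0}^{T-1}\mathbb{E}_\mu\!\left[c_{1:t}\phi_t - c_{1:t+1}\psi_{t+1}\right] + \mathbb{E}_\mu\!\left[c_{1:T}\phi_T\right],
\end{align}
where the last boundary term collects the unpaired forward contribution at the horizon.

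I would then analyse a single paired term by conditioning on the history $\mathcal{H}_{t+1}$ up to the state $X_{t+1}$ but before $A_{t+1}$ is drawn. Given $\mathcal{H}_{t+1}$, the prefix $G_{0:t}$, the weight $c_{1:t}$ and the map $\left(\textrm{b}_{G_{0:t},\gamma^{t+1}}\right)_{\#}$ are fixed, and taking the expectation over $A_{t+1}\sim\mu(\cdot|X_{t+1})$ yields
\begin{align}
    \mathbb{E}_\mu\!\left[c_{1:t}\phi_t - c_{1:t+1}\psi_{t+1}\,\middle|\,\mathcal{H}_{t+1}\right] = c_{1:t}\sum_a \big(\pi(a|X_{t+1}) - \mu(a|X_{t+1})c_{t+1}(a)\big)\left(\textrm{b}_{G_{0:t},\gamma^{t+1}}\right)_{\#}\eta(X_{t+1},a),
\end{align}
with $c_{t+1}(a)$ the trace coefficient when $A_{t+1}=a$. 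For a fixed realisation each summand is literally a $(t+1)$-step target distribution with reward prefix $(R_0,\dots,R_t)$ at $(X_{t+1},a)$, and the constraint $c_{t+1}\in[0,\rho_{t+1}]$ together with the support assumption $\mathrm{supp}(\pi(\cdot|x))\subseteq\mathrm{supp}(\mu(\cdot|x))$ forces $\mu(a|X_{t+1})c_{t+1}(a)\le\pi(a|X_{t+1})$, so every coefficient is nonnegative; the boundary term is manifestly a nonnegative combination of $(T+1)$-step targets. Since $\mathbb{E}_\mu[\cdot]$ is itself an average over trajectories, the whole expression is a nonnegative mixture of $n$-step targets indexed by $(t,a)$ and the trajectory prefix, which furnishes the index set $I(x,a)$.

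It then remains to check that the weights sum to one, which I expect to be the cleanest step: a paired term has total mass $\mathbb{E}_\mu[c_{1:t}]-\mathbb{E}_\mu[c_{1:t+1}]$ (using $\mathbb{E}_\mu[c_{1:t}c_{t+1}\mid\mathcal{H}_{t+1}]=c_{1:t}\mathbb{E}_\mu[c_{t+1}\mid\mathcal{H}_{t+1}]$ and that pushforwards preserve mass), so the sum telescopes to $1-\mathbb{E}_\mu[c_{1:T}]$, and adding the boundary mass $\mathbb{E}_\mu[c_{1:T}]$ restores a total of exactly $1$ for every $T$. The main obstacle is the passage $T\to\infty$: one must argue that the truncated convex combinations converge to $\mathcal{R}^{\pi,\mu}\eta(x,a)$ (e.g.\ in the supremum-Wasserstein metric, using the bounded supports and the well-definedness of $\mathcal{R}^{\pi,\mu}$) and that the limit retains the convex-combination form. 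The boundary term is precisely where the mass resides in the pure importance-sampling case $c_t=\rho_t$ (where every paired coefficient vanishes), so it cannot be discarded; handling it correctly is the crux, and it also makes transparent why the nonnegativity of the paired coefficients hinges essentially on $c_t\le\rho_t$.
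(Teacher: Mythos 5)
Your proposal is correct and follows essentially the same route as the paper's proof: after conditioning, your pair coefficients $c_{1:t}\bigl(\pi(a|X_{t+1})-\mu(a|X_{t+1})c_{t+1}(a)\bigr)$ are exactly the paper's weights $w_{y,b,r_{0:t-1}}$ (there written as expectations over trajectories with a fixed reward prefix), and your nonnegativity argument from $c_t\le\rho_t$ and the telescoping of the total mass to $1$ coincide with the paper's. The only substantive difference is that you keep the truncation and the boundary term $\mathbb{E}_\mu[c_{1:T}\phi_T]$ explicit, whereas the paper telescopes the infinite series directly, which implicitly requires the tail mass $\mathbb{E}_\mu[c_{1:T}]\to 0$ --- a point that, as you observe, is delicate precisely in the pure importance-sampling case $c_t=\rho_t$.
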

 Since $\mathcal{R}^{\pi,\mu} \eta\in\probspace_\infty(\mathbb{R})^{\mathcal{X}\times\mathcal{A}}$, we can measure the contraction of $\mathcal{R}^{\pi,\mu}$ under probability metrics.

\begin{restatable}{proposition}{propretracecontractive}\label{prop:retracecontractive} (\textbf{Contraction}) $\mathcal{R}^{\pi,\mu} $ is $\beta$-contractive under supremum $p$-Wasserstein distance, where
$
    \beta=\max_{x\in\mathcal{X},a\in\mathcal{A}}\sum_{t=1}^\infty \mathbb{E}_\mu\left[ c_1...c_{t-1}(1-c_t) \right] \gamma^{t} \leq \gamma$.
\end{restatable}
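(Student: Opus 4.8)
The plan is to combine the Convex Combination Lemma with two elementary properties of the $p$-Wasserstein distance. Write $W_p$ for the pointwise distance on $\probspace_\infty(\mathbb{R})$ and $\overline{W}_p$ for its supremum over state-action pairs. First, because each $\textrm{b}_{r,c}(z)=r+cz$ is an affine map with slope $c$, the pushforward scales distances exactly: $W_p\big((\textrm{b}_{r,c})_\#\mu,(\textrm{b}_{r,c})_\#\nu\big)=|c|\,W_p(\mu,\nu)$. Second, $W_p^p$ is jointly convex, so for any family of pairs $(\mu_i,\nu_i)$ and weights $w_i\ge 0$ with $\sum_i w_i=1$, gluing the optimal couplings gives $W_p\big(\sum_i w_i\mu_i,\sum_i w_i\nu_i\big)^p\le \sum_i w_i\,W_p(\mu_i,\nu_i)^p$.

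Now fix $(x,a)$ and apply Lemma~\ref{lemma:convexcombination} to both $\mathcal{R}^{\pi,\mu}\eta(x,a)$ and $\mathcal{R}^{\pi,\mu}\eta'(x,a)$. The crucial observation is that the weights $w_i$, the step counts $n_i$, the reward prefixes, and the bootstrapping state-action pairs are all determined by $\mu$, $\pi$, and the trace coefficients alone, hence are identical for $\eta$ and $\eta'$; only the bootstrapped distribution differs. Thus $\eta_i=(\textrm{b}_{G_i,\gamma^{n_i}})_\#\eta(y_i,b_i)$ and $\eta_i'=(\textrm{b}_{G_i,\gamma^{n_i}})_\#\eta'(y_i,b_i)$ share $G_i$, $n_i$, and $(y_i,b_i)$. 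Chaining the two facts yields, pointwise, $W_p(\mathcal{R}^{\pi,\mu}\eta(x,a),\mathcal{R}^{\pi,\mu}\eta'(x,a))^p\le\sum_i w_i\gamma^{p n_i}W_p(\eta(y_i,b_i),\eta'(y_i,b_i))^p\le \overline{W}_p(\eta,\eta')^p\sum_i w_i\gamma^{p n_i}$, so $\mathcal{R}^{\pi,\mu}$ contracts with modulus $\sup_{x,a}\big(\sum_i w_i\gamma^{p n_i}\big)^{1/p}$; for $p=1$ this reduces to exactly $\sup_{x,a}\sum_i w_i\gamma^{n_i}$.

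It remains to identify the weights. Reorganizing the telescoping sum in Equation~\eqref{eq:dist-retrace-operator}, the leading $\eta(x,a)$ cancels the $t=0$ subtracted term, and regrouping by the discount level $\gamma^t$ shows that the total weight placed on $t$-step target distributions is $\mathbb{E}_\mu[c_{1:t-1}(1-c_t)]$, with any remaining mass pushed to the Monte-Carlo limit $n=\infty$ (which contributes a vanishing $\gamma^\infty$). This gives $\sum_i w_i\gamma^{n_i}=\sum_{t=1}^\infty\mathbb{E}_\mu[c_{1:t-1}(1-c_t)]\gamma^t=\beta$. Finally, since $c_{1:0}=1$ and the increments telescope, $\sum_{t\ge1}\mathbb{E}_\mu[c_{1:t-1}-c_{1:t}]=1-\mathbb{E}_\mu[\lim_t c_{1:t}]\le 1$, and using $\gamma^t\le\gamma$ for $t\ge1$ we conclude $\beta\le\gamma$ (the general-$p$ modulus $(\sum_i w_i\gamma^{pn_i})^{1/p}$ is likewise $\le\gamma$).

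The main obstacle is the weight identification of the previous paragraph: one must carefully justify that the signed-measure telescoping in Equation~\eqref{eq:dist-retrace-operator} reorganizes into a \emph{genuine} convex combination. This requires handling the mismatch between the sampled action $A_t$ and the policy-averaged $A_t^\pi$ under $\mathbb{E}_\mu$ (where the constraint $c_t\in[0,\rho_t]$ and importance reweighting enter, ensuring that each per-level weight $\mathbb{E}_\mu[c_{1:t-1}(1-c_t)]=\mathbb{E}_\mu[c_{1:t-1}]-\mathbb{E}_\mu[c_{1:t}]$ is nonnegative even though $1-c_t$ need not be pointwise), as well as accounting for the residual mass at the Monte-Carlo limit. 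The Wasserstein manipulations are routine once Lemma~\ref{lemma:convexcombination} supplies the shared-weight decomposition; some care is also warranted regarding the $p$-dependence, since the linear modulus $\beta$ is exact for $p=1$ while for general $p$ the argument yields $(\sum_i w_i\gamma^{p n_i})^{1/p}$, which the bound $\gamma^{t}\le\gamma$ still controls by $\gamma$.
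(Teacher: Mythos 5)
Your proposal follows essentially the same route as the paper's own proof: decompose $\mathcal{R}^{\pi,\mu}\eta(x,a)$ via Lemma~\ref{lemma:convexcombination} into a convex combination whose weights, reward prefixes and bootstrap pairs are shared by $\eta$ and $\eta'$, pull the distance inside the mixture by a convexity property, and contract each term using the pushforward homogeneity $W_p\bigl((\textrm{b}_{r,\gamma^n})_\#\mu,(\textrm{b}_{r,\gamma^n})_\#\nu\bigr)=\gamma^n W_p(\mu,\nu)$. Your weight identification $\sum_i w_i\gamma^{n_i}=\sum_{t\geq 1}\mathbb{E}_\mu[c_{1:t-1}(1-c_t)]\gamma^t$ is exactly what the paper's lemma supplies, and your treatment of the residual Monte-Carlo mass and of nonnegativity via $\mathbb{E}_\mu[\rho_t\mid F_t]=1$ is, if anything, more careful than the paper's telescoping-to-one claim, which silently assumes $\mathbb{E}_\mu[c_{1:t}]\to 0$.

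The one substantive divergence is the convexity tool, and it matters. The paper's step (a) invokes convexity of $W_p$ itself, which delivers the linear modulus $\beta$ for every $p$; you instead use convexity of $W_p^p$ (gluing optimal couplings), which delivers $\beta$ only at $p=1$ and $\bigl(\sum_i w_i\gamma^{pn_i}\bigr)^{1/p}$ in general. Your caution is justified: $W_p$ is \emph{not} convex for $p>1$, since $W_2\bigl(\delta_0,\tfrac12\delta_0+\tfrac12\delta_1\bigr)=1/\sqrt{2}>\tfrac12=\tfrac12 W_2(\delta_0,\delta_1)+\tfrac12 W_2(\delta_0,\delta_0)$, so the paper's argument is only rigorous at $p=1$. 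Moreover, the linear modulus can genuinely fail for $p>1$: in a one-state, one-action MDP with zero reward, $c_1=\tfrac12$ and $c_t=0$ for $t\geq 2$, one has $\mathcal{R}^{\pi,\mu}\eta=\tfrac12(\textrm{b}_{0,\gamma})_\#\eta+\tfrac12(\textrm{b}_{0,\gamma^2})_\#\eta$, and $W_2\bigl(\mathcal{R}^{\pi,\mu}\delta_0,\mathcal{R}^{\pi,\mu}\delta_1\bigr)=\sqrt{(\gamma^2+\gamma^4)/2}>(\gamma+\gamma^2)/2=\beta$, whereas your modulus $\bigl(\sum_i w_i\gamma^{2n_i}\bigr)^{1/2}$ is attained exactly. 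So what you prove — $\beta$-contraction under $\bar{W}_1$, and contraction with modulus $\bigl(\sum_i w_i\gamma^{pn_i}\bigr)^{1/p}\leq\gamma$ under $\bar{W}_p$ — is the correct form of the statement, and it mirrors how the paper itself handles the Cram\'er case in Lemma~\ref{prop:retracecontractivecramer}, where $p$-convexity is used and the exponent $1/p$ is retained. This is not a gap in your proof; it is a sharpening that exposes an imprecision in the paper's.
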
 

The contraction rate of the distributional Retrace operator is determined by its effective horizon. At one extreme, when $c_t=0$, the effective horizon is $1$ and $\beta=\gamma$, in which case Retrace recovers the one-step operator. At the other extreme, when $c_t=\rho_t$, the effective horizon is infinite which gives $\beta=0$. This latter case can be understood as correcting for all the off-policy discrepancies with IS, which is very efficient \emph{in expectation} but incurs high variance under sample-based approximations. Proposition~\ref{prop:retracecontractive} also
implies that the distributional Retrace operator has a unique fixed point.

\begin{restatable}{proposition}{propretracefixedpoint}\label{prop:retracefixedpoint} (\textbf{Unique fixed point}) $\mathcal{R}^{\pi,\mu}$ has $\eta^\pi$ as the unique fixed point in $\probspace_\infty(\mathbb{R})^{\mathcal{X}\times\mathcal{A}}$.
\end{restatable}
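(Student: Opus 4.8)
The plan is to leverage the contraction property that has already been established. By Proposition~\ref{prop:retracecontractive}, $\mathcal{R}^{\pi,\mu}$ is a $\beta$-contraction with $\beta \leq \gamma < 1$ on $\probspace_\infty(\mathbb{R})^{\mathcal{X}\times\mathcal{A}}$ equipped with the supremum $p$-Wasserstein metric. Since this metric space is complete, the Banach fixed-point theorem immediately yields existence and uniqueness of a fixed point. Hence the entire content of the proposition reduces to a single verification: that the return-distribution vector $\eta^\pi$ is itself fixed by $\mathcal{R}^{\pi,\mu}$, i.e.\ that the expected weighted sum of path-dependent TD errors in Equation~\eqref{eq:dist-retrace-operator} vanishes (as a signed measure) when $\eta = \eta^\pi$.

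The key step is an algebraic factorization of each TD error into a composition of pushforwards. Writing $G_{0:t} = G_{0:t-1} + \gamma^t R_t$, one checks that $\textrm{b}_{G_{0:t},\gamma^{t+1}} = \textrm{b}_{G_{0:t-1},\gamma^t} \circ \textrm{b}_{R_t,\gamma}$ as maps on $\mathbb{R}$, so that, with $\eta = \eta^\pi$,
$$\tilde{\Delta}_{0:t}^\pi = \left(\textrm{b}_{G_{0:t-1},\gamma^t}\right)_{\#}\left[\left(\textrm{b}_{R_t,\gamma}\right)_{\#}\eta^\pi(X_{t+1},A_{t+1}^\pi) - \eta^\pi(X_t,A_t)\right].$$
I would then condition on the history $\mathcal{F}_t = (X_0,A_0,\ldots,X_t,A_t)$. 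The trace product $c_{1:t}$, the partial return $G_{0:t-1}$, and the outer pushforward $(\textrm{b}_{G_{0:t-1},\gamma^t})_{\#}$ are all $\mathcal{F}_t$-measurable, and since pushforwards act linearly on (signed) measures they commute with the conditional expectation; it therefore suffices to show the bracketed signed measure has zero conditional expectation. This is exactly where the distributional Bellman equation enters: conditioned on $(X_t,A_t)$, the law of $(R_t,X_{t+1})$ under $\mu$ coincides with its law under $\pi$ (reward and transition kernels are policy-independent), so $\mathbb{E}_\mu[(\textrm{b}_{R_t,\gamma})_{\#}\eta^\pi(X_{t+1},A_{t+1}^\pi) \mid X_t,A_t] = \mathcal{T}^\pi\eta^\pi(X_t,A_t) = \eta^\pi(X_t,A_t)$, and the bracket vanishes. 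Consequently $\mathbb{E}_\mu[c_{1:t}\tilde{\Delta}_{0:t}^\pi] = 0$ for every $t$.

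Summing over $t$ then gives $\mathcal{R}^{\pi,\mu}\eta^\pi = \eta^\pi$, completing the identification of the fixed point. I expect the main technical obstacle to be justifying the interchange of the infinite summation over $t$ with the expectation $\mathbb{E}_\mu$: this requires a dominated-convergence / Fubini argument, controlled using $c_t \leq \rho_t$ and the uniformly bounded support of distributions in $\probspace_\infty(\mathbb{R})$, to ensure the series of signed measures converges and that term-by-term expectation is valid. A secondary subtlety worth stating carefully is that the TD errors are signed measures, so each conditional-expectation identity should be read in the sense of signed measures (equivalently, tested against bounded measurable functions, or read off from cumulative distribution functions); once this is set up, the linearity of the pushforward together with $\eta^\pi = \mathcal{T}^\pi \eta^\pi$ does the rest.
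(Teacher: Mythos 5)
Your proposal is correct and takes essentially the same route as the paper: the paper likewise verifies $\mathbb{E}_\mu[c_{1:t}\tilde{\Delta}_{0:t}^\pi]=\mathbf{0}$ term by term, conditioning on the history $(X_s,A_s,R_{s-1})_{s=1}^t$, exploiting the factorization $\textrm{b}_{G_{0:t},\gamma^{t+1}}=\textrm{b}_{G_{0:t-1},\gamma^{t}}\circ\textrm{b}_{R_t,\gamma}$, and applying a shifted form of the distributional Bellman equation (its Lemma~\ref{lemma:pushforwardtranslation}, proved via CDFs, which packages exactly your ``pushforward commutes with conditional expectation'' step), with uniqueness then following from the contraction in Proposition~\ref{prop:retracecontractive}. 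The only slip worth noting is that your conditioning history must also include the rewards $(R_s)_{s=0}^{t-1}$, since otherwise $G_{0:t-1}$ and the outer pushforward $(\textrm{b}_{G_{0:t-1},\gamma^t})_{\#}$ are not measurable with respect to it, as the paper's choice of conditioning variables makes explicit.
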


The above result suggests that starting with  $\eta_0\in\probspace_\infty(\mathbb{R})^{\mathcal{X}\times\mathcal{A}}$, the recursion $\eta_{k+1}=\mathcal{R}^{\pi,\mu}\eta_k$ produces  iterates $(\eta_k)_{k=0}^\infty\in \probspace_\infty(\mathbb{R})^{\mathcal{X}\times\mathcal{A}}$ which converge to $\eta^\pi$ in $\bar{W}_p$ at a rate of $\mathcal{O}(\beta^k)$.

\section{Understanding multi-step distributional reinforcement learning}
\label{sec:understanding}

Now, we pause and take a closer look at the construction of the distributional Retrace operator. We present a number of insights that distinguish distributional learning from value-based learning. 

\subsection{Path-dependent TD error}

The value-based Retrace back-up target can be written as a mixture of value-based TD errors. To better parse the distributional Retrace operator and draw comparison to the value-based setting, we seek to rewrite the distributional back-up target $\mathcal{R}^{\pi,\mu}\eta(x,a)$ into a weighted sum of some notion of distributional TD errors.
To this end, we start with a natural analogy to the value-based TD error.
\begin{restatable}{definition}{definedisttderror}\label{define:disttderror} (\textbf{Distributional TD error}) Given a transition $(X_t,A_t,R_t,X_{t+1})$, define the associated distributional TD error as $\Delta^\pi(X_t,A_t,R_t,X_{t+1})\coloneqq (\textrm{b}_{R_t,\gamma})_\# \eta\left(X_{t+1},A_{t+1}^\pi\right) - \eta(X_t,A_t)$.
\end{restatable}

When the context is clear, we also adopt the concise notation $\Delta_t^\pi=\Delta^\pi(X_t,A_t,R_t,X_{t+1})$. By construction, distributional TD errors are signed measures with zero total mass
\citep{bdr2022}. The distributional TD error is a natural counterpart to the value-based TD error, because they both stem directly from the corresponding one-step Bellman operators. However, unlike in value-based RL, where TD errors alone suffice to specify the multi-step learning operator (Equation~\eqref{eq:retrace}), in distributional RL this is not enough. We introduce the path-dependent distributional TD error, which serves as the building block to distributional Retrace.

\begin{restatable}{definition}{definedisttderror}\label{define:disttderror} (\textbf{Path-dependent distributional TD error}) Given a trajectory $(X_s,A_s,R_s)_{s=0}^\infty$, define the path-dependent distributional TD error at time $t\geq 0$ as follows,
\begin{align}
    \tilde{\Delta}_{0:t}^\pi\coloneqq \left(\textrm{b}_{G_{0:t-1},\gamma^{t}}\right)_{\#}\Delta_t^\pi\label{eq:dist-td-error}.
\end{align}
\end{restatable}
Path-dependent distributional TD errors are defined as a pushforward measures from $\Delta_t^\pi$, where the pushforward operations depend on $G_{0:t-1}$. This equips $\tilde{\Delta}_{0:t}^\pi$ with an intriguing property, \emph{path-dependency}. Concretely, this means that the path-dependent distributional TD error depends on the sequence of rewards $(R_s)_{s=0}^{t-1}$ leading up to step $t$. With the above definitions, we can finally rewrite the back-up target of distributional Retrace as a weighted sum of path-dependent distributional TD errors $\mathcal{R}^{\pi,\mu}\eta(x,a)=\eta(x,a)+\mathbb{E}_\mu[\sum_{t=0}^\infty c_{1:t} \tilde{\Delta}_{0:t}^\pi]$. We now illustrate the difference between value-based and distributional TD errors.

\paragraph{Comparison with value-based TD equivalents.} The value-based equivalent to the path-dependent distributional TD error is the  discounted value-based TD error $\tilde{\delta}_t^\pi = \gamma^t \delta_t^\pi$ which we briefly mentioned in Section~\ref{sec:background}. To see why, note that discounted value-based TD errors allow us to rewrite the value-based Retrace back-up target as $R^{\pi,\mu}Q(x,a)=Q(x,a)+\mathbb{E}_\mu[\sum_{t=0}^\infty c_{1:t} \tilde{\delta}_t^\pi]$. For direct comparison between the two settings, we rewrite both $\tilde{\Delta}_{0:t}^\pi$ and $\tilde{\delta}_t^\pi$ as the difference between two $n$-step predictions evaluated at two time steps $t$ and $t+1$, 
\begin{align}
& \tilde{\Delta}_{0:t}^\pi = \left(\textrm{b}_{G_{0:t},\gamma^{t+1}}\right)_\# \eta\left(X_{t+1},A_{t+1}^\pi\right) - \left(\textrm{b}_{G_{0:t-1},\gamma^{t}}\right)_\# \eta(X_t,A_t), \tag{\text{Distributional}}\label{eq:multistep-dist-td-error} \\
 & \tilde{\delta}_t^\pi =  \left(G_{0:t} + \gamma^{t+1} Q\left(X_{t+1},A_{t+1}^\pi\right) \right)-\left( G_{0:t-1} + \gamma^t Q(X_t,A_t)\right). \tag{\text{Value-based}}\label{eq:multistep-value-td-error}
\end{align}
The above rewriting attributes the path-dependency to the fact that the $n$-step distributional prediction $\left(\textrm{b}_{G_{0:t},\gamma^{t+1}}\right)_\# \eta\left(X_{t+1},A_{t+1}^\pi\right)$ is non-linear in  $G_{0:n-1}$. Indeed, in the value-based setting, because $G_{0:t}=G_{0:t-1}+\gamma^t R_t$ the partial sum of rewards $G_{0:t-1}$ cancels out as a common term. This leaves the discounted TD error $\tilde{\delta}_t^\pi$ \emph{path-independent}. In other words, the computation of $\tilde{\delta}_t^\pi$ does not depend on past rewards $(R_s)_{s=0}^{t-1}$. In contrast, in the distributional setting, the pushforward operations are non-linear in the partial sum of rewards $G_{0:t-1}$. As a result, $G_{0:t-1}$ does not cancel out in the definition of $\tilde{\Delta}_{0:t}^\pi$, making the path-dependent TD error $\tilde{\Delta}_{0:t}^\pi$ depend on the past rewards $(R_s)_{s=0}^{t-1}$. 

The path-dependent property is not an artifact of the distributional Retrace operator $\mathcal{R}^{\pi,\mu}$; instead, it is an indispensable element for convergent multi-step distributional learning in general. We show this by empirically verifying that multi-step learning operators based on alternative definitions of \emph{path-independent} distributional TD errors are non-convergent even for simple problems.

\begin{wrapfigure}{r}{0.35\textwidth}
  \centering
  \includegraphics[width=0.31\textwidth]{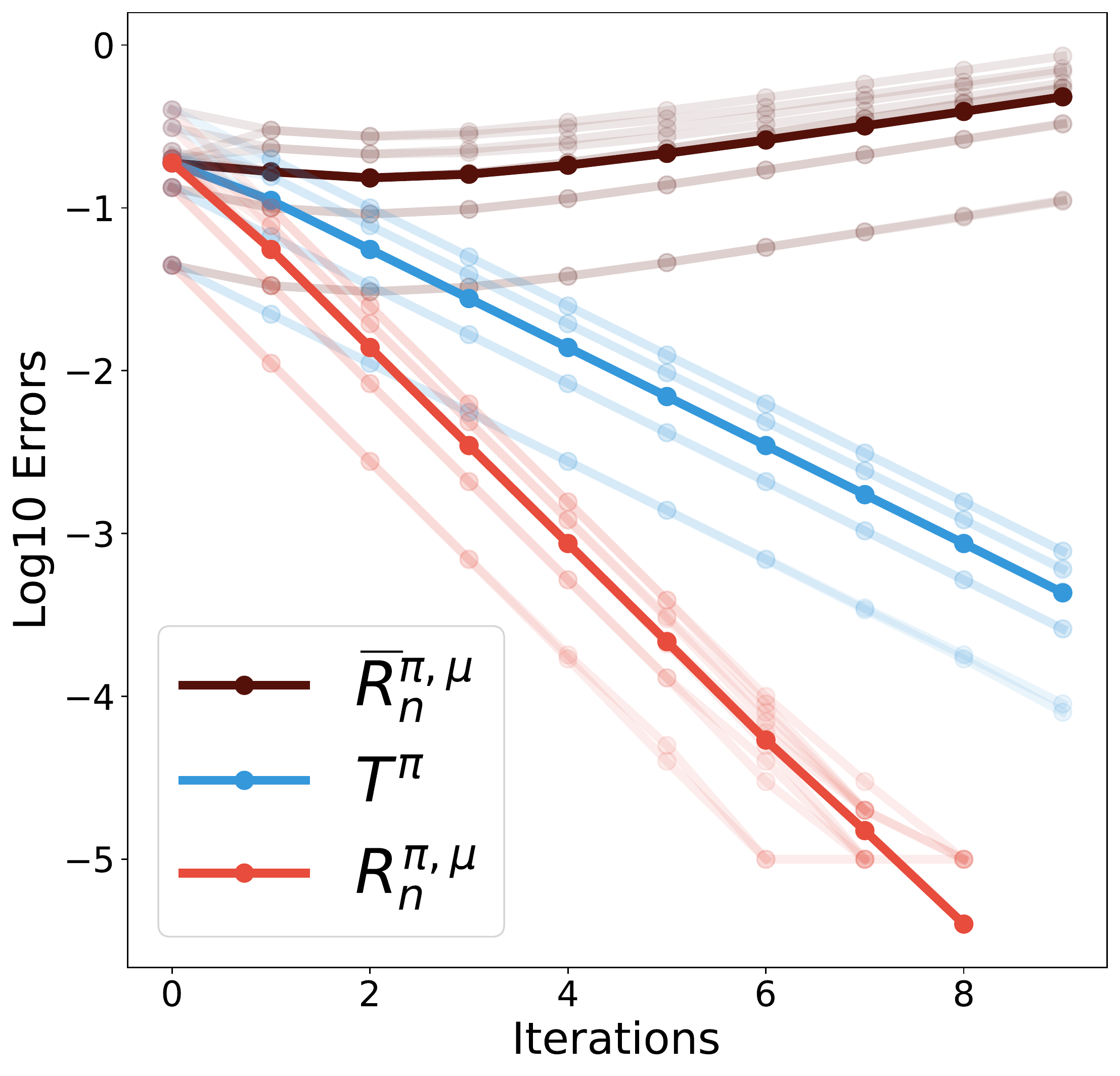}
  \caption{Non-convergent example: comparing $L_p(\mathcal{R}^k \eta_0,\eta^\pi)$ across iterations. We plot $10$ randomly initialized runs. Note $(\bar{\mathcal{R}}^{\pi,\mu})^k\eta_0$ does not converge to $\eta^\pi$ while others do.}
  \label{fig:counterexample}
  \vspace{-1.2cm}
\end{wrapfigure}

\paragraph{Numerically non-convergent path-independent operators.}   Consider the \emph{path-independent} distributional TD error $
    \bar{\Delta}_t^\pi \coloneqq \left(\textrm{b}_{0,\gamma^t}\right)_\# \Delta_t^\pi$.
    We arrived at this definition by dropping the path-dependent term $G_{0:t-1}$ in the pushforward of $\tilde{\Delta}_{0:t}^\pi$. Such a definition seems appealing because when $\eta=\eta^\pi$, the error is zero in expectation $\mathbb{E}_\mu\left[\bar{\Delta}_t^\pi|X_t,A_t\right]=0$. This implies that we can construct a multi-step operator by a weighted sum of the alternative path-independent TD error $
   \bar{\mathcal{R}}_n^{\pi,\mu}\eta(x,a)\coloneqq \eta(x,a) + \mathbb{E}_\mu\left[\sum_{t=0}^\infty c_{1:t} \bar{\Delta}_t^\pi\right]$.
   By construction, $ \bar{\mathcal{R}}_n^{\pi,\mu}$ has $\eta^\pi$ as one fixed point.
 
We provide a very simple counterexample on which $\bar{\mathcal{R}}^{\pi,\mu}$ is not contractive: consider an MDP with one state and one action. The state transitions back to itself with a deterministic reward $R_t=1$. When the discount factor is $\gamma=0.5$, $\eta^\pi$ is a Dirac distribution centered at $2$. We consider the simple case $c_1=\rho_1$ and $c_t=0,\forall t\geq 2$. We use the $L_p$ distance to measure the convergence of the distribution iterates \citep{bdr2022}. Figure~\ref{fig:counterexample} shows that $(\bar{\mathcal{R}}_n^{\pi,\mu})^k\eta_0$ does not converge to $\eta^\pi$, while the one-step Bellman operator $\mathcal{T}^\pi$ and distributional Retrace  $\mathcal{R}^{\pi,\mu}$ are convergent. 

In Appendix~\ref{appendix:failure}, we discuss yet another alternative to $\tilde{\Delta}_{0:t}^\pi$ designed to be path-independent $\gamma^t \Delta_t^\pi$. Though the resulting multi-step operator still has $\eta^\pi$ as one fixed point, we show numerically that it is not contractive on the same simple example. These results demonstrate that naively removing the path-dependency might lead to non-convergent multi-step operators. 

\subsection{Backward-view of distributional multi-step learning}

To highlight the difference between distributional and value-based multi-step learning, we discuss the impact that path-dependent distributional TD errors have on the backward-view distributional algorithm. Thus far, distributional back-up targets are expressed in the \emph{forward-view}, i.e., the back-up target at time $t$ is calculated as a function of future transition tuples $(X_s,A_s,R_s)_{s\leq t}$. The forward-view algorithms, unless truncated, wait until the episode finishes to carry out the update, which might be undesirable when the problem is non-episodic or has a very long horizon. 

In the \emph{backward-view}, when encountering a distributional TD error $\Delta_t^\pi$, the algorithm carries out updates for all predictions at time $t' \leq t$ \citep{sutton1998}. To this end, the algorithm needs to maintain additional \emph{partial return traces}, i.e., the partial sum of rewards $G_{t':t}$, in order to calculate the path-dependent TD error $\tilde{\Delta}_t^\pi$. Unlike the value-based state-dependent eligibility traces \citep{sutton1998,van2020expected}, partial return traces are time-dependent. This implies that in an episode of $T$ steps, value-based backward-view algorithms require memory of size $\min(|\mathcal{X}||\mathcal{A}|,\mathcal{O}(T))$ while the distributional algorithms requires $\mathcal{O}(T)$.

In addition to the added memory complexity, the incremental updates of distributional algorithms are also much more complicated due to the path-dependent TD errors. We remark that the path-independent nature of value-based TD errors greatly simplify the value-based backward-view algorithm. For a more detailed discussion, see Appendix~\ref{appendix:backwardview}.

\subsection{Importance sampling for multi-step distributional RL}

In our initial derivation, we arrived at  $\mathcal{R}^{\pi,\mu}$ through the application of importance sampling (IS) in a different way from the value-based setting. We now highlight the subtle differences and caveats. 

For a fixed $n\geq 1$, consider the trace coefficient $c_t=\rho_t\mathbb{I}[t<n]$. The back-up target of the resulting Retrace operator reduces to
$
     \mathbb{E}_{\mu}\left[ \rho_{1:n-1} \cdot \left( \textrm{b}_{G_{0:n-1},\gamma^{n}}\right)_{\#}\eta\left(X_{n},A_{n}^\pi\right)\right]$.
This can be seen as applying IS to the $n$-step prediction $\left( \textrm{b}_{G_{0:n-1},\gamma^{n}}\right)_{\#}\eta\left(X_{n},A_{n}^\pi\right)$. As a caveat, note that an appealing alternative approach is to apply IS to $G_{0:n-1}$, producing the estimate $\left( \textrm{b}_{\rho_{1:n-1}G_{0:n-1},\gamma^{n}}\right)_{\#}\eta\left(X_{n},A_{n}^\pi\right)$. This latter estimate does not properly correct for the off-policy discrepancy between $\pi$ and $\mu$. To see why, note that applying the IS ratio to $G_{0:n-1}$, instead of to the probability of its occurrence, is an artifact of value-based RL because the expected return is linear in $G_{0:t}$ \citep{precup2001off}. In general for distributional RL, one should importance weigh the measures instead of sum of rewards.

\section{Approximate multi-step distributional reinforcement learning algorithm}
\label{sec:parametric}
We now discuss how the distributional Retrace operator combines with parametric distributions, using the construction of the novel Quantile Regression-Retrace algorithm as a practical example. We focus on the quantile representation because it entails the best empirical performance of large-scale distributional RL \citep{dabney2018distributional,dabney2018implicit}. Speficially, we present an application of quantile regression with signed measures, which is interesting in its own right. Below, we start with a brief background on quantile representations \citep{dabney2018distributional}, followed by details on the proposed algorithm.

Consider parametric distributions of the form: $
\frac{1}{m}\sum_{i=1}^m \delta_{z_i}$ for a fixed $m\geq 1$, where $(z_i)_{i=1}^m\in\mathbb{R}$ are a set of parameters indicating the support of the distribution. Let $\probspace_\mathcal{Q}(\mathbb{R})$ denote the family of distribution $
\probspace_\mathcal{Q}(\mathbb{R}) \coloneqq \{\frac{1}{m}\sum_{i=1}^m\delta_{z_i}|z_i\in\mathbb{R}\}$.  We define the projection $\Pi_\mathcal{Q}: \probspace_\infty(\mathbb{R})\rightarrow\probspace_\mathcal{Q}(\mathbb{R})$ as $
   \Pi_\mathcal{Q}\eta=\arg\min_{\nu\in\probspace_\mathcal{Q}(\mathbb{R})} W_1(\eta,\nu)$, which projects any distribution onto the space of representable distributions in the parametric class under the $W_1$ distance. With an abuse of notation, we also let $\Pi_\mathcal{Q}$ denote the component-wise projection when applied to vectors. See \citep{dabney2018distributional,bdr2022} for more details.

\paragraph{Gradient-based learning via quantile regression.} We can use quantile regression \cite{koenker1978regression,koenker2005quantile,koenker2017handbook} to calculate the projection $\Pi_\mathcal{Q}\eta$. Let $F_\eta(z),z\in\mathbb{R}$ denote the CDF of a given distribution $\eta$. Let $F_\eta^{-1}$ be the generalized CDF inverse, we define the $\tau$-th quantile as $F_\eta^{-1}(\tau)$ for $\tau\in[0,1]$. The projection $\Pi_\mathcal{Q}$ is equivalent to computing $z_i=F_\eta^{-1}(\tau_i)$ for $\tau\in (\frac{2i-1}{2m})_{i=1}^m$ \citep{dabney2018distributional}. To learn the $\tau$-th quantile for any $\tau\in[0,1]$, it suffices to solve the quantile regression problem whose optimal solution is $F_\eta^{-1}(\tau)$:
$
    \min_\theta L_\theta^\tau(\eta)\coloneqq \mathbb{E}_{Z\sim \eta}\left[f_\tau(Z-\theta)\right]$ where $f_\tau(u) = u(\tau-\mathbb{I}[u<0])$.
In practice, we carry out the gradient update $\theta\leftarrow\theta-\alpha\nabla_\theta L_\theta^\tau(\eta)$ to find the optimal solution and learn the quantile $\theta\approx F_\eta^{-1}(\tau)$. 

\subsection{Distributional Retrace with quantile representations}
Given an input distribution vector $\eta$, we use the distributional Retrace operator to construct the back-up target $\mathcal{R}^{\pi,\mu}\eta$. Then, we  use the quantile projection to map the back-up target onto the space of representations  $\Pi_\mathcal{Q} \mathcal{R}^{\pi,\mu}\eta$. Overall, we are interested in the recursive update: start with any $\eta_0\in\probspace_\mathcal{Q}(\mathbb{R})^{\mathcal{X}\times{\mathcal{A}}}$, consider the sequence of distributions generated via  $\eta_{k+1}=\Pi_\mathcal{Q} \mathcal{R}^{\pi,\mu} \eta_k$. A direct application of Proposition~\ref{prop:retracecontractive} allows us to characterize the convergence of the sequence, following the approach of \cite{bdr2022}.

\begin{restatable}{theorem}{theoremquantilecontraction}\label{theorem:quantilecontraction} (\textbf{Convergence of quantile distributions})
The projected distributional Retrace operator $\Pi_\mathcal{Q} \mathcal{R}^{\pi,\mu}$ is  $\beta$-contractive under $\bar{W}_\infty$ distance in $\probspace_\mathcal{Q}(\mathbb{R})$. As a result, the above $\eta_k$ converges to a limiting distribution $\eta_\mathcal{R}^\pi$ in $\bar{W}_\infty$, such that
$
     \bar{W}_\infty(\eta_k,\eta_\mathcal{R}^\pi)\leq(\beta)^k \bar{W}_\infty(\eta_0,\eta_\mathcal{R}^\pi)
$. Further, the quality of the fixed point is characterized as $\bar{W}_\infty(\eta_\mathcal{R}^\pi,\eta^\pi)\leq (1-\beta)^{-1}\bar{W}_\infty(\Pi_\mathcal{Q}\eta^\pi,\eta^\pi)$.
\end{restatable}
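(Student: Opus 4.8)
The plan is to establish the three claims in sequence, leveraging that the heavy lifting has already been done by Proposition~\ref{prop:retracecontractive}. The core observation is that the composite operator $\Pi_\mathcal{Q}\mathcal{R}^{\pi,\mu}$ inherits the $\beta$-contraction from $\mathcal{R}^{\pi,\mu}$ provided the quantile projection $\Pi_\mathcal{Q}$ is a non-expansion under $\bar{W}_\infty$. I would therefore first isolate and prove this non-expansion property, then deduce the contraction of the composite, apply the Banach fixed-point theorem for convergence, and finally derive the quality bound by a triangle-inequality argument.

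For the non-expansion, I would work with the quantile-function characterization of the $\infty$-Wasserstein distance, namely $W_\infty(\eta,\nu)=\sup_{\tau\in(0,1)}|F_\eta^{-1}(\tau)-F_\nu^{-1}(\tau)|$. Since $\Pi_\mathcal{Q}\eta=\frac{1}{m}\sum_{i=1}^m\delta_{F_\eta^{-1}(\tau_i)}$ with $\tau_i=\frac{2i-1}{2m}$, the projected distributions have quantile functions obtained by sampling the original quantile function at the fixed grid points $(\tau_i)_{i=1}^m$. Hence $W_\infty(\Pi_\mathcal{Q}\eta,\Pi_\mathcal{Q}\nu)=\max_{i}|F_\eta^{-1}(\tau_i)-F_\nu^{-1}(\tau_i)|\leq\sup_{\tau}|F_\eta^{-1}(\tau)-F_\nu^{-1}(\tau)|=W_\infty(\eta,\nu)$, and taking the supremum over $(x,a)$ yields non-expansion in $\bar{W}_\infty$. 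Composing with the $\beta$-contraction from Proposition~\ref{prop:retracecontractive} (applied with $p=\infty$) gives $\bar{W}_\infty(\Pi_\mathcal{Q}\mathcal{R}^{\pi,\mu}\eta,\Pi_\mathcal{Q}\mathcal{R}^{\pi,\mu}\nu)\leq\beta\,\bar{W}_\infty(\eta,\nu)$, proving the first claim. I expect this step to be the main obstacle: the subtle point is that $\Pi_\mathcal{Q}$ is defined as a $W_1$-minimizing projection, yet one must verify that it coincides with grid-sampling of the quantile function and that this sampling is non-expansive specifically in the $\sup$-over-$\tau$ metric $W_\infty$.

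The convergence claim then follows from the Banach fixed-point theorem. I would note that $(\probspace_\mathcal{Q}(\mathbb{R})^{\mathcal{X}\times\mathcal{A}},\bar{W}_\infty)$ is a complete metric space, with $\probspace_\mathcal{Q}(\mathbb{R})$ a closed subset of $(\probspace_\infty(\mathbb{R}),W_\infty)$, so the $\beta$-contraction admits a unique fixed point $\eta_\mathcal{R}^\pi$, and iterating the contraction estimate gives $\bar{W}_\infty(\eta_k,\eta_\mathcal{R}^\pi)\leq\beta^k\bar{W}_\infty(\eta_0,\eta_\mathcal{R}^\pi)$.

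For the quality bound, I would use that $\eta^\pi$ is the fixed point of $\mathcal{R}^{\pi,\mu}$ (Proposition~\ref{prop:retracefixedpoint}), so that $\Pi_\mathcal{Q}\mathcal{R}^{\pi,\mu}\eta^\pi=\Pi_\mathcal{Q}\eta^\pi$. Starting from $\eta_\mathcal{R}^\pi=\Pi_\mathcal{Q}\mathcal{R}^{\pi,\mu}\eta_\mathcal{R}^\pi$ and inserting $\Pi_\mathcal{Q}\mathcal{R}^{\pi,\mu}\eta^\pi$ via the triangle inequality, the contraction on the first summand yields $\bar{W}_\infty(\eta_\mathcal{R}^\pi,\eta^\pi)\leq\beta\,\bar{W}_\infty(\eta_\mathcal{R}^\pi,\eta^\pi)+\bar{W}_\infty(\Pi_\mathcal{Q}\eta^\pi,\eta^\pi)$, and rearranging gives $\bar{W}_\infty(\eta_\mathcal{R}^\pi,\eta^\pi)\leq(1-\beta)^{-1}\bar{W}_\infty(\Pi_\mathcal{Q}\eta^\pi,\eta^\pi)$, completing the proof.
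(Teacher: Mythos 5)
Your proposal follows essentially the same route as the paper's own proof: non-expansion of $\Pi_\mathcal{Q}$ under $\bar{W}_\infty$, composition with the $\beta$-contraction of $\mathcal{R}^{\pi,\mu}$ from Proposition~\ref{prop:retracecontractive} taken at $p=\infty$, a Banach fixed-point argument on the closed space of quantile representations, and the $(1-\beta)^{-1}$ fixed-point quality bound. The only difference is that you prove inline the two ingredients the paper imports by citation---the non-expansion of the quantile projection, cited from \citep{dabney2018distributional}, and the quality bound, cited as Proposition 5.28 of \citep{bdr2022}, which is exactly the insert-and-rearrange triangle-inequality argument you give.
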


Thanks to the faster contraction rate $\beta\leq\gamma$,
the advantage of the projected operator $\Pi_\mathcal{Q} \mathcal{R}^{\pi,\mu}$ is two-fold: (1) the operator often contracts faster to the limiting distribution $\eta_\mathcal{R}^\pi$ than the one-step operator $\mathcal{T}^\pi$ contracts to its own limiting distribution $\eta_{\mathcal{T}^\pi}$ \citep{dabney2018distributional}; (2) the limiting distribution $\eta_\mathcal{R}^\pi$ also enjoys a better approximation bound to the target distribution. We verify such results in Section~\ref{sec:experiments}.

\subsection{Quantile Regression-Retrace: distributional Retrace with quantile regression}
Below, we use $z_i(x,a)$ to represent the $i$-th quantile of the distribution at $(x,a)$. Overall, we have a tabular quantile representation $\eta_z(x,a)=\frac{1}{m}\sum_{i=1}^m \delta_{z_i(x,a)},\forall (x,a)\in\mathcal{X}\times\mathcal{A}$, where we use the notation $\eta_z$ to stress the distribution's dependency on parameter $z_i(x,a)$. For any given bootstrapping distribution vector $\eta\in\probspace_\infty(\mathbb{R})^{\mathcal{X}\times\mathcal{A}}$, in order to approximate the projected back-up target $ \Pi_\mathcal{Q} \mathcal{R}^{\pi,\mu}\eta$ with the parameterized quantile distribution $\eta_z$, we solve the set of quantile regression problems for all $1\leq i\leq m, (x,a)\in\mathcal{X}\times\mathcal{A}$,
\begin{align*}
  \min_{z_i(x,a)}  L_{z_i(x,a)}^{\tau_i}\left(\mathcal{R}^{\pi,\mu}\eta(x,a)\right),\ \text{where}\ \tau_i=(2i-1)/2m \, .
\end{align*}
For any fixed $(x,a,i)$, to solve the quantile regression problem, we apply gradient descent on $z_i(x,a)$. In practice, with one sampled trajectory $(X_s,A_s,R_s)_{s=0}^\infty\sim \mu$, the aim is to construct an unbiased stochastic gradient estimate of the QR loss $ L_{z_i(x,a)}^{\tau_i}\left(\mathcal{R}^{\pi,\mu}\eta(x,a)\right)$. Below, let $\textrm{b}_t=\textrm{b}_{G_{0:t-1},\gamma^{t}}$ for simplicity. We start with a stochastic estimate $\hat{L}_{z_i(x,a)}^{\tau_i}(\mathcal{R}^{\pi,\mu}\eta(x,a))$ for the QR loss,
 \begin{align*}
   L_{z_i(x,a)}^{\tau_i} \left(\eta(x,a)\right) + \sum_{t=0}^\infty c_{1:t} \left( L_{z_i(x,a)}^{\tau_i}\left(\left(\textrm{b}_{t+1}\right)_{\#}\eta\left(X_{t+1},A_{t+1}^\pi\right)\right) - L_{z_i(x,a)}^{\tau_i}\left(\left(\textrm{b}_t\right)_{\#}\eta(X_t,A_t)\right)\right).
 \end{align*}
 Since $\hat{L}_{z_i(x,a)}^{\tau_i}(\mathcal{R}^{\pi,\mu}\eta(x,a))$ is differentiable with $z_i(x,a)$, we use 
$\nabla_{z_i(x,a)}\hat{L}_{z_i(x,a)}^{\tau_i}(\mathcal{R}^{\pi,\mu}\eta(x,a))$ as the stochastic gradient estimate. This gradient estimate is unbiased under mild conditions.
\begin{restatable}{lemma}{lemmaunbiasedqr}\label{lemma:unbiasedqr} (\textbf{Unbiased stochastic QR loss gradient estimate}) Assume that the trajectory terminates within $H<\infty$ steps almost surely, then we have $\mathbb{E}_\mu[\hat{L}_{z_i(x,a)}^{\tau_i}\left(\mathcal{R}^{\pi,\mu}\eta(x,a)\right)]=L_{z_i(x,a)}^{\tau_i}\left(\mathcal{R}^{\pi,\mu}\eta(x,a)\right)$ and $\mathbb{E}_\mu[\nabla_{z_i(x,a)}\hat{L}_{z_i(x,a)}^{\tau_i}\left(\mathcal{R}^{\pi,\mu}\eta(x,a)\right)]=\nabla_{z_i(x,a)}L_{z_i(x,a)}^{\tau_i}\left(\mathcal{R}^{\pi,\mu}\eta(x,a)\right)$.
\end{restatable}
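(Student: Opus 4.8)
The plan is to exploit the fact that, for a fixed threshold $\theta$, the quantile-regression loss $\nu\mapsto\int f_\tau(z-\theta)\,d\nu(z)$ is a \emph{linear} functional of the measure $\nu$, so it extends unambiguously to signed measures. This is precisely what lets us sidestep the ill-definedness of the pinball loss on signed measures: although each path-dependent TD error $\tilde{\Delta}_{0:t}^\pi$ is a signed measure, we never evaluate the loss on it directly. Instead, by linearity, $L_\theta^\tau(\tilde{\Delta}_{0:t}^\pi)=L_\theta^\tau\big((\textrm{b}_{t+1})_\#\eta(X_{t+1},A_{t+1}^\pi)\big)-L_\theta^\tau\big((\textrm{b}_t)_\#\eta(X_t,A_t)\big)$, where each argument on the right is a genuine probability distribution. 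Writing $\theta=z_i(x,a)$ and $\tau=\tau_i$, this rewrites the stochastic estimate as $\hat{L}_\theta^\tau=L_\theta^\tau(\eta(x,a))+\sum_{t=0}^\infty c_{1:t}\,L_\theta^\tau(\tilde{\Delta}_{0:t}^\pi)$, which is the form whose expectation we will compute.

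First I would establish the integrability that justifies the interchanges. The finite-horizon assumption makes the sum over $t$ almost surely finite, with at most $H$ nonzero terms; the bounded-support assumption $\eta\in\probspace_\infty(\mathbb{R})^{\mathcal{X}\times\mathcal{A}}$ together with the finite reward set $\mathscr{R}$ confines the supports of all pushforwards $(\textrm{b}_t)_\#\eta(\cdot)$ for $t\leq H$ to one compact interval, on which $f_\tau$ is bounded. Hence each $|L_\theta^\tau(\tilde{\Delta}_{0:t}^\pi)|$ is bounded by a constant $C$ depending only on $H$, this interval, and $\mathscr{R}$. Combined with $c_t\leq\rho_t$ and the identity $\mathbb{E}_\mu[\rho_{1:t}]=1$ (from the tower property and $\mathbb{E}_\mu[\rho_t\mid X_t]=1$), this gives $\mathbb{E}_\mu\big[\sum_{t=0}^\infty c_{1:t}\,|L_\theta^\tau(\tilde{\Delta}_{0:t}^\pi)|\big]\leq C\sum_{t=0}^H\mathbb{E}_\mu[\rho_{1:t}]=C(H+1)<\infty$, the required dominating bound.

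With integrability in hand, I would apply Fubini's theorem to exchange $\mathbb{E}_\mu$ with both the almost-surely finite sum and the integral defining $L_\theta^\tau$, obtaining $\mathbb{E}_\mu[\hat{L}_\theta^\tau]=L_\theta^\tau(\eta(x,a))+L_\theta^\tau\big(\mathbb{E}_\mu[\sum_{t=0}^\infty c_{1:t}\tilde{\Delta}_{0:t}^\pi]\big)=L_\theta^\tau(\mathcal{R}^{\pi,\mu}\eta(x,a))$, where the last step is once more linearity of $L_\theta^\tau$ applied to the definition of $\mathcal{R}^{\pi,\mu}\eta(x,a)$; this proves unbiasedness of the loss estimate. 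For the gradient claim, I would note that $\nabla_\theta f_\tau(z-\theta)=\mathbb{I}[z<\theta]-\tau$ is bounded and that $\theta\mapsto L_\theta^\tau(\nu)$ is differentiable at every $\theta$ that is not an atom of $\nu$; differentiation under the expectation is then justified by the bounded-derivative (dominated-convergence) form of Leibniz's rule, and the same linearity-plus-Fubini argument transfers verbatim, yielding $\mathbb{E}_\mu[\nabla_\theta\hat{L}_\theta^\tau]=\nabla_\theta L_\theta^\tau(\mathcal{R}^{\pi,\mu}\eta(x,a))$.

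The main obstacle I anticipate is justifying the interchange of expectation, summation, and the integral defining $L_\theta^\tau$ — that is, producing an integrable dominating function despite the trace products $c_{1:t}$. The finite-horizon assumption is exactly what tames this, converting the infinite series into a uniformly bounded finite sum; without it the weights $\rho_{1:t}$ could render the series only conditionally summable and the interchange invalid. A secondary, more technical point is the non-differentiability of $f_\tau$ at the origin, which I would handle by observing that the event $\{z=\theta\}$ carries no mass for the relevant distributions (or, failing that, by passing to subgradients), so the derivative exists and the interchange goes through.
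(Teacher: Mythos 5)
Your proposal is correct and follows essentially the same route as the paper's proof: extend the QR loss linearly to signed measures, use that linearity to exchange $\mathbb{E}_\mu$ with the loss, invoke the finite-horizon assumption for integrability, and justify the gradient interchange via dominated convergence with the bounded pinball-loss derivative. Your two refinements --- dominating the trace products via $\mathbb{E}_\mu[\rho_{1:t}]=1$ rather than the paper's pointwise bound $c_{1:t}\leq\rho^H$, and explicitly handling the kink of $f_\tau$ at the origin --- are minor strengthenings of the same argument rather than a different approach.
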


The above stochastic estimate bypasses the challenge that the QR loss is only defined against distributions, whereas sampled back-up targets $\hat{R}^{\pi,\mu}\eta(x,a)=\eta(x,a)+\sum_{t=0}^\infty c_{1:t}\tilde{\Delta}_{0:t}^\pi$ are signed measures in general. In Quantile Regression-Retrace, we use $\eta_z$ itself as the bootstrapping distribution, such that the algorithm approximates the fixed point iteration $\eta_z\leftarrow \Pi_\mathcal{Q}\mathcal{R}^{\pi,\mu}\eta_z$. Concretely, we carry out the following sample-based update
\begin{align*}
    z_i(x,a)\leftarrow z_i(x,a) - \alpha \nabla_{z_i(x,a)} \hat{L}_{z_i(x,a)}^{\tau_i}\left(\mathcal{R}^{\pi,\mu}\eta_z(x,a)\right),\ \text{for}\ \forall\ 1\leq i\leq m, (x,a)\in\mathcal{X}\times\mathcal{A}.
\end{align*}

\subsection{Deep reinforcement learning: QR-DQN-Retrace}

We introduce a deep RL implementation of the Quantile Regression-Retrace: QR-DQN-Retrace, where the parametric representation is combined with function approximations \citep{bellemare2017cramer,dabney2018distributional,dabney2018implicit}. The base agent QR-DQN \citep{bellemare2017cramer} parameterizes the quantile locations $z_i(x,a;w)$ with the output of a neural network with weights $w$. Let $\eta(x,a;w)=\frac{1}{m}\sum_{i=1}^m \delta_{z_i(x,a;w)}$ denote the parameterized distribution. QR-DQN-Retrace  updates its parameters by stochastic gradient descent on the estimated QR loss, averaged across all $m$ quantile levels $
    w\leftarrow w - \alpha \frac{1}{m}\sum_{i=1}^m \nabla_w \hat{L}_{z_i(x,a;w)}^{\tau_i}\left(\mathcal{R}^{\pi,\mu}\eta(x,a;w)\right)$. In practice, the update is further averaged over state-action pairs sampled from a replay buffer.

\begin{figure}[t]
    \centering
    \subfigure[Off-policyness $\epsilon$ ]{\includegraphics[keepaspectratio,width=.24\textwidth]{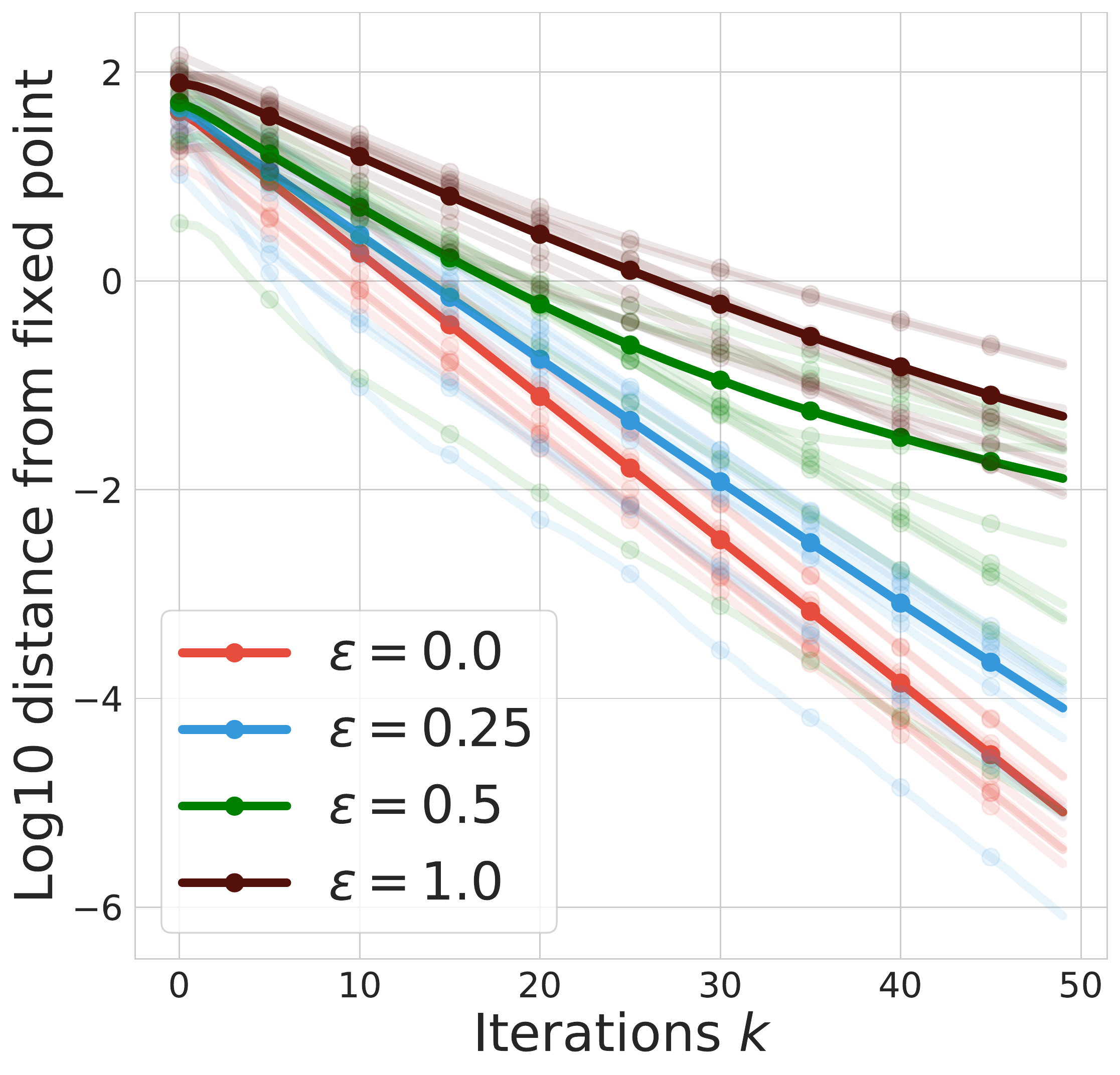}}
    \subfigure[Trace coefficient $c_t$]{\includegraphics[keepaspectratio,width=.24\textwidth]{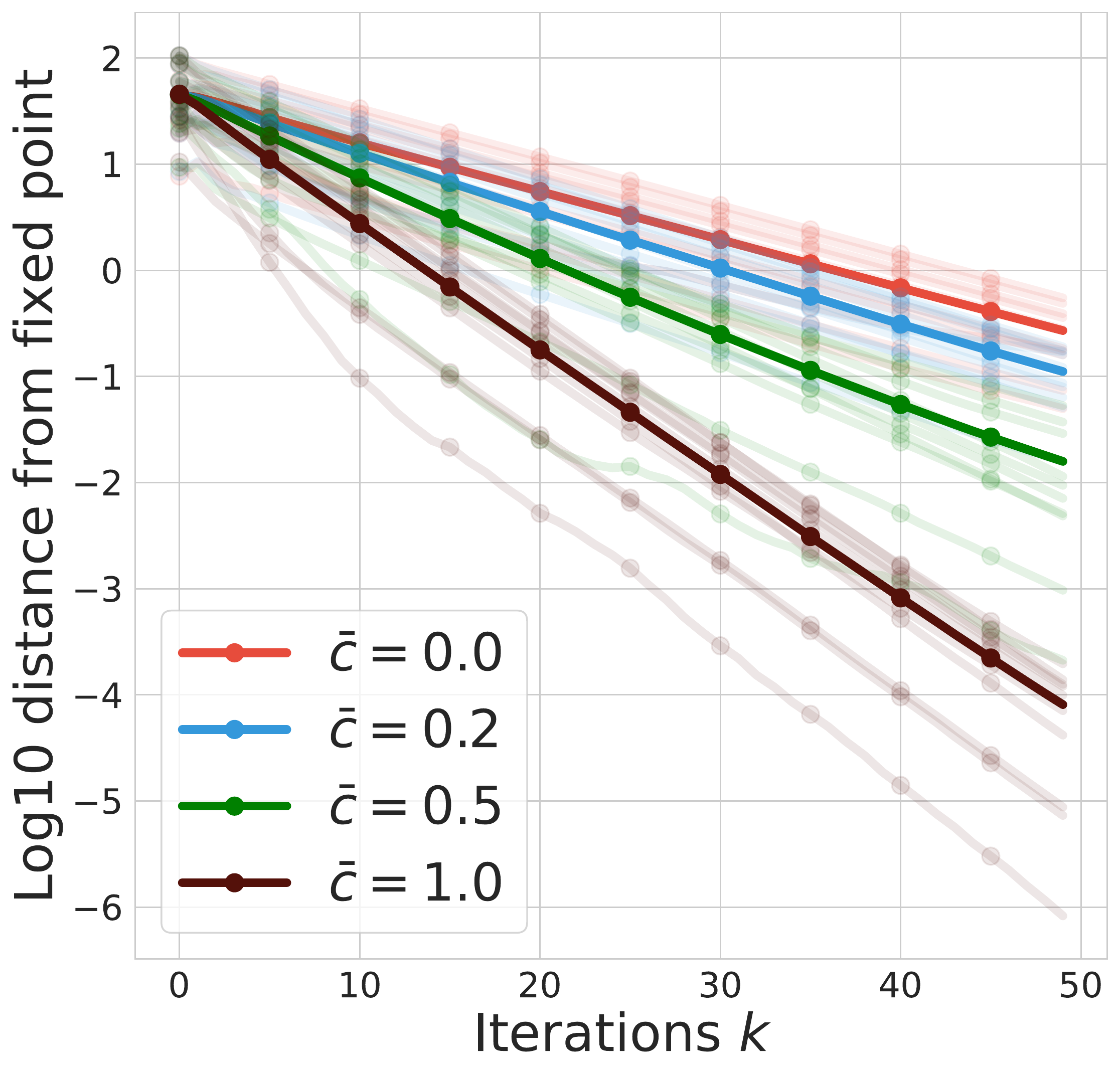}}
    \subfigure[Fixed point quality]{\includegraphics[keepaspectratio,width=.24\textwidth]{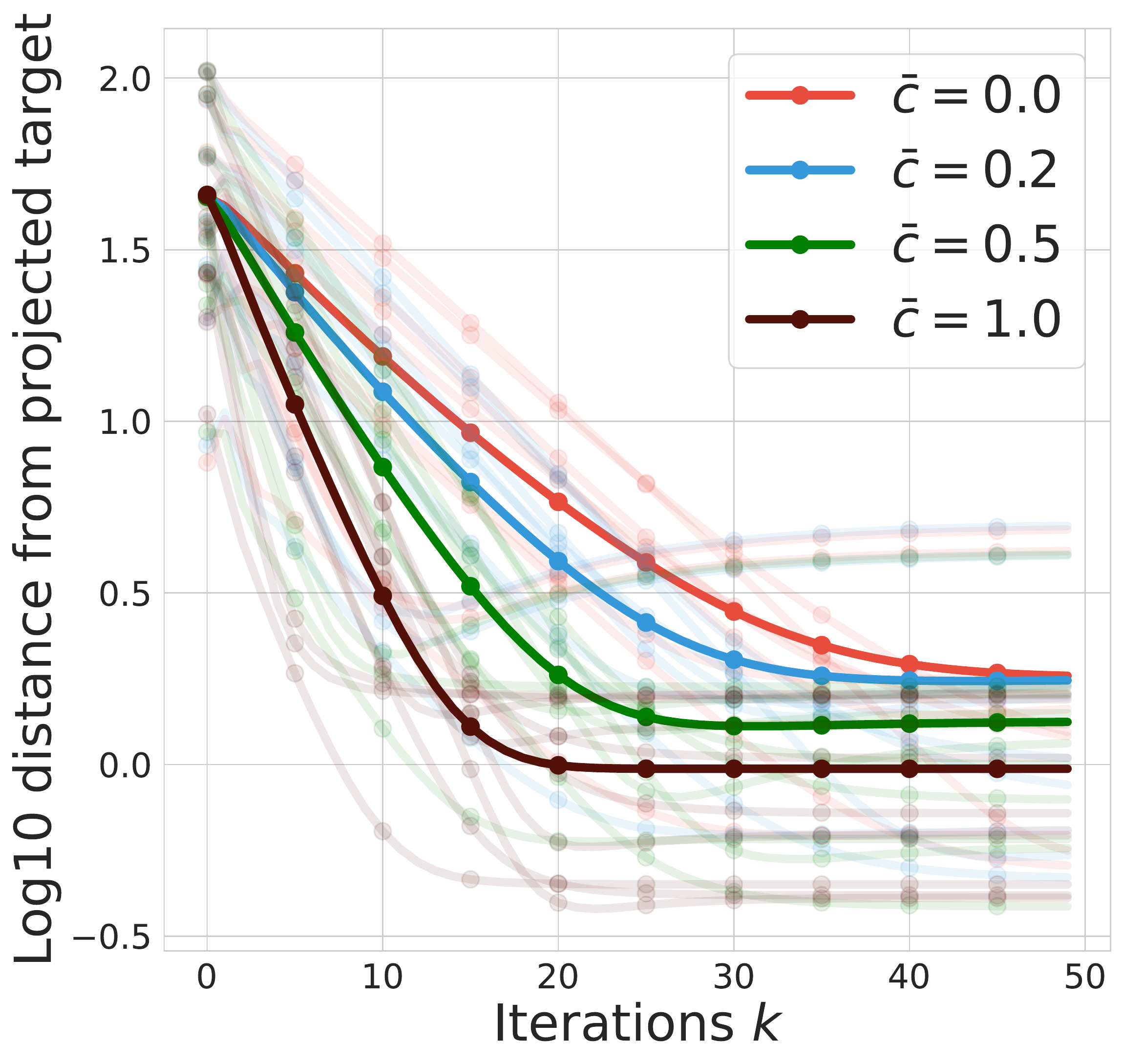}}
    \subfigure[Uncorrected ]{\includegraphics[keepaspectratio,width=.24\textwidth]{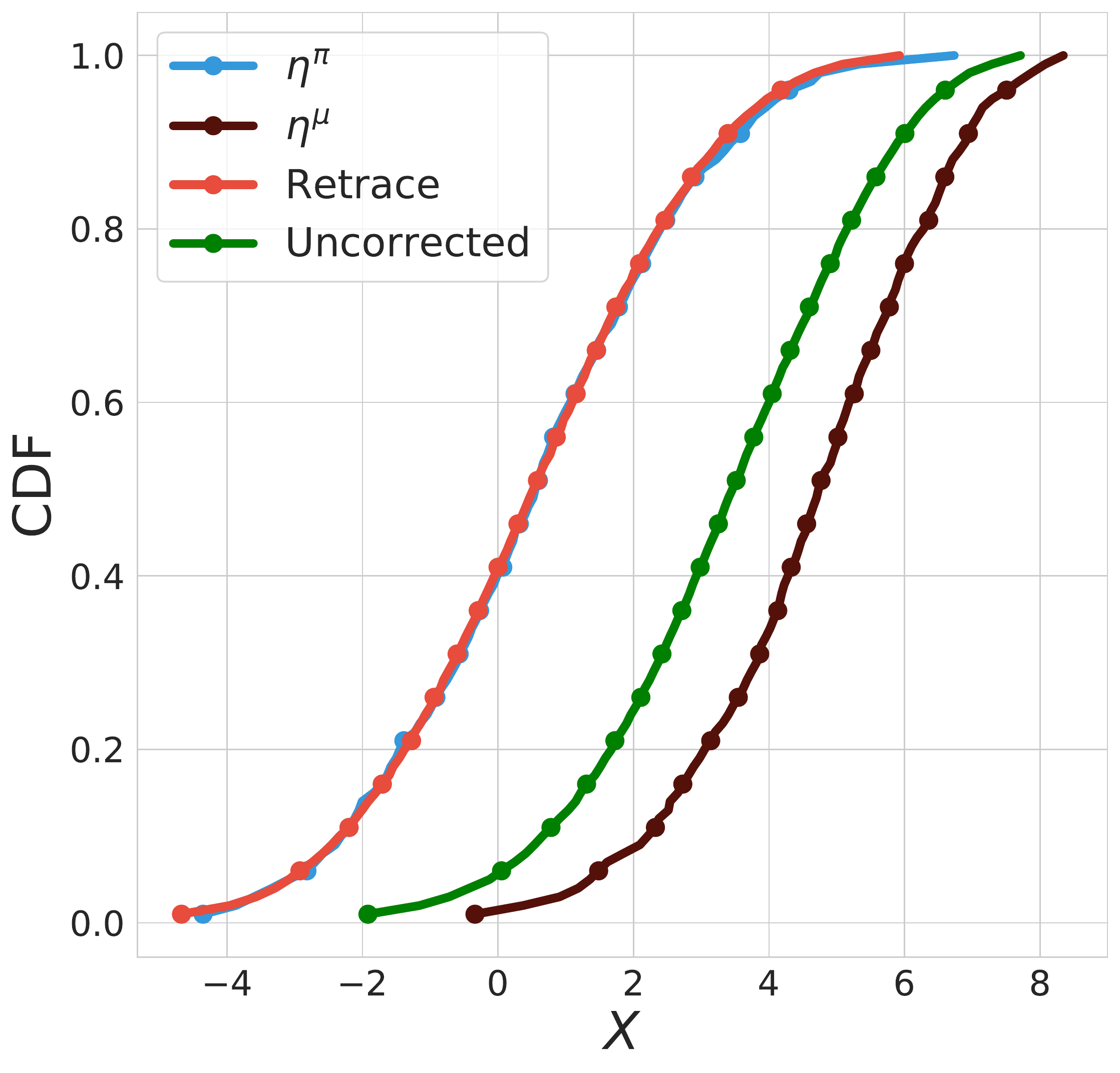}}
    \caption{Tabular experiments to illustrate properties of the distributional Retrace operator: we show average results across $10$ randomly sampled MDPs. (a) Contraction rate vs. off-policyness; (b) Contraction rate vs. trace coefficient $c_t=\min(\rho_t,\bar{c})$; (c) Fixed point quality vs. trace coefficient $c_t$; (d) The uncorrected operator introduces bias to the fixed point while Retrace is unbiased.}
    \label{fig:tabular}
\end{figure}

\section{Discussions}

\paragraph{Categorical representations.} The categorical representation is another commonly used class of parameterized distributions in prior literature \citep{bellemare2017cramer,rowland2018analysis,rowland2019statistics,bdr2022}. We obtain contractive guarantees for the categorical representation similar to Theorem~\ref{theorem:quantilecontraction}. As with QR, this leads both to improved fixed-point approximations and faster convergence. Further, this leads to a deep RL algorithm C51-Retrace.
The actor-critic Reactor agent \citep{gruslys2017reactor} uses C51-Retrace as a critic training algorithm, although without explicit consideration or analysis of the associated distributional operator. 
See Appendix~\ref{appendix:categorical} for details. We empirically evaluate the stand-alone improvements of C51-Retrace over C51 in Section~\ref{sec:experiments}.

\paragraph{Uncorrected methods.} The uncorrected methods do not correct for the off-policyness and hence obtain a biased fixed point \citep{hessel2018rainbow,kapturowski2018recurrent,kozuno2021revisiting}. The Rainbow agent \citep{hessel2018rainbow} combined $n$-step uncorrected learning with C51, effectively implementing a distributional operator whose fixed point differs from $\eta^\pi$.

\paragraph{On-policy distributional TD($\lambda$).} Nam et al. \cite{nam2021gmac} propose SR($\lambda$), a distributional version of on-policy TD$(\lambda$) \citep{sutton1988learning}. In operator form, this can be viewed as a special case of Equation~\eqref{eq:dist-retrace-operator} with $\mu=\pi$, $c_t = \lambda$; \cite{nam2021gmac} also introduce a sample-replacement technique for more efficient implementation.

\section{Experiments}
\label{sec:experiments}

We carry out a number of experiments to validate the theoretical insights and empirical improvements.

\subsection{Illustration of distributional Retrace properties on tabular MDPs}

We verify a few important properties of the distributional Retrace operator on a tabular MDP. The results corroborate the theoretical results from previous sections. Throughout, we use quantile representations with $m=100$ atoms; we obtain similar results for categorical representations. See Appendix~\ref{appendix:experiments} for details on the experiment setup. Let $\eta_0$ be the initial distribution, we carry out dynamic programming with $\mathcal{R}^{\pi,\mu}$ and denote $\eta_k=(\mathcal{R}^{\pi,\mu})^k\eta_0$ as the $k$\textsuperscript{th} distribution iterate. 
\paragraph{Impact of off-policyness.} We control the level of off-policyness by setting the behavior policy $\mu$ to be a uniform policy and the target policy to $\pi=(1-\epsilon)\mu+\epsilon \pi_d$ where $\pi_d$ is a fixed deterministic policy. Moving from $\epsilon=0$ to $\epsilon=1$, we transition from on-policy to very off-policy. We use $L_p(\eta_k,\eta_\mathcal{R}^\pi)$ to measure the contraction rate to the fixed point. Figure~\ref{fig:tabular} shows that as the behavior  becomes more off-policy, the contraction slows down, degrading the efficiency of multi-step learning.

\paragraph{Impact of trace coefficient $c_t$.} Throughout, we set $c_t=\min(\rho_t,\bar{c})$ with $\bar{c}$ to control the effective trace length. With a fixed level of off-policyness $\epsilon=0.5$, Figure~\ref{fig:tabular}(b) shows that increasing $\bar{c}$ speeds up the contraction to the fixed point as predicted by Proposition~\ref{prop:retracecontractive}.

\paragraph{Quality of fixed point.} We next examine how the quality of the fixed point is impacted by $\bar{c}$, by measuring $L_p(\eta_k,\Pi_\mathcal{Q}\eta^\pi)$ as a proxy to $L_p(\eta_k,\eta^\pi)$. As $k$ increases the error flattens, at which point we take the converged value to be $L_p(\eta_\mathcal{R}^\pi,\Pi_\mathcal{Q}\eta^\pi)$ which measures the fixed point quality. Figure~\ref{fig:tabular}(c) shows when $\bar{c}$ increases, the fixed point quality improves, in line with the Theorem~\ref{theorem:quantilecontraction}. This phenomenon does not arise in \emph{tabular} non-distributional reinforcement learning, although related phenomena do occur when using function approximation techniques.

\paragraph{Bias of uncorrected methods.} Finally, we illustrate a critical difference between Retrace and uncorrected $n$-step methods \citep{hessel2018rainbow}: the bias to the fixed point. Figure~\ref{fig:tabular}(d) shows that uncorrected $n$-step arrives at a fixed point in between $\eta^\pi$ and $\eta^\mu$, showing an obvious bias from $\eta^\pi$. 

\subsection{Deep reinforcement learning}

 We consider the control setting where the target policy $\pi$ is the greedy policy with respect to the Q-function induced by the parameterized distribution. Because the training data is sampled from a replay, the behavior policy $\mu$ is $\epsilon$-greedy with respect to Q-functions induced by previous copies of the parameterized distribution. We evaluate the performance of deep RL agents on 57 Atari games \citep{bellemare2013arcade}. To ensure fair comparison across games, we compute the human normalized scores for each agent, and compare their evaluated mean and median scores across all 57 games during training.

\paragraph{Deep RL agents.} The multi-step agents adopt exactly the same hyperparameters as the baseline agents. The only difference is the back-up target. For completeness of results, we show the combination of Retrace with both C51 and QR-DQN. For QR-DQN, we use the Huber loss for quantile regression, which is a thresholded variant of the QR loss \citep{dabney2018distributional}. Throughout, we use $c_t=\lambda\min(\rho_t,\bar{c})$ with $\bar{c}=1$ as in \citep{munos2016safe}. See Appendix~\ref{appendix:experiments} for details. In practice, sampled trajectories are truncated at length $n$. We also adapt Retrace to the $n$-step case, see Appendix~\ref{appendix:nstep}.

\paragraph{Results.} Figure~\ref{fig:deeprl} compares one-step baseline, Retrace and uncorrected $n$-step \citep{hessel2018rainbow}. For C51, both multi-step methods clearly improve the median performance over the one-step baseline. Retrace slightly outperforms uncorrected $n$-step towards the end of learning. For QR-DQN, all multi-step algorithms achieve clear performance gains. Retrace significantly outperforms the uncorrected $n$-step with the mean performance, while obtaining similar results on the median performance. Overall, distributional Retrace achieves a clear improvement over the one-step baselines. The uncorrected $n$-step method typically takes off faster than Retrace but may to slightly worse performance. 

Finally, note that in the value-based setting, uncorrected methods are generally more high-performing than Retrace, potentially due to a favorable trade-off between contraction rate and fixed-point bias \citep{rowland2019adaptive}. Our results add to the benefits of off-policy corrections in the control setting.

\begin{figure}[t]
    \centering
    \subfigure[C51 ]{\includegraphics[keepaspectratio,width=.49\textwidth]{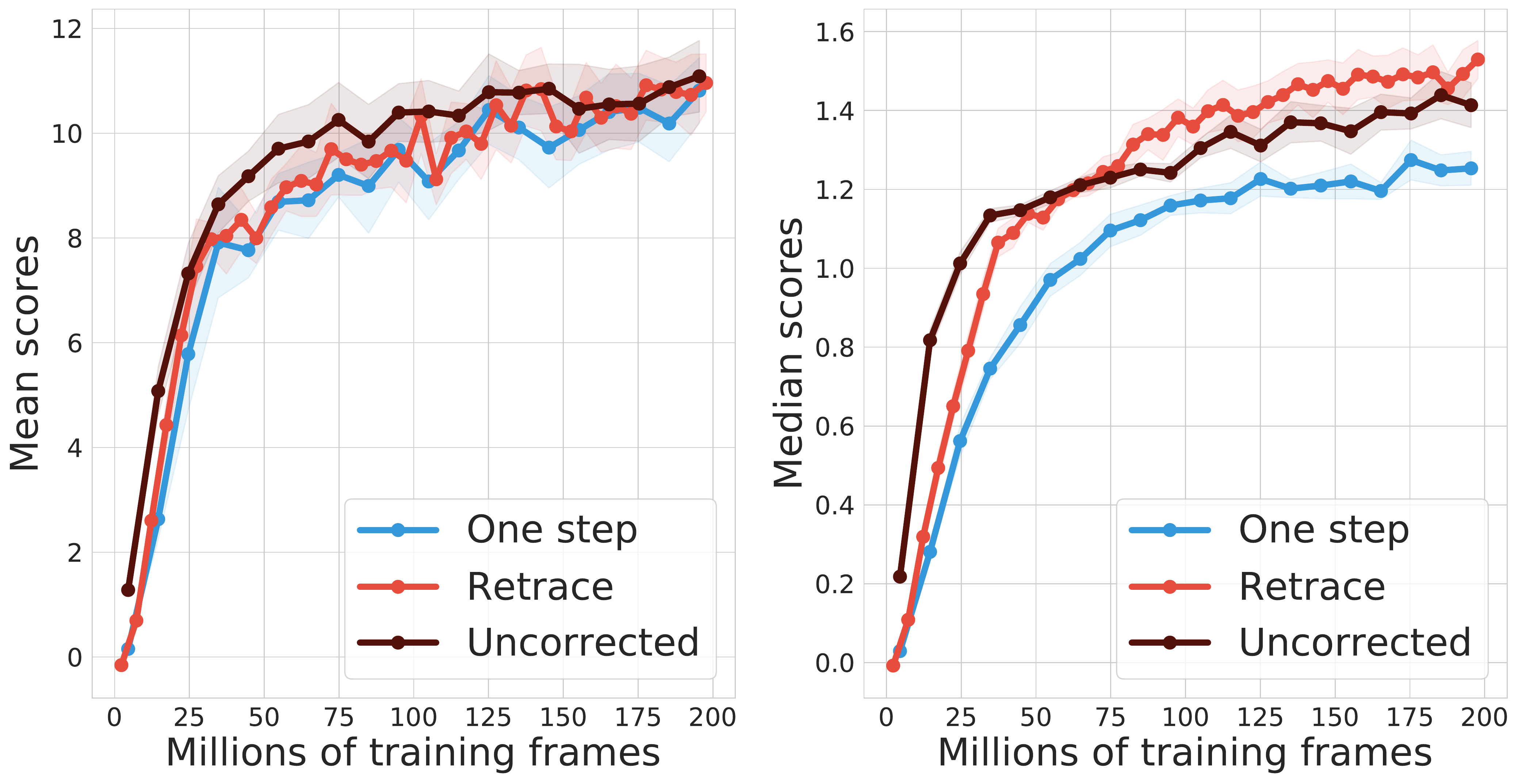}}
    \subfigure[QR-DQN ]{\includegraphics[keepaspectratio,width=.49\textwidth]{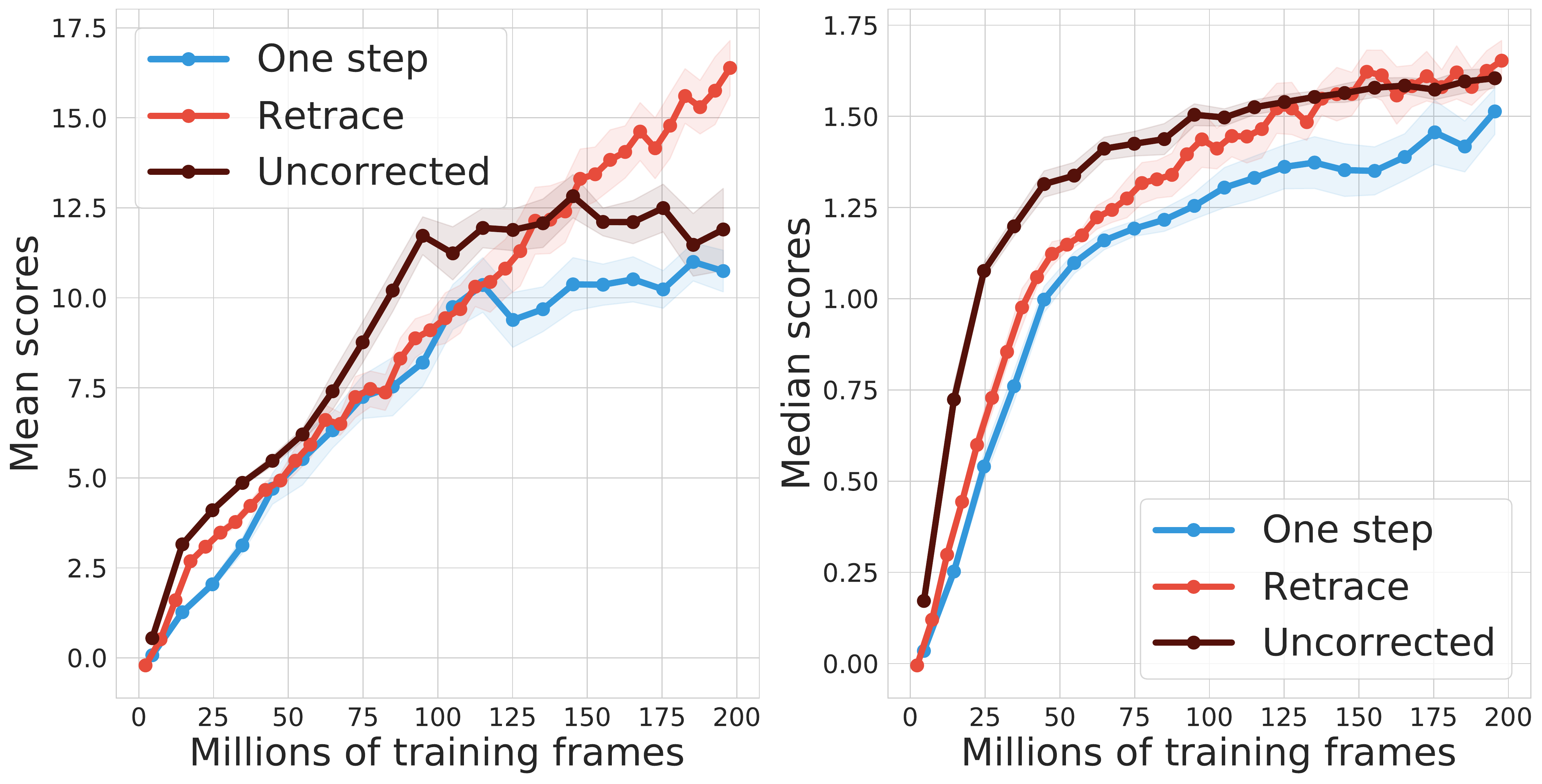}}
    \caption{Deep RL experiments on Atari-57 games for (a) C51 and (b) QR-DQN. We compare the one-step baseline agent against the multi-step variants (Retrace and uncorrected $n$-step). For all multi-step variants, we use $n=3$. For each agent, we calculate the mean and median performance across all games, and we plot the $\text{mean}\pm\text{standard\ error}$ across $3$ seeds. In almost all settings, multi-step variants provide clear advantage over the one-step baseline algorithm.}
    \label{fig:deeprl}
\end{figure}

\section{Conclusion}

We have identified a number of fundamental conceptual differences between value-based and distributional RL in multi-step settings. Central to such differences is the novel notion of path-dependent distributional TD error, which naturally arises from the multi-step distributional RL problem. Building on this understanding, we have developed the first principled multi-step off-policy distributional operator Retrace. We have also developed an approximate distributional RL algorithm, Quantile Regression-Retrace, which makes distributional Retrace highly competitive in both tabular and high-dimensional setups. This paper also opens up a several avenues for future research, such as the interaction between multi-step distributional RL and signed measures, and the convergence theory of stochastic approximations for multi-step distributional RL.

\bibliographystyle{unsrt}
\bibliography{your_bib_file}

\newpage
\appendix

\section*{\centering The Nature of Temporal Difference Errors in \\ Multi-step Distributional Reinforcement Learning:\\ Appendices}

\section{Extension of distributional Retrace to $n$-step truncated trajectories}
\label{appendix:nstep}
The $n$-step truncated version of distributional Retrace is defined as 
\begin{align*}
    \mathcal{R}_n^{\pi,\mu}\eta(x,a) = \eta(x,a) + \mathbb{E}_\mu\left[\sum_{t=0}^n c_{1:t}\tilde{\Delta}_{0:t}^\pi\right],
\end{align*}
which sums the path-dependent distributional TD errors up to time $n$. Compared to the original definition of distributional Retrace, this $n$-step operator is more practical to implement. This operator enjoys all the theoretical properties of the original distributional Retrace, with a slight difference on the contraction rate. Intuitively, the operator bootstraps with at most $n$ steps, which limits the effective horizon of the operator to be $\leq n$.
It is straightforward to show that the operator is $\beta_n$-contractive under $\bar{W}_p$ with $\beta_n\in(\beta,\gamma]$. As $n\rightarrow\infty$, $\beta_n\rightarrow\beta$.

\section{Distance metrics}
\label{appendix:metric}

We provide a brief review on the distance metrics used in this work. We refer readers to \citep{bdr2022} for a complete background.

\subsection{Wasserstein distance}

Let $\eta_1,\eta_2\in\probspace_\infty(\mathbb{R})$ be two distribution measures. Let $F_\eta$ be the CDF of $\eta$. The $p$-Wasserstein distance can be computed as
\begin{align*}
    W_p(\eta_1,\eta_2) \coloneqq \left(\int_{[0,1]} |F_{\eta_1}^{-1}(z) - F_{\eta_2}^{-1}(z)|^p dz\right)^{1/p}.
\end{align*}
Note that the above definition is equivalent to the more traditional definition based on optimal transport; indeed, $F_{\eta_i}^{-1}(z),z\sim \text{Uniform}(0,1),i\in\{1,2\}$ can be understood as the optimal coupling between the two distributions. The above definition is a proper distance metric if $p\geq 1$.

For any distribution vector $\eta_1,\eta_2\in\probspace_\infty(\mathbb{R})^{\mathcal{X}\times\mathcal{A}}$, we can define the supremum $p$-Wasserstein distance as 
\begin{align*}
    \bar{W}_p(\eta_1,\eta_2) \coloneqq \max_{x,a} W_p(\eta_1(x,a),\eta_2(x,a)).    
\end{align*}

\subsection{$L_p$ distance}

Let $\eta_1,\eta_2\in\probspace_\infty(\mathbb{R})$ be two distribution measures. Let $F_\eta$ be the CDF of $\eta$. The $L_p$ distance is defined as 
\begin{align*}
    L_p(\eta_1,\eta_2) \coloneqq \left(\int_\mathbb{R} \left|F_{\eta_1}(z) - F_{\eta_2}(z)\right|^p dz \right)^{1/p}.
\end{align*}
The above definition is a proper distance metric when $p\geq 1$.

For any distribution vector $\eta_1,\eta_2\in\probspace_\infty(\mathbb{R})^{\mathcal{X}\times\mathcal{A}}$ or signed measure vector
$\eta_1,\eta_2\in\mathcal{M}(\mathbb{R})^{\mathcal{X}\times\mathcal{A}}$, we can define the supremum Cram\'er-$p$ distance as 
\begin{align*}
    \bar{L}_p(\eta_1,\eta_2) \coloneqq \max_{x,a} L_p(\eta_1(x,a),\eta_2(x,a)).    
\end{align*}

\section{Numerically non-convergent behavior of alternative multi-step operators}
\label{appendix:failure}

We consider another alternative definition of path-independent alternative to the path-dependent TD error $\gamma^t \Delta_t^\pi$. The primary motivation for such a path-dependent TD error is that the discounted value-based TD error takes the form $\tilde{\delta}_t^\pi=\gamma^t \delta_t^\pi$. The resulting multi-step operator is
\begin{align*}
    \tilde{\mathcal{R}}^{\pi,\mu}\eta(x,a) = \eta(x,a) +\mathbb{E}_\mu\left[\sum_{t=0}^\infty c_{1:t}\gamma^t \Delta_t^\pi\right].
\end{align*}

With the same toy example as in the paper: an one-state one-action MDP with a deterministic reward $R_t=1$ and discount factor $\gamma=0.5$. The target distribution $\eta^\pi$ is a Dirac distribution centering at $2$. Let $\eta_k=(\mathcal{R})^k\eta_0$ be the $k$-th distribution iterate by applying the operator $\mathcal{R}\in\{\mathcal{R}^{\pi,\mu},\tilde{\mathcal{R}}^{\pi,\mu},\tilde{\mathcal{R}}^{\pi,\mu},\mathcal{T}^\pi\}$, we show the $L_p$ distance between the iterates and $\eta^\pi$ in Figure~\ref{fig:non-convergence}. It is clear that alternative multi-step operators do not converge to the correct fixed point.

\begin{figure}
    \centering
    \subfigure[Full results for all operators ]{\includegraphics[keepaspectratio,width=.48\textwidth]{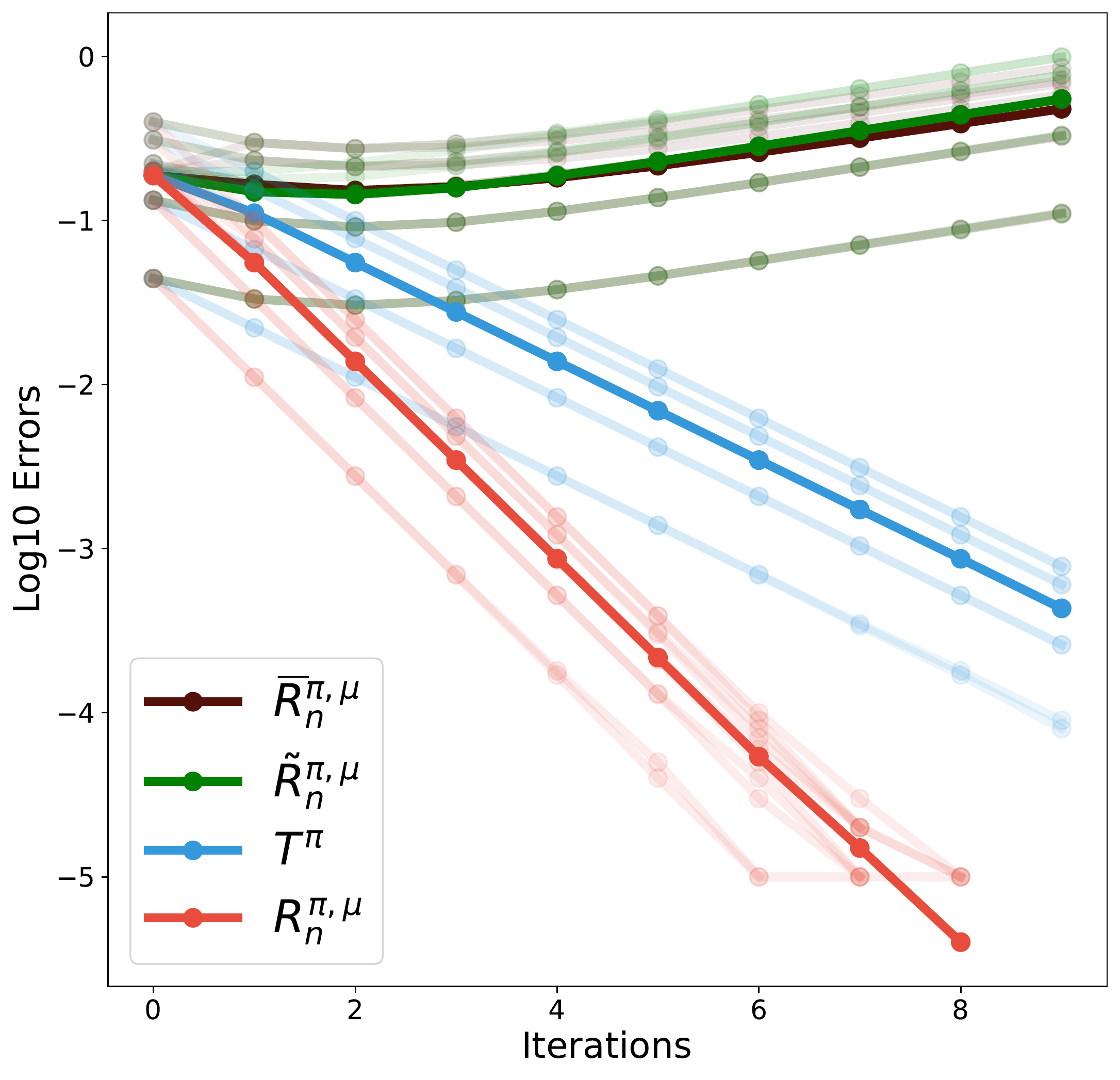}}
    \subfigure[Comparing two alternatives ]{\includegraphics[keepaspectratio,width=.48\textwidth]{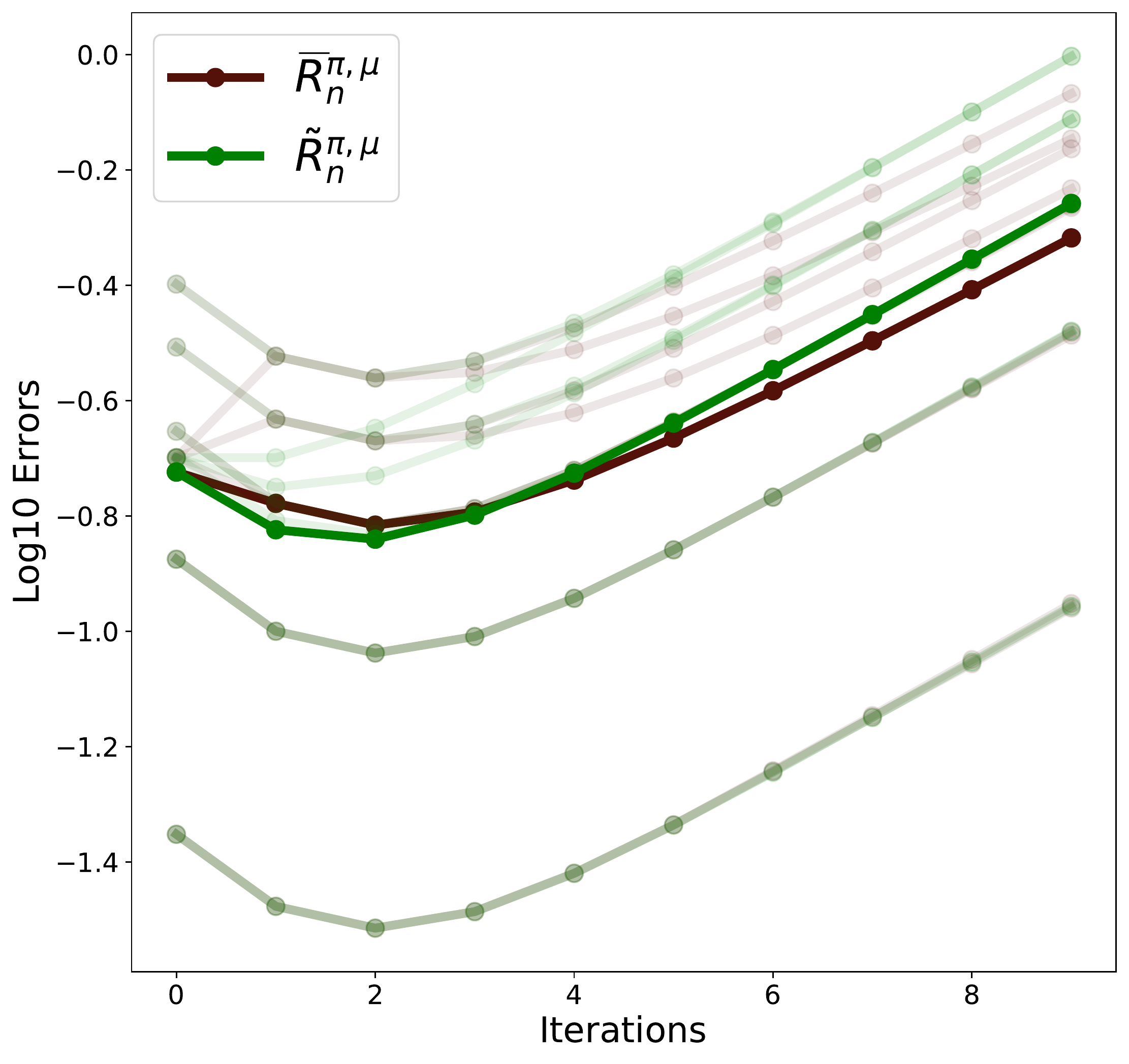}}
    \caption{Illustration of non-convergent behavior of alternative multi-step operators: for both plots, we show the mean and per-run results across $10$ different initial Dirac distributions $\eta_0$. (a) the full comparison between all operators. Two alternative operators do not converge while one-step Bellman operator and distributional Retrace both converge; (b) we zoom in on the difference between the two alternative operators. }
    \label{fig:non-convergence}
\end{figure}

\section{Backward-view algorithm for multi-step distributional RL}
\label{appendix:backwardview}

We now describe a backward-view algorithm for multi-step distributional RL with quantile representations. For simplicity, we consider the on-policy case $\pi=\mu$ and $c_t=\lambda$. To implement $\mathcal{R}^{\pi,\mu}$ in the backward-view, at each time step $t$ and a past time step $t'\leq t$, the algorithm needs to maintain two novel traces distinct from the classic eligibility traces \citep{sutton1998}: (1) partial return traces $G_{t':t}$, which correspond to the partial sum of rewards between two time steps $t'\leq t$; (2) modified eligibility traces, defined as $e_{t',t}\coloneqq\lambda^{t-t'}$, which measures the trace decay between two time steps $t'\leq t$. At a new time step $t+1$, the new traces are computed recursively: $G_{t':t+1}=R_{t+1}+\gamma G_{t',t}, e_{t',t+1}= \lambda e_{t',t}$.

We assume the algorithm maintains a table of quantile distributions with $m$ atoms: $\eta(x,a)=\frac{1}{m}\sum_{i=1}^m \delta_{z_i(x,a)},\forall (x,a)\in\mathcal{X}\times\mathcal{A}$. For any fixed $(x,a)$, define $T_t(x,a)\coloneqq \{s| X_{s}=x,A_{s}=a,0\leq s\leq t\}$ be the set of time steps before time $t$ at which $(x,a)$ is visited. Now, upon arriving at $X_{t+1}$, we observe the TD error $\Delta_t^\pi$. Recall that $L_\theta^\tau(\eta)$ denote the QR loss of parameter $\theta$ at quantile level $\tau$ and against the distrbution $\eta$. To more conveniently describe the update, we define the QR loss against the path-dependent TD error 
\begin{align*}
    \left(\textrm{b}_{G_{s:t-1},\gamma^{t-s}}\right)_\# \tilde{\Delta}_{0:t}^\pi=\left(\textrm{b}_{G_{s:t},\gamma^{t+1-s}}\right)_\# \eta(X_{t+1},A_{t+1}^\pi) - \left(\textrm{b}_{G_{s:t-1},\gamma^{t-s}}\right)_\# \eta(X_t,A_t)
\end{align*} 
as the difference of the QR losses against the individual distributions,
\begin{align*}
    L_\theta^\tau \left(\left( \textrm{b}_{G_{s:t-1},\gamma^{t-s}}\right)_\# \tilde{\Delta}_{0:t}^\pi \right) \coloneqq L_\theta^\tau\left(\left(\textrm{b}_{G_{s:t},\gamma^{t+1-s}}\right)_\# \eta(X_{t+1},A_{t+1}^\pi) \right) - L_\theta^\tau\left(  \left(\textrm{b}_{G_{s:t-1},\gamma^{t-s}}\right)_\# \eta(X_t,A_t)\right).
\end{align*}
Note that the QR loss can be computed using the transition data we have seen so far. We now perform the a gradient update for all entries in the table $(x,a)\in\mathcal{X}\times\mathcal{A}$ and $1\leq i\leq m$ (in practice, we update entries that correspond to visited state-action pairs):
\begin{align*}
    z_i(x,a) \leftarrow z_i(x,a) - \alpha \sum_{s\in T_t(x,a)} e_{s,t} \nabla_{z_i(x,a)}  L_\theta^{\tau_i} \left(\left( \textrm{b}_{G_{s:t-1},\gamma^{t-s}}\right)_\# \tilde{\Delta}_{0:t}^\pi \right),
\end{align*}
where $\tau_i=\frac{2i-1}{2m}$. For any fixed $(x,a)$, the above algorithm effectively aggregates updates from time steps $s\in T_t(x,a)$ at which $(x,a)$ is visited. 

\subsection{Simplifications for value-based RL}

We now discuss how the path-independent value-based TD errors greatly simplify the value-based backward-view algorithm. Following the above notations, assume the algorithm maintains a table of Q-function $Q(x,a)$, we can construct incremental backward-view update for all $(x,a)\in\mathcal{X}\times\mathcal{A}$ as follows, by replacing the path-dependent distributional TD error $\tilde{\Delta}_{0:t}^\pi$ by the discounted TD error $\tilde{\delta}_t^\pi$
\begin{align*}
    Q(x,a)\leftarrow Q(x,a) - \alpha \sum_{s\in T_t(x,a)} e_{s,t} \tilde{\delta}_t^\pi.
\end{align*}
Since $\delta_t^\pi$ does not depend on the past rewards and is state-action dependent, we can simplify the summation over $s\in T_t(x,a)$ by defining the state-depedent eligibility traces \citep{sutton1998} as a replacement to $e_{s,t}$,
\begin{align*}
    \tilde{e}(x,a) \leftarrow \gamma\lambda\tilde{e}(x,a) + \mathbb{I}[X_t=x, A_t=a].
\end{align*}
As a result, the above update reduces to
\begin{align*}
    Q(x,a) \leftarrow Q(x,a) - \alpha \tilde{e}(x,a) \delta_t^\pi,
\end{align*}
which recovers the classic backward-view update.

\subsection{Non-equivalence of forward-view and backward-view algorithms}
In value-based RL, forward-view and backward-view algorithms are equivalent given that the trajectory does not visit the same state twice \citep{sutton1998}. However, such an equivalence does not generally hold in distributional RL. Indeed, consider the following counterexample in the case of the quantile representation.

Consider a three-step MDP with deterministic transition $x_1\rightarrow x_2\rightarrow x_3$. There is no action and no reward on the transition. The state $x_3$ is terminal with a deterministic terminal value $r_3=1$. We consider $m=1$ atom and let the quantile parameters be $\theta_1=0$ and $\theta_2=1$ at states $x_1,x_2$ respectively. In this case, the quantile representation learns the median of the target distribution with $\tau=0.5$. 

Now, we consider the update at $\theta_1$ with both forward-view and backward-view implementation of the two-step Bellman operator $\mathcal{T}_2^\pi\eta(x) = \mathbb{E}\left[(\textrm{b}_{0,\gamma^2})_\# \eta(X_2,\pi(X_2))|X_0=x\right]$, which can be obtained from distributional Retrace by setting $c_t=\rho_t$. The target distribution at $x_1$ is a Dirac distribution centering at $\gamma^2$.

\paragraph{Forward-view update.}
Below, we use $\delta_x$ to denote a Dirac distribution at $x$.
In the forward-view, the back-up distribution is 
\begin{align*}
\mathbb{E}\left[(\textrm{b}_{0,\gamma^2})_\# \eta(X_2,\pi(X_2))\right] = \delta_{\gamma^2}.  
\end{align*}
The gradient update to $\theta_1$ is thus
\begin{align*}
    \theta_1^\text{(fwd)} = \theta_1 - \alpha \nabla_{\theta_1} L_{\theta_1}^{0.5}\left(\delta_{\gamma^2}\right) = \theta_1 + \alpha \left(0.5 - \mathbb{I}\left[\gamma^2 < \theta_1\right]\right).
\end{align*}

\paragraph{Backward-view update.} To implement the backward-view update, we make clear of the two path-dependent distributional TD errors at two consecutive time steps
\begin{align*}
    \tilde{\Delta}_0^\pi = \delta_{\gamma} - \delta_0, \ \ \tilde{\Delta}_1^\pi = (\textrm{b}_{0,\gamma})_\#\left(\delta_{\gamma \theta_2}-\delta_{\theta_1}\right) = \delta_{\gamma^2} - \delta_\gamma
\end{align*}
The update consists of two steps:
\begin{align*}
    \theta_1' &= \theta_1 - \alpha \nabla_{\theta_1} L_{\theta_1}^{0.5}\left(\delta_\gamma\right) = \theta_1 + \alpha \left(0.5 - \mathbb{I}\left[\gamma < \theta_1\right]\right) ,\\
    \theta_1^\text{(bwd)} &= \theta_1' - \alpha \left(\nabla_{\theta_1'} L_{\theta_1'}^{0.5}\left(\delta_\gamma^2\right) - \nabla_{\theta_1'} L_{\theta_1'}^{0.5}\left(\delta_\gamma\right)\right) \\ &= \theta_1' + \alpha \left(0.5 - \mathbb{I}[\gamma^2 < \theta_1']\right) - \alpha \left(0.5 - \mathbb{I}[\gamma < \theta_1'] \right).
\end{align*}
Overall, we have
\begin{align*}
   \theta_1^\text{(bwd)} &= \theta_1 + \alpha \left(0.5 - \mathbb{I}\left[\gamma < \theta_1\right]\right) + \alpha \left(0.5 - \mathbb{I}[\gamma^2 < \theta_1']\right) - \alpha \left(0.5 - \mathbb{I}[\gamma < \theta_1'] \right) \\
   &= 0.5\alpha - \alpha \mathbb{I}[\gamma^2 < 0.5\alpha] + \mathbb{I}[\gamma < 0.5\alpha].
\end{align*}

Now, let $\alpha\in(2\gamma^2,2\gamma)$ such that $0.5\alpha\in(\gamma^2,\gamma)$, we have $\theta_1^{\text{bwd}}=0.5\alpha - \alpha = -0.5\alpha \neq \theta_1^{(\text{fwd})}$.

\subsection{Discussion on memory complexity}
The return traces $G_{t',t}$ and modified eligibility traces $e_{t',t}$ are time-dependent, which is a direct implication from the fact that distributional TD errors are path-dependent. Indeed, to calculate the distributional TD error $\tilde{\Delta}_{t':t}^\pi$, it is necessary to keep track $G_{t',t}$ in the backward-view algorithm. This differs from the classic eligibility traces, which are state-action-dependent \citep{sutton1998,van2020expected}. We remark that the state-action-dependency of eligibility traces result from the fact that value-based TD errors $\Delta_t^\pi$ are path-independent. The time-dependency greatly influences the memory complexity of the algorithm: when an episode is of length $T$, value-based backward-view algorithm requires memory of size $\min(|\mathcal{X}||\mathcal{A}|,T)$ to store all eligibility traces. On the other hand, the distributional backward-view algorithm requires $\mathcal{O}(T)$.

\section{Distributional Retrace with categorical representations}
\label{appendix:categorical}

We start by showing that the distributional Retrace operator is $\beta_{L_p}$-contractive under the $\bar{L}_p$ distance for $p\geq 1$. As a comparison, the one-step distributional Bellman operator $\mathcal{T}^\pi$ is $\gamma^{1/p}$-contractive under $\bar{L}_p$ \citep{rowland2018analysis}.

\begin{restatable}{lemma}{propretracecontractivecramer}\label{prop:retracecontractivecramer} (\textbf{Contraction in $\bar{L}_p$}) $\mathcal{R}^{\pi,\mu} $ is $\beta_{L_p}$-contractive under supremum $L_p$ distance for $p\geq 1$, where $\beta_{L_p}\in[0,\gamma]$. Specifically, we have
$
    \beta_{L_p}=\max_{x\in\mathcal{X},a\in\mathcal{A}} \left( \sum_{t=1}^\infty \mathbb{E}_\mu\left[ c_1...c_{t-1}(1-c_t) \right] \gamma^{t}\right)^{1/p}$.
\end{restatable}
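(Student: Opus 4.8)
The plan is to follow the Wasserstein contraction proof of Proposition~\ref{prop:retracecontractive}, but to carry the $p$-th power $L_p^p$ of the Cram\'er distance through the argument, as this is the quantity that transforms multiplicatively under affine pushforwards. First I would record how $L_p$ behaves under $\textrm{b}_{r,\gamma}(z)=r+\gamma z$: writing $\nu_j\coloneqq(\textrm{b}_{r,\gamma})_\#\eta_j$ with CDF $F_{\nu_j}(z)=F_{\eta_j}((z-r)/\gamma)$, the substitution $u=(z-r)/\gamma$ gives
\begin{align*}
L_p\big((\textrm{b}_{r,\gamma})_\#\eta_1,(\textrm{b}_{r,\gamma})_\#\eta_2\big)^p=\gamma\,L_p(\eta_1,\eta_2)^p ,
\end{align*}
so $L_p$ scales by $\gamma^{1/p}$ and is translation-invariant; iterating, an $n$-step target multiplies $L_p^p$ by $\gamma^{n}$. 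This is the key structural difference from the Wasserstein case, where $W_p$ itself, not its $p$-th power, scales by $\gamma$.

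Next, fixing $(x,a)$ and two inputs $\eta,\eta'$, I would invoke Lemma~\ref{lemma:convexcombination} to write $\mathcal{R}^{\pi,\mu}\eta(x,a)=\sum_{i\in I(x,a)}w_i\eta_i$ with $\eta_i=(\textrm{b}_{g_i,\gamma^{n_i}})_\#\eta(y_i,b_i)$. Crucially, the weights $w_i\geq0$, offsets $g_i$, step lengths $n_i$, and targets $(y_i,b_i)$ are fixed by the trajectory and the trace coefficients $c_t\in[0,\rho_t]$ (functions of $\pi,\mu$ only), so the \emph{same} data decompose $\mathcal{R}^{\pi,\mu}\eta'(x,a)=\sum_i w_i\eta_i'$ with $\eta_i'=(\textrm{b}_{g_i,\gamma^{n_i}})_\#\eta'(y_i,b_i)$, differing only through $\eta$ versus $\eta'$ at $(y_i,b_i)$. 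Since CDFs are linear in the measure, $F_{\sum_i w_i\eta_i}=\sum_i w_i F_{\eta_i}$, and applying the convexity of $t\mapsto|t|^p$ pointwise in $z$ together with the scaling identity of Step~1 gives
\begin{align*}
L_p\big(\mathcal{R}^{\pi,\mu}\eta(x,a),\mathcal{R}^{\pi,\mu}\eta'(x,a)\big)^p
&=\int_{\mathbb{R}}\Big|\sum_i w_i\big(F_{\eta_i}(z)-F_{\eta_i'}(z)\big)\Big|^p dz \\
&\leq\sum_i w_i\,\gamma^{n_i}L_p\big(\eta(y_i,b_i),\eta'(y_i,b_i)\big)^p
\leq\Big(\sum_i w_i\gamma^{n_i}\Big)\bar{L}_p(\eta,\eta')^p .
\end{align*}

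It then remains to identify the constant. Aggregating the indices $i$ by common step length $n_i=t$ is the same bookkeeping as in Proposition~\ref{prop:retracecontractive}, giving total weight $\mathbb{E}_\mu[c_1\cdots c_{t-1}(1-c_t)]$ on $t$-step targets, hence $\sum_i w_i\gamma^{n_i}=\sum_{t=1}^\infty\mathbb{E}_\mu[c_1\cdots c_{t-1}(1-c_t)]\gamma^t$. Taking the $p$-th root and then the maximum over $(x,a)$ — which commutes with the monotone map $s\mapsto s^{1/p}$ — yields $\bar{L}_p(\mathcal{R}^{\pi,\mu}\eta,\mathcal{R}^{\pi,\mu}\eta')\leq\beta_{L_p}\,\bar{L}_p(\eta,\eta')$ with $\beta_{L_p}=\beta^{1/p}$, where $\beta\le\gamma$ is the Wasserstein modulus of Proposition~\ref{prop:retracecontractive}; in particular $\beta_{L_p}\le\gamma^{1/p}$, matching the one-step modulus $\gamma^{1/p}$ of $\mathcal{T}^\pi$ in the limit $c\equiv0$.

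The main obstacle is conceptual rather than computational: one must bound $L_p^p$ via pointwise Jensen instead of bounding $L_p$ via Minkowski's inequality. Minkowski would only deliver the weaker modulus $\sum_i w_i\gamma^{n_i/p}$, which by concavity of $s\mapsto s^{1/p}$ exceeds $(\sum_i w_i\gamma^{n_i})^{1/p}=\beta^{1/p}$; working with the $p$-th power is precisely what sharpens the constant to the stated value. Establishing the scaling identity of Step~1 and reusing the weight computation from Proposition~\ref{prop:retracecontractive} are then routine.
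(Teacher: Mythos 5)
Your proof is correct and follows essentially the same route as the paper: the paper's (one-line) proof combines the convex-combination decomposition of Lemma~\ref{lemma:convexcombination} with the $p$-convexity of the $L_p$ distance cited from the literature, and your pointwise Jensen bound on $\bigl|\sum_i w_i(F_{\eta_i}-F_{\eta_i'})\bigr|^p$ together with the scaling identity $L_p\bigl((\textrm{b}_{r,\gamma})_\#\eta_1,(\textrm{b}_{r,\gamma})_\#\eta_2\bigr)^p=\gamma\,L_p(\eta_1,\eta_2)^p$ is precisely a self-contained derivation of that $p$-convexity argument, with the same weight bookkeeping as Proposition~\ref{prop:retracecontractive}.

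One correction to your closing remark, though: the comparison with Minkowski is backwards. By concavity of $s\mapsto s^{1/p}$ (Jensen), one has $\sum_i w_i\gamma^{n_i/p}\leq\bigl(\sum_i w_i\gamma^{n_i}\bigr)^{1/p}$, not the reverse; so bounding $L_p$ directly via the triangle inequality in $L_p(dz)$ would in fact yield a \emph{smaller} (tighter) modulus $\sum_{t\geq1}\mathbb{E}_\mu[c_1\cdots c_{t-1}(1-c_t)]\gamma^{t/p}$ than the stated $\beta_{L_p}$. This does not affect the validity of your proof — any bound below the stated constant still establishes $\beta_{L_p}$-contraction — but the claim that working with the $p$-th power ``sharpens'' the constant relative to Minkowski is false; the $p$-convexity route is simply the one that produces the constant in the form stated in the lemma (and matching the paper's Wasserstein modulus via $\beta_{L_p}=\beta^{1/p}$).
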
 
\begin{proof}
The proof is similar to the proof of Proposition~\ref{prop:retracecontractive}: the result follows by combining the convex combination property of distributional Retrace in Lemma~\ref{lemma:convexcombination} with the $p$-convexity of $L_p$ distance \citep{bdr2022}. 
\end{proof}

\subsection{Categorical representation}

In categorical representations \citep{bellemare2017cramer}, we consider parametric distributions of the form for a fixed $m\geq 1$,
$
    \sum_{i=1}^m p_i\delta_{z_i}$, 
where $(z_i)_{i=1}^m\in\mathbb{R}$ are a fixed set of atoms and $(p_i)_{i=1}^m$ is a categorical distribution such that $\sum_{i=1}^m p_i=1$ and $p_i\geq 0$. Denote the class of such distributions as 
$
    \probspace_\mathcal{C}(\mathbb{R}) \coloneqq \{ \sum_{i=1}^m p_i\delta_{z_i} | \sum_{i=1}^m p_i=1, p_i\geq 0\}$. For simplicity, we assume that the target return is supported on the set of atoms $[R_\text{MIN}/(1-\gamma),R_\text{MAX}/(1-\gamma)]\subset[z_1,z_m]$. 
    
    We introduce the projection that maps from an initial back-up distribution to the categorical parametric class: $\Pi_\mathcal{C}:\probspace_\infty(\mathbb{R})\rightarrow \probspace_\mathcal{C}(\mathbb{R})$ defined as
$
    \Pi_\mathcal{C} \eta \coloneqq \arg\min_{\nu\in\probspace_\mathcal{C}(\mathbb{R})} L_2\left(\nu,\eta\right),\forall \nu\in\probspace_\infty(\mathbb{R})$. 
The projection can be easily calculated as described in \citep{bellemare2017distributional,rowland2018analysis}. For any distribution vector $\eta\in\probspace_\infty(\mathbb{R})^{\mathcal{X}\times\mathcal{A}}$, define $\Pi_\mathcal{C}\eta$ as the component-wise projection. Now, given the composed operator $\Pi_\mathcal{C}\mathcal{R}^{\pi,\mu}:\probspace_\infty(\mathbb{R})^{\mathcal{X}\times\mathcal{A}}\rightarrow\probspace_\mathcal{C}(\mathbb{R})^{\mathcal{X}\times\mathcal{A}}$, we characterize the convergence of the seququence $\eta_k=\left(\Pi_\mathcal{C}\mathcal{R}^{\pi,\mu}\right)^k\eta_0$.

\begin{restatable}{theorem}{theoremcategoricalcontraction}\label{theorem:categoricalcontraction} (\textbf{Convergence of categorical distributions})
The projected distributional Retrace operator $\Pi_\mathcal{C} \mathcal{R}^{\pi,\mu}$ is $\beta_{L_2}$-contractive under $\bar{L}_2$ distance in $\probspace_\mathcal{Q}(\mathbb{R})$. As a result, the above $\eta_k$ converges to a limiting distribution $\eta_\mathcal{R}^\pi$ in $\bar{L}_2$, such that
$
     \bar{L}_2(\eta_k,\eta_\mathcal{R}^\pi)\leq(\beta_{L_2})^k \bar{L}_2(\eta_0,\eta_\mathcal{R}^\pi)
$. Further, the quality of the fixed point is characterized as $\bar{L}_2(\eta_\mathcal{R}^\pi,\eta^\pi)\leq (1-\beta_{L_2})^{-1}\bar{L}_2(\Pi_\mathcal{C}\eta^\pi,\eta^\pi)$.
\end{restatable}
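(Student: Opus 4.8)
The plan is to follow the standard two-part template for projected distributional operators, mirroring the structure of Theorem~\ref{theorem:quantilecontraction} but now working in the $\bar{L}_2$ metric rather than $\bar{W}_\infty$. The two ingredients I need are: (1) the projection $\Pi_\mathcal{C}$ is a non-expansion in $\bar{L}_2$, and (2) the unprojected operator $\mathcal{R}^{\pi,\mu}$ is $\beta_{L_2}$-contractive in $\bar{L}_2$, which is exactly the content of Lemma~\ref{prop:retracecontractivecramer}. Composing a non-expansion with a $\beta_{L_2}$-contraction yields a $\beta_{L_2}$-contraction, and Banach's fixed point theorem then gives existence and uniqueness of the limit $\eta_\mathcal{R}^\pi$ together with the geometric convergence rate $\bar{L}_2(\eta_k,\eta_\mathcal{R}^\pi)\leq(\beta_{L_2})^k\bar{L}_2(\eta_0,\eta_\mathcal{R}^\pi)$.

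First I would establish the non-expansiveness of $\Pi_\mathcal{C}$ under $\bar{L}_2$. Since $\Pi_\mathcal{C}$ is the orthogonal projection onto $\probspace_\mathcal{C}(\mathbb{R})$ with respect to the $L_2$ (Cram\'er) metric, it is a $1$-Lipschitz map under $L_2$; this is the well-known property established in the categorical distributional RL literature \citep{rowland2018analysis}. Taking the maximum over state-action pairs lifts this component-wise bound to $\bar{L}_2$, so that $\bar{L}_2(\Pi_\mathcal{C}\eta_1,\Pi_\mathcal{C}\eta_2)\leq\bar{L}_2(\eta_1,\eta_2)$ for all $\eta_1,\eta_2$. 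I would then invoke Lemma~\ref{prop:retracecontractivecramer} for the contraction factor of $\mathcal{R}^{\pi,\mu}$ and chain the two bounds to conclude that $\Pi_\mathcal{C}\mathcal{R}^{\pi,\mu}$ is $\beta_{L_2}$-contractive.

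For the fixed-point quality bound, the plan is the standard error-decomposition argument. Denoting by $\eta_\mathcal{R}^\pi$ the fixed point of $\Pi_\mathcal{C}\mathcal{R}^{\pi,\mu}$ and recalling that $\eta^\pi$ is the fixed point of $\mathcal{R}^{\pi,\mu}$ by Proposition~\ref{prop:retracefixedpoint}, I would write $\bar{L}_2(\eta_\mathcal{R}^\pi,\eta^\pi)\leq\bar{L}_2(\eta_\mathcal{R}^\pi,\Pi_\mathcal{C}\eta^\pi)+\bar{L}_2(\Pi_\mathcal{C}\eta^\pi,\eta^\pi)$ via the triangle inequality. The first term is $\bar{L}_2(\Pi_\mathcal{C}\mathcal{R}^{\pi,\mu}\eta_\mathcal{R}^\pi,\Pi_\mathcal{C}\mathcal{R}^{\pi,\mu}\eta^\pi)$, which the contraction property bounds by $\beta_{L_2}\,\bar{L}_2(\eta_\mathcal{R}^\pi,\eta^\pi)$; rearranging $(1-\beta_{L_2})\bar{L}_2(\eta_\mathcal{R}^\pi,\eta^\pi)\leq\bar{L}_2(\Pi_\mathcal{C}\eta^\pi,\eta^\pi)$ gives the claimed $(1-\beta_{L_2})^{-1}$ bound.

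The main obstacle is purely bookkeeping around the signed-measure issue rather than any deep difficulty: the intermediate back-up targets $\mathcal{R}^{\pi,\mu}\eta$ are genuine distributions when $\eta$ is (by the convex combination property of Lemma~\ref{lemma:convexcombination}), so the projection $\Pi_\mathcal{C}$ is well-defined on the relevant iterates, and the $L_2$ metric extends naturally to the signed-measure setting as noted in Appendix~\ref{appendix:metric}. I would just verify that $\probspace_\mathcal{C}(\mathbb{R})^{\mathcal{X}\times\mathcal{A}}$ is complete under $\bar{L}_2$ so that Banach's theorem applies, which follows since the atom locations are fixed and the categorical probabilities live in a compact simplex. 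With these routine checks in place, the argument is structurally identical to the quantile case and the proof concludes.
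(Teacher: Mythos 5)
Your proposal is correct and follows essentially the same route as the paper: compose the non-expansiveness of $\Pi_\mathcal{C}$ in $\bar{L}_2$ \citep{rowland2018analysis} with the $\beta_{L_2}$-contraction of $\mathcal{R}^{\pi,\mu}$ from Lemma~\ref{prop:retracecontractivecramer}, then run the same fixed-point argument as in Theorem~\ref{theorem:quantilecontraction}. The only cosmetic difference is that you inline the triangle-inequality derivation of the bound $\bar{L}_2(\eta_\mathcal{R}^\pi,\eta^\pi)\leq(1-\beta_{L_2})^{-1}\bar{L}_2(\Pi_\mathcal{C}\eta^\pi,\eta^\pi)$, which the paper instead obtains by citing Proposition 5.28 of \citep{bdr2022}.
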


\begin{proof}
The above theorem follows from Lemma~\ref{prop:retracecontractivecramer}. Indeed, since $\Pi_\mathcal{Q}$ is a non-expansion in supremum Cram\'er distance $\bar{L}_2$ \citep{rowland2018analysis}, the composed operator $\Pi_\mathcal{Q}\mathcal{R}^{\pi,\mu}$ is $\beta_{L_2}$-contractive in $\bar{L}_2$. Following the same argument as the proof of Theorem~\ref{theorem:quantilecontraction}, we obtain the remaining desired results.
\end{proof}

The distributional Retrace operator also improves over one-step distributional Bellman operator in two aspects: (1) the bound on the contraction rate $\beta_{L_2}\leq \sqrt{\gamma}$ is smaller, usually leading to faster contraction to the fixed point; (2) the bound on the quality of the fixed point is improved.

\subsection{Cross-entropy update and C51-Retrace}

Unlike in the quantile projection case, where calculating $\Pi_\mathcal{Q}\eta$ requires solving a quantile regression minimization problem, the categorical projection can be calculated in an analytic way \citep{rowland2018analysis,bdr2022}. 
Assume the categorical distribution is parameterized as $\eta_w(x,a) = \sum_{i=1}^m p_i(x,a;w) \delta_{z_i}$.
After computing the back-up target distribution $\Pi_\mathcal{C}\mathcal{R}^{\pi,\mu}\eta(x,a)$ for a given distribution vector $\eta$, the algorithm carries out a gradient-based incremental update
\begin{align*}
    w \leftarrow w - \alpha \nabla_w \mathbb{CE}\left[\Pi_\mathcal{C}\mathcal{R}^{\pi,\mu}\eta(x,a) | \eta_w(x,a)\right],
\end{align*}
where $\mathbb{CE}(p|q)\coloneqq -\sum_i p_i\log q_i$ denotes the cross-entropy between distribution $p$ and $q$. For simplicity, we adopt a short-hand notation $\mathbb{CE}(\eta|\eta_w)=\mathbb{CE}_w(\eta)$. Note also that in practice, $\eta$ can be a slowly updated copy of $\eta_w$ \citep{mnih2015humanlevel}. As such, the gradient-based update can be understood as approximating the iteration $\eta_{k+1}=\mathcal{R}^{\pi,\mu}\eta_k$. We propose the following unbiased estimate to the cross-entropy $\hat{\mathbb{CE}}_w\left[\Pi_\mathcal{C}\mathcal{R}^{\pi,\mu}\eta(x,a)\right]$, calculated as follows
\begin{align*}
  \mathbb{CE}_w\left(\eta(x,a)\right) + \sum_{t=0}^\infty c_{1:t} \left( \mathbb{CE}_w\left(\left(\textrm{b}_{t+1}\right)_{\#}\eta\left(X_{t+1},A_{t+1}^\pi\right)\right) - \mathbb{CE}_w\left(\left(\textrm{b}_t\right)_{\#}\eta(X_t,A_t)\right)\right).
\end{align*}

\begin{restatable}{lemma}{lemmaunbiasedcat}\label{lemma:unbiasedcat} (\textbf{Unbiased stochastic estimate for categorical update}) Assume that the trajectory terminates within $H<\infty$ steps almost surely, then we have $\mathbb{E}_\mu\left[\hat{\mathbb{CE}}_w\left(\Pi_\mathcal{C}\mathcal{R}^{\pi,\mu}\eta(x,a)\right)\right]=\mathbb{CE}_w\left(\Pi_\mathcal{C}\mathcal{R}^{\pi,\mu}\eta(x,a)\right)$. Without loss of generality, assume $w$ is a scalar parameter. If there exists a constant $M>0$ such that $\left|\nabla_w \mathbb{CE}_w\left(\eta\right)\right|\leq M,\forall \eta\in\probspace_\infty(\mathbb{R})$, then the gradient estimate is also unbiased $ \mathbb{E}_\mu\left[\nabla_w \hat{\mathbb{CE}}_w\left(\Pi_\mathcal{C}\mathcal{R}^{\pi,\mu}\eta(x,a)\right)\right]=\nabla_w \mathbb{CE}_w\left(\Pi_\mathcal{C}\mathcal{R}^{\pi,\mu}\eta(x,a)\right)$.
\end{restatable}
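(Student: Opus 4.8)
The plan is to reduce the entire statement to the \emph{linearity} of the composed map $\phi \coloneqq \mathbb{CE}_w(\Pi_\mathcal{C}\,\cdot\,)$, which is what controls the fact that both the multi-step back-up target $\mathcal{R}^{\pi,\mu}\eta(x,a)$ and the path-dependent TD errors $\tilde{\Delta}_{0:t}^\pi$ are signed measures. First I would record two elementary facts: the categorical projection $\Pi_\mathcal{C}$ is a linear operator on bounded-support (finite) signed measures, and the cross-entropy against the fixed model distribution, $\mathbb{CE}_w(\nu)=\mathbb{CE}(\nu\mid\eta_w)=-\sum_i \nu_i\log p_i(w)$ (with $\nu_i$ the mass of $\nu$ on atom $z_i$), is linear in its first argument $\nu$. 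Consequently $\phi$ extends to a well-defined bounded linear functional on signed measures, even though the literal probabilistic cross-entropy is only defined for genuine distributions; this is precisely the device that bypasses the ill-definedness issue flagged in the statement.

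For the value claim I would start from $\mathcal{R}^{\pi,\mu}\eta(x,a) = \eta(x,a) + \mathbb{E}_\mu[\sum_{t=0}^\infty c_{1:t}\tilde{\Delta}_{0:t}^\pi]$ and apply $\phi$. The finite-horizon hypothesis guarantees that, almost surely, at most $H+1$ summands are nonzero and each is bounded, so by linearity of $\phi$ and the (a.s.) finiteness of the sum I may pull $\phi$ inside the sum and swap it with $\mathbb{E}_\mu$, obtaining $\phi(\mathcal{R}^{\pi,\mu}\eta(x,a)) = \phi(\eta(x,a)) + \mathbb{E}_\mu[\sum_{t=0}^\infty c_{1:t}\phi(\tilde{\Delta}_{0:t}^\pi)]$. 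Finally, using $\tilde{\Delta}_{0:t}^\pi = (\textrm{b}_{t+1})_\#\eta(X_{t+1},A_{t+1}^\pi) - (\textrm{b}_t)_\#\eta(X_t,A_t)$ and linearity once more, each $\phi(\tilde{\Delta}_{0:t}^\pi)$ splits into the difference of the two cross-entropy terms that appear in $\hat{\mathbb{CE}}_w$, so $\mathbb{E}_\mu[\hat{\mathbb{CE}}_w(\Pi_\mathcal{C}\mathcal{R}^{\pi,\mu}\eta(x,a))] = \mathbb{CE}_w(\Pi_\mathcal{C}\mathcal{R}^{\pi,\mu}\eta(x,a))$, which is the first identity.

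For the gradient claim I would differentiate the identity just established in $w$; its right-hand side is exactly the target $\nabla_w \mathbb{CE}_w(\Pi_\mathcal{C}\mathcal{R}^{\pi,\mu}\eta(x,a))$, so only the exchange of $\nabla_w$ and $\mathbb{E}_\mu$ on the left remains. I would justify this by dominated convergence: almost surely $\nabla_w\hat{\mathbb{CE}}_w$ is a sum of at most $H+1$ terms, each a difference of two gradients $\nabla_w\mathbb{CE}_w(\cdot)$ evaluated at genuine probability distributions (the two pushforwards), hence each bounded by $M$ under the stated hypothesis, and weighted by $c_{1:t}\le\rho_{1:t}$. This yields the dominating bound $|\nabla_w\hat{\mathbb{CE}}_w|\le M(1+2\sum_{t=0}^H \rho_{1:t})$; since the per-step importance ratios satisfy $\mathbb{E}_\mu[\rho_{1:t}]=1$ (the product telescopes in conditional expectation), the bound is $\mu$-integrable, licensing differentiation under the expectation. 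The same integrable bound also retroactively justifies the exchanges used in the value part.

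I expect the main obstacle to be conceptual rather than computational: recognizing that although $\mathcal{R}^{\pi,\mu}\eta(x,a)$ is a signed measure on which cross-entropy is not literally defined, the composed map $\phi$ is linear and bounded, so the telescoping and the expectation/sum swaps all go through verbatim. The accompanying technical subtlety lies in the differentiation-under-the-expectation step, where both the bounded-gradient hypothesis and the finite-horizon hypothesis are essential to produce an integrable dominating function. It is worth emphasizing that the gradient bound $M$ is imposed on \emph{probability} distributions, which is exactly why $\hat{\mathbb{CE}}_w$ must be written as a difference of cross-entropies of the two pushforward \emph{distributions} rather than as a single cross-entropy of their signed difference: only the former decomposition admits the termwise bound.
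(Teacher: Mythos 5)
Your proof is correct and takes essentially the same route as the paper's: extend the cross-entropy linearly to signed measures, use that linearity to telescope the estimate and exchange it with $\mathbb{E}_\mu$, and justify the gradient exchange by dominated convergence using the bounded-gradient hypothesis together with the finite horizon $H$. If anything, you are slightly more careful than the paper, whose proof never explicitly invokes the linearity of $\Pi_\mathcal{C}$ (it silently identifies $\mathbb{CE}_w(\Pi_\mathcal{C}\,\cdot\,)$ with $\mathbb{CE}_w(\cdot)$), and your dominating function $M\bigl(1+2\sum_{t=0}^{H}\rho_{1:t}\bigr)$, integrable because $\mathbb{E}_\mu[\rho_{1:t}]=1$, refines the paper's worst-case bound $M\bigl(1+2\sum_{t=0}^{H}\rho^t\bigr)$.
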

\begin{proof}
The cross-entropy is defined for any distribution $\mathbb{CE}_w(\eta)$. For any signed measure $\nu=\sum_{i=1}^m w_i\eta_i$ with $\eta_i\in\probspace_\infty(\mathbb{R})$, we define the generalized cross-entropy as 
\begin{align*}
    \mathbb{CE}_w\left(\nu\right) \coloneqq \sum_{i=1}^m  w_i \mathbb{CE}_w\left(\eta_i\right), 
\end{align*}
Next, we note the cross-entropy is linear in the input distribution (or signed measure). In particular, for a set of $N$ (potentially infinite) coefficients and distributions (signed measures) $(a_i,\eta_i)$,
\begin{align*}
    \mathbb{CE}_w\left(\sum_{i=1}^N a_i \eta_i \right) \coloneqq \sum_{i=1}^m a_i \mathbb{CE}_w\left( \eta_i \right).
\end{align*}
When $a_i$ denotes a distribution, the above rewrites as $\mathbb{CE}_w\left(\mathbb{E}[\eta_i]\right)=\mathbb{E}[\mathbb{CE}(\eta_i)]$. Finally, combining everything together, we have $\mathbb{E}_\mu\left[\hat{\mathbb{CE}}_w\left(\Pi_\mathcal{C}\mathcal{R}^{\pi,\mu}\eta(x,a)\right)\right] $ evaluate to
\begin{align*} 
    &= \mathbb{E}_\mu\left[\mathbb{CE}_w\left(\eta(x,a)\right) + \sum_{t=0}^\infty c_{1:t} \left( \mathbb{CE}_w\left(\left(\textrm{b}_{t+1}\right)_{\#}\eta\left(X_{t+1},A_{t+1}^\pi\right)\right) - \mathbb{CE}_w\left(\left(\textrm{b}_t\right)_{\#}\eta(X_t,A_t)\right)\right)\right] \\
    &=_{(a)} \mathbb{E}_\mu\left[\mathbb{CE}\left(\hat{\mathcal{R}}^{\pi,\mu}\eta(x,a)\right)\right] 
    =_{(b)} \mathbb{E}_\mu\left[\mathbb{CE}\left(\mathcal{R}^{\pi,\mu}\eta(x,a)\right)\right]. 
\end{align*}
In the above, (a) follows from the definition of the cross-entropy with signed measure $\hat{\mathcal{R}}^{\pi,\mu}\eta(x,a)$ and (b) follows from the linearity property of cross-entropy. 

Next, to show that the gradient estimate is unbiased too, the high level idea is to apply dominated convergence theorem (DCT) to justify the exhchange of gradient and expectation \citep{rudin1976principles}. This is similar to the quantile representation case (see proof for Lemma~\ref{lemma:unbiasedqr}). To this end, consider the absolute value of the gradient estimate $
    \left|\nabla_w \hat{\mathbb{CE}}_w\left(\mathcal{R}^{\pi,\mu}\eta(x,a)\right)\right|
$, which serves as an upper bound to the gradient estimate. In order to apply DCT, we need to show the expectation of the absolute gradient is finite. Note we have
\begin{align*}
    &\mathbb{E}_\mu\left[
    \left|\nabla_w \hat{\mathbb{CE}}_w\left(\mathcal{R}^{\pi,\mu}\eta(x,a)\right)\right|
\right] \\ &= \mathbb{E}_\mu\left[\left|\nabla_w \mathbb{CE}_w\left(\eta(x,a)\right) + \sum_{t=0}^H c_{1:t} \left( \nabla_w \mathbb{CE}_w\left(\left(\textrm{b}_{t+1}\right)_{\#}\eta\left(X_{t+1},A_{t+1}^\pi\right)\right) - \nabla_w \mathbb{CE}_w\left(\left(\textrm{b}_t\right)_{\#}\eta(X_t,A_t)\right)\right)\right|\right] \\
&\leq_{(a)} \mathbb{E}_\mu\left[\left|\nabla_w \mathbb{CE}_w \left(\eta(x,a)\right)\right| + \sum_{t=0}^H c_{1:t} \left| \nabla_w \mathbb{CE}_w\left(\left(\textrm{b}_{t+1}\right)_{\#}\eta\left(X_{t+1},A_{t+1}^\pi\right)\right) - \nabla_w \mathbb{CE}_w\left(\left(\textrm{b}_t\right)_{\#}\eta(X_t,A_t)\right)\right|\right] \\
&\leq_{(b)} \mathbb{E}_\mu\left[M + \sum_{t=0}^H \rho^t \cdot M\right] <\infty,
\end{align*}
where (a) follows from the application of triangle inequality; (b) follows from the fact that the QR loss gradient against a fixed distribution is bounded $\nabla_w \mathbb{CE}_w\left(\nu\right)\in[-M,M],\forall \nu\in\probspace_\infty(\mathbb{R})$ \citep{dabney2018distributional}. 

Hence, with the application DCT, we can exchange the gradient and expectation operator, which yields $ \mathbb{E}_\mu\left[\nabla_w \hat{\mathbb{CE}}_w^\tau\left(\mathcal{R}^{\pi,\mu}\eta(x,a)\right)\right] = \nabla_w \mathbb{E}_\mu\left[ \hat{\mathbb{CE}}_w^\tau\left(\mathcal{R}^{\pi,\mu}\eta(x,a)\right)\right] = \nabla_w \mathbb{CE}_w \left(\mathcal{R}^{\pi,\mu}\eta(x,a)\right)$.

\end{proof}

We remark that the condition on the bounded gradient $|\nabla_w \mathbb{CE}_w\left(\eta\right)|\leq M$ is not restrictive. When $\eta_w$ is adopts a softmax parameterization and $w$ represents the logits, $M=1$.

Finally, the deep RL agent C51 parameterizes the categorical distribution $p_i(x,a;w)$ with a neural network $w$ at each state action pair $(x,a)$ \citep{bellemare2017cramer}. When combined with the above algorithm, this produces  C51-Retrace. 

\section{Additional experiment details}
\label{appendix:experiments}

In this section, we provide detailed information about experiment setups and additional results. All experiments are carried out in Python, using NumPy for numerical computations \citep{harris2020array} and Matplotlib for visualization \citep{hunter2007matplotlib}. All deep RL experiments are carried out with Jax \citep{bradbury2018jax}, specifically
making use of the DeepMind Jax ecosystem \citep{babuschkin2010deepmind}.

\subsection{Tabular}

We provide additional details on the tabular RL experiments.

\paragraph{Setup.} We consider a tabular MDP with $|\mathcal{X}|=3$ states and $|\mathcal{A}|=2$ actions. The reward $r(x,a)$ is deterministic and generated from a standard Gaussian distribution. The transition probability $P(\cdot|x,a)$ is sampled from a Dirichlet distribution with parameter $(\Gamma,\Gamma...\Gamma)$ for $\Gamma=0.5$. The discount factor is fixed as $\gamma=0.9$. The MDP has a starting state-action pair $(x_0,a_0)$. The behavior policy $\mu$ is a uniform policy. The target policy is generated as follows: we first sample a deterministic policy $\pi_d$ and then compute $\pi=(1-\epsilon)\pi_d+\epsilon \mu$, with parameter $\epsilon$ to control the level of off-policyness. 

\paragraph{Quantile distribution and projection.} We use $m=100$ atoms throughout the experiments. Assuming access to the MDP parameters (e.g., reward and transition probability), we can analytically compute the projection $\Pi_\mathcal{Q}$ using a sorting algorithm. See \citep{dabney2018distributional,bdr2022} for details.

\paragraph{Evaluation metrics.} Let $\eta_k=(\mathcal{R}^{\pi,\mu})^k\eta_0$ be the $k$-th iterate.
We use a few different metrics in Figure~\ref{fig:tabular}. Given any particular distributional Retrace operator $\mathcal{R}^{\pi,\mu}$, there exists a fixed point to the composed operator $\Pi_\mathcal{Q}\mathcal{R}^{\pi,\mu}$. Recall that we denote this distribution as $\eta_\mathcal{R}^\pi$. Fig~\ref{fig:tabular}(a)-(b) calculates the iterates' distance from the fixed point, evaluated at $(x_0,a_0)$.
\begin{align*}
    L_p\left(\eta_k(x_0,a_0), \eta_\mathcal{R}^\pi(x_0,a_0)\right).
\end{align*}
Fig~\ref{fig:tabular}(c) calculates the distance from the projected target distribution $\Pi_\mathcal{Q}\eta^\pi$. Recall that $\Pi_\mathcal{Q}\eta^\pi$ is in some sense the best possible approximation that the current quantile representation can obtain.
\begin{align*}
    L_p\left(\eta_k(x_0,a_0), \Pi_\mathcal{Q}\eta^\pi(x_0,a_0)\right).
\end{align*}

\subsection{Deep reinforcement learning}

We provide additional details on the deep RL experiments.

\paragraph{Evaluation metrics.} For the $i$-th of the $57$ Atari games, we obtain the performance of the agent $G_i$ at any given point in training. The normalized performance is computed as 
$Z_i = (G_i - U_i) / (H_i - U_i)$ where $H_i$ is the human performance and $U_i$ is the performance of a random policy. Then the mean/median metric is calculated as the mean or median statistics over $(Z_i)_{i=1}^{57}$.

The super human ratio is computed as the number of games such as $Z_i\geq 1$, i.e., $G_i\geq H_i$ where the agent obtains super human performance on the game. Formally, it is compute as $\frac{1}{57}\sum_{i=1}^{57} \mathbb{I}[Z_i\geq 1]$.

\paragraph{Shared properties of all baseline agents.} All baseline agents use the same torso architecture as DQN \citep{mnih2015humanlevel} and differ in the head outputs, which we specify below. All agents an Adam optimizer \citep{kingma2014adam} with a fixed learning rate; the optimization is carried out on mini-batches of size $32$ uniformly sampled from the replay buffer. For exploration, the agent acts $\epsilon$-greedy with respect to induced Q-functions, the details of which we specify below. The exploration policy adopts $\epsilon$ that starts with $\epsilon_\text{max}=1$ and linearly decays to $\epsilon_\text{min}=0.01$ over training. At evaluation time, the agent adopts $\epsilon=0.001$; the small exploration probability is to prevent the agent from getting stuck.

\paragraph{Details of baseline C51 agent.}  The agent head outputs a matrix of size $|\mathcal{A}| \times m$, which represents the logits to  $\left(p_i(x,a;\theta)\right)_{i=1}^m$. The support $(z_i)_{i=1}^m$ is generated as a uniform array over $[-V_\text{MAX},V_\text{MAX}]$. Though $V_\text{MAX}$ should in theory be determined by $R_\text{MAX}$; in practice, it has been found that setting $V_\text{MAX}=R_\text{MAX}/(1-\gamma)$ leads to highly sub-optimal performance. This is potentially because usually the random returns are far from the extreme values $R_\text{MAX}/(1-\gamma)$, and it is better to set $V_\text{MAX}$ at a smaller value. Here, we set $V_\text{MAX}=10$ and $m=51$.  For details of other hyperparameters, see \citep{bellemare2017distributional}.  The induced Q-function is computed as $Q_\theta(x,a)=\sum_{i=1}^m p_i(x,a;\theta)z_i$. 

\paragraph{Details of baseline QR-DQN agent.} The agent head outputs a matrix of size $|\mathcal{A}| \times m$, which represents the quantile locations  $\left(z_i(x,a;\theta)\right)_{i=1}^m$. Here, we set $m=201$. For details of other hyperparameters, see \citep{dabney2018distributional}. The induced Q-function is computed as $Q_\theta(x,a)=\frac{1}{m}\sum_{i=1}^m z_i(x,a;\theta)$.

\paragraph{Details of multi-step agents.} Multi-step variants use exactly the same hyperparameters as the one-step baseline agent. The only difference is that the agent uses multi-step back-up targets.

The agent stores partial trajectories $(X_t,A_t,R_t,x_t)_{t=0}^{n-1}\sim \mu$ generated under the behavior policy. Here, the behavior policy $\mu$ is the $\epsilon$-greedy policy with respect to a potentially old Q-function (this is because the data at training time is sampled from the replay); the target policy $\pi$ is the greedy policy with respect to the current Q-function.

\begin{figure}
    \centering
    \subfigure[C51 ]{\includegraphics[keepaspectratio,width=.96\textwidth]{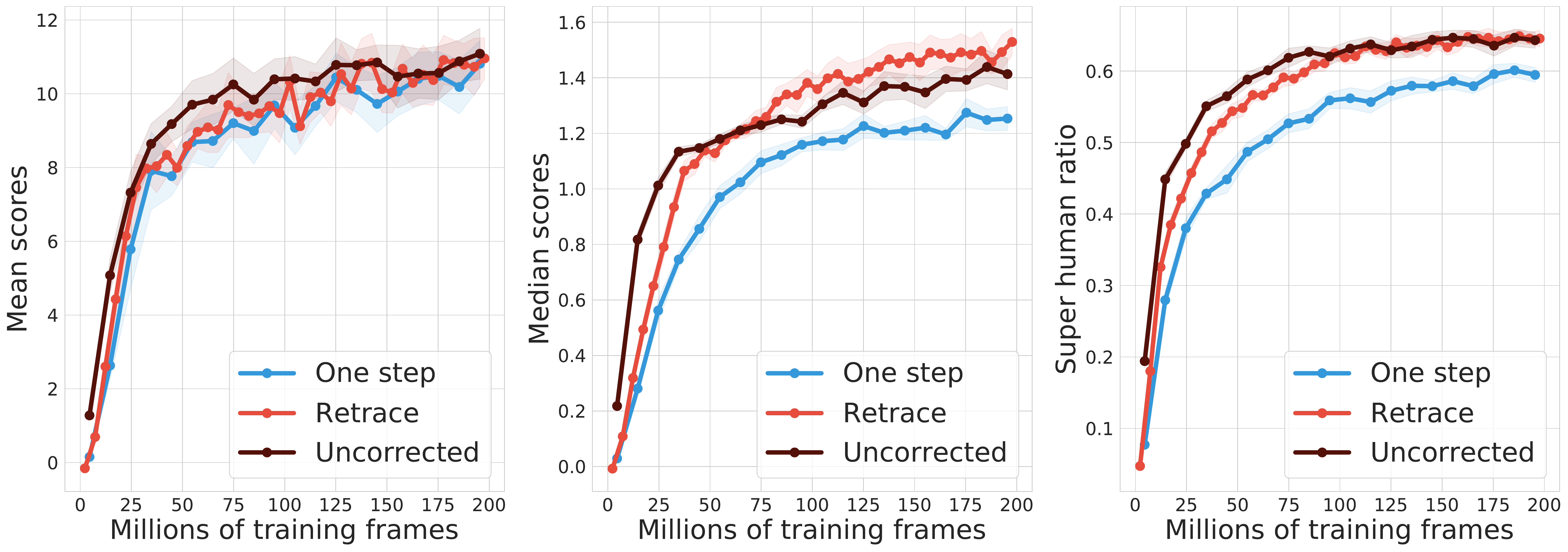}}
    \subfigure[QR-DQN ]{\includegraphics[keepaspectratio,width=.96\textwidth]{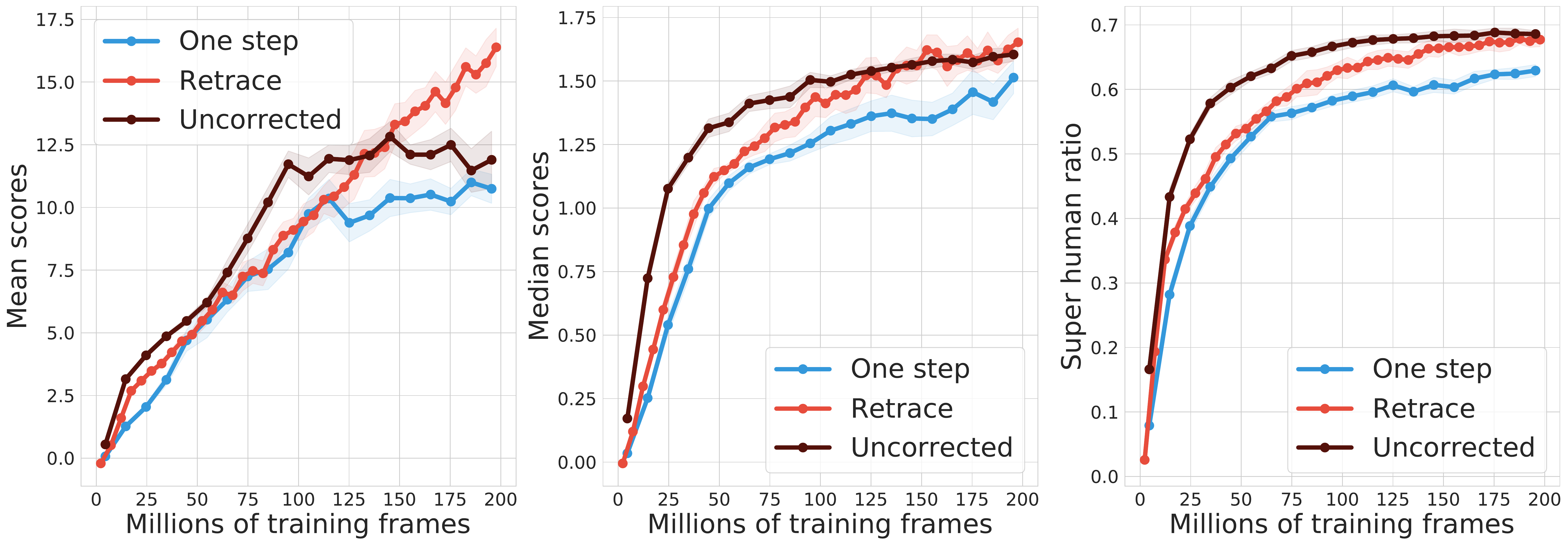}}
    \caption{Deep RL experiments on Atari-57 games for (a) C51 and (b) QR-DQN. We compare the one-step baseline agent against the multi-step variants (Retrace and uncorrected $n$-step). For all multi-step variants, we use $n=3$. For each agent, we calculate the mean, median and super human ratio performance across all games, and we plot the $\text{mean}\pm\text{standard\ error}$ across $3$ seeds. In almost all settings, Multi-step variants provide clear advantage over the one-step baseline algorithm.}
    \label{fig:deeprl-exp-all}
\end{figure}

\begin{figure}
    \centering
    \subfigure[C51 ]{\includegraphics[keepaspectratio,width=.48\textwidth]{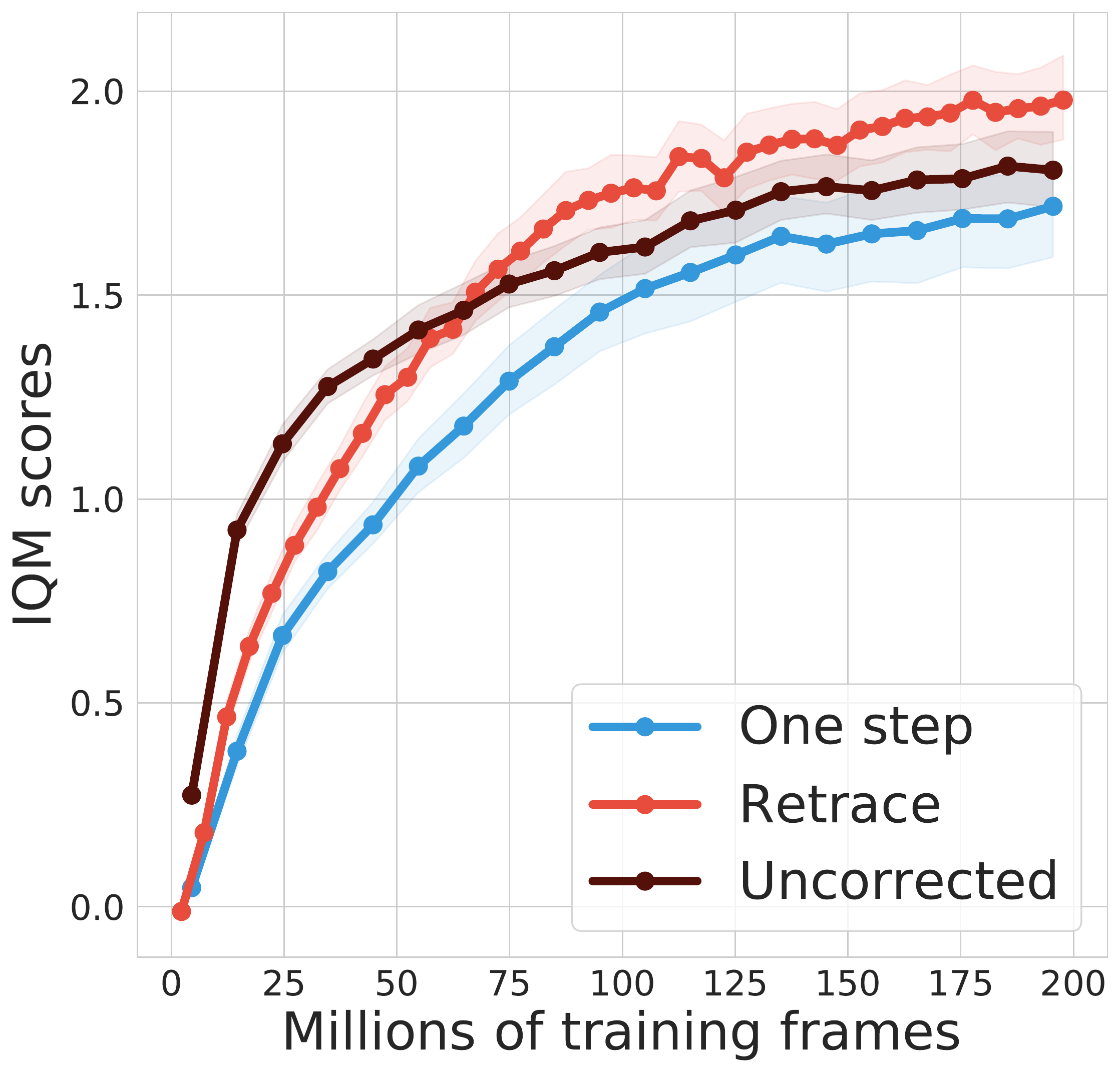}}
    \subfigure[QR-DQN ]{\includegraphics[keepaspectratio,width=.48\textwidth]{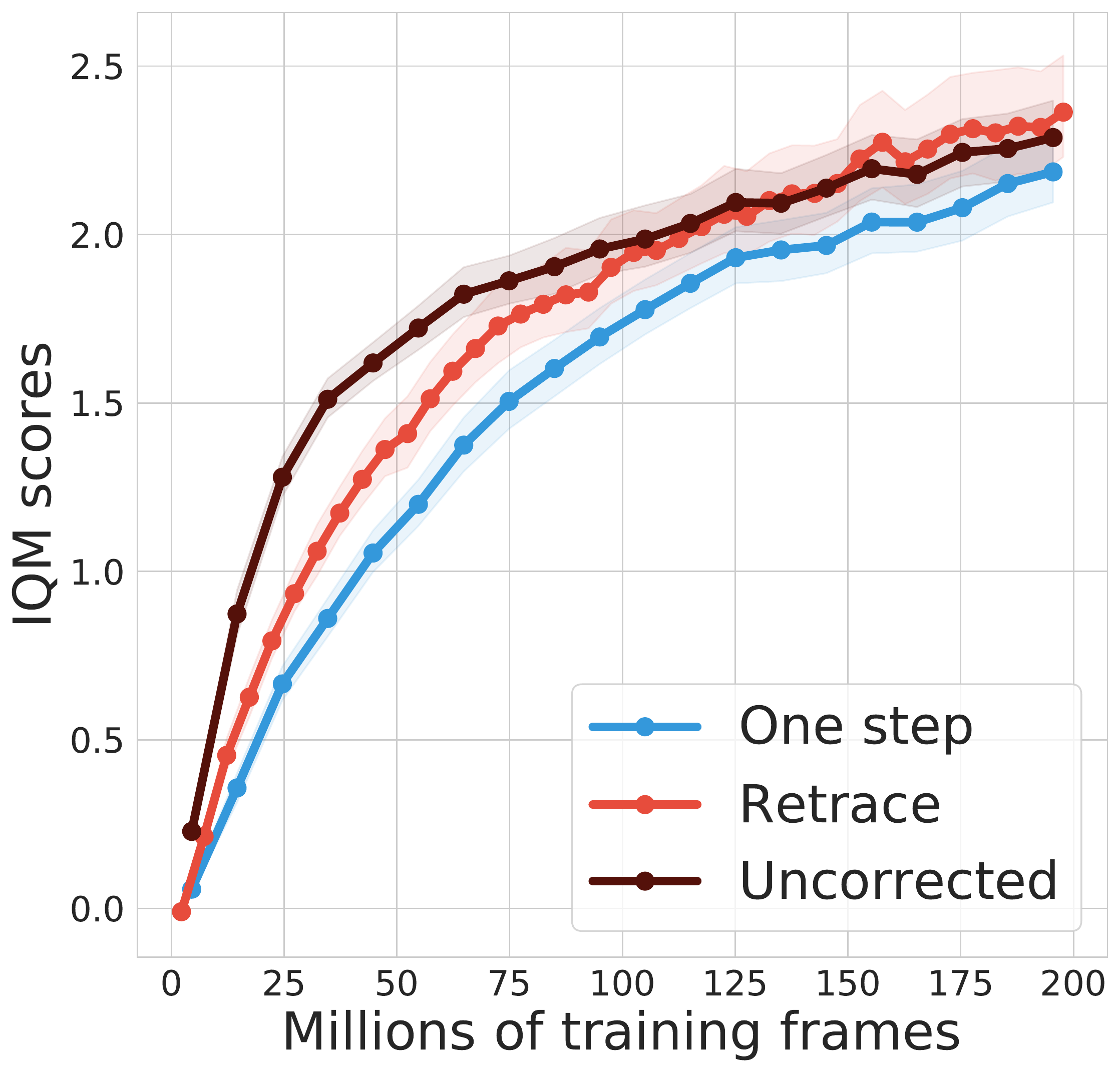}}
    \caption{Deep RL experiments on Atari-57 games for (a) C51 and (b) QR-DQN, with the same setup as in Figure~\ref{fig:deeprl-exp-all}. Here, we compute the interquartile mean (IQM) with 95\% bootstrapped confidence interval \citep{agarwal2021deep}. In a nutshell, IQM calculates the mean scores after removing extreme score values, making the performance statistics more robust. Even after excluding extreme scores, Retrace obtains favorable performance compared to the uncorrected and one-step algorithm.  }
    \label{fig:deeprl-exp-iqm}
\end{figure}

\section{Proof}
\label{appendix:proof}

To simplify the proof, we assume that the immediate random reward takes a finite number of values. It is straightforward to generalize results to the case where the reward takes an infinite number of values (e.g., the random reward has a continuous distribution).

\begin{restatable}{assumption}{assumereward}\label{assume:reward} \textbf{(Reward takes a finite number of values)} For all state-action pair $(x,a)$, we assume the random reward $R(x,a)$ takes a finite number of values. Let $\tilde{R}$ be the finite set of values that the reward $\{R(x,a),(x,a)\in\mathcal{X}\times\mathcal{A}\}$ can take.
\end{restatable}

For any integer $t\geq 1$, Let $\tilde{R}^t$ denotes the Cartesian product of $t$ copies of $\tilde{R}$: 
\begin{align*}
    \tilde{R}^t\coloneqq \underbrace{\tilde{R}\times\tilde{R}\times...\times\tilde{R}}_{t\ \text{copies\ of\ }\tilde{R}}.
\end{align*}
For any fixed $t$, we let $r_{0:t-1}$ denote the sequence of realizable rewards from time $0$ to time $t-1$. Since $\tilde{R}$ is a finite set, $\tilde{R}^t$ is also a finite set. 

\lemmaconvexcombination*
\begin{proof}

In general $c_t=c(F_t,A_t)$ where $F_t$ is a filtration of $(X_s,A_s)_{s=0}^t$. To start with, we assume $c_t=c(X_t,A_t)$ to be a Markovian trace coefficient \citep{munos2016safe}. We start with the simpler case because the proof is greatly simplified with notations and can extend to the general case with some care. We discuss the extension to the general case where $c_t=c(F_t,A_t)$ towards the end of the proof.

For all $t\geq 1$, we define the coefficient
\begin{align*}
    w_{y,b,r_{0:t-1}} \coloneqq \mathbb{E}_\mu\left[c_1...c_{t-1}\left(\pi(b|X_t) - c(X_t,b)\mu(b|X_t)\right)\cdot \mathbb{I}[X_t=y]\Pi_{s=0}^{t-1}\mathbb{I}[R_s=r_s]\right].
\end{align*}

Through careful algebra, we can rewrite the Retrace operator as follows
\begin{align*}
    \mathcal{R}^{\pi,\mu}\eta(x,a) = \sum_{t=1}^{\infty} \sum_{y\in\mathcal{X}} \sum_{b\in\mathcal{A}} \sum_{r_{0:t-1}\in\tilde{R}^t} w_{y,b,r_{0:t-1}} \left(\textrm{b}_{G_{0:t-1},\gamma^t}\right)_{\#} \eta(y,b).
\end{align*}
Note that each term of the form $\left(\textrm{b}_{G_{0:t-1},\gamma^t}\right)_{\#} \eta(y,b)$ corresponds to applying a pushforward operation $\left(\textrm{b}_{G_{0:t-1},\gamma^t}\right)_{\#}$ on the distribution $\eta(x,a)$, which means $\left(\textrm{b}_{G_{0:t-1},\gamma^t}\right)_{\#} \eta(y,b)\in\probspace_\infty(\mathbb{R})$. Now that we have expressed $\mathcal{R}^{\pi,\mu}\eta(x,a)$ as a linear combination of distributions, we proceed to show that the combination is in fact convex.

Under the assumption $c_t\in [0, \rho_t]$, we have $\pi(b|y)-c(y,b)\mu(b|y)\geq 0$ for all $(y,b)\in\mathcal{X}\times\mathcal{A}$. Therefore, all weights are non-negative. Next, we examine the sum of all coefficients $\sum w_{y,b,r_{0:t-1}} =  \sum_{t=1}^{\infty} \sum_{x\in\mathcal{X}}\sum_{b\in\mathcal{A}}\sum_{r_{0:t-1}\in\tilde{R}^t} w_{y,b,r_{0:t-1}}$. 
\begin{align*}
  \sum w_{y,b,r_{0:t-1}}
   &=_{(a)} \sum_{t=1}^\infty \sum_{y\in\mathcal{X}} \sum_{b\in\mathcal{A}} \mathbb{E}_\mu\left[c_1...c_{t-1} \left(\pi(b|X_t)-c(X_t,b)\mu(b|X_t)\right) \cdot \mathbb{I}[X_t=y]\right] \\ 
   &=_{(b)} \sum_{t=1}^\infty \mathbb{E}_\mu\left[c_1...c_{t-1}(1-c_t)\right] 
   =_{(c)} 1.
\end{align*}
In the above, (a) follows from the fact that $\sum_{r_s\in\tilde{R}}\mathbb{E}[\mathbb{I}[R_s=r_s]]=1$; (b) follows from the fact that for all time steps $t\geq 1$, the following is true,
\begin{align*}
    &\sum_{y\in\mathcal{X}} \sum_{b\in\mathcal{A}} \mathbb{E}_\mu\left[c_1...c_{t-1} \left(\pi(b|X_t)-c(X_t,b)\mu(b|X_t)\right) \cdot \mathbb{I}[X_t=y]\right]  \\
    &= \sum_{b\in\mathcal{A}} \mathbb{E}_\mu\left[c_1...c_{t-1} \left(\pi(b|X_t)-c(X_t,b)\mu(b|X_t)\right) \right] \\
    &=  \mathbb{E}_\mu\left[c_1...c_{t-1} \left(1-\sum_{b\in\mathcal{A}}c(X_t,b)\mu(b|X_t)\right) \right] \\
    &= \mathbb{E}_\mu\left[c_1...c_{t-1}(1-c_t)\right].
\end{align*}
Finally, (c) is based on the observation that the summation telescopes. Now, by taking the index set to be the set of indices that parameterize $w_{y,b,r_{0:t-1}}$, 
\begin{align*}
    I(x,a) = \cup_{t=1}^\infty\left(y,b,r_{0:t-1}\right)_{y\in\mathcal{X},b\in\mathcal{A},r_{0:t-1}\in\tilde{R}^t}.
\end{align*}
We can write $\mathcal{R}^{\pi,\mu}\eta(x,a)=\sum_{i\in I(x,a)}w_i \eta_i$. Note  further that for any $i \in I(x,a)$, $\eta_i=(\textrm{b}_{G_{0:t-1},\gamma^t})_\# \eta(y,b)$ is a fixed distribution. The above result suggests that $\mathcal{R}^{\pi,\mu}\eta(x,a)$ is a convex combination of fixed distributions.
\paragraph{Extension to the general case.} When $c_t=c(F_t,A_t)$ is filtration dependent, we let $\mathcal{F}_t$ to be the space of the filtration value up to time $t$. For simplicity with the notation, we assume $\mathcal{F}_t$ contains a finite number of elements, such that below we can adopt the summation notation instead of integral. Define the combination coefficient
\begin{align*}
    w_{y,b,f_t,r_{0:t-1}} \coloneqq \mathbb{E}_\mu\left[c_1...c_{t-1}\left(\pi(b|X_t) - c(F_t,b)\mu(b|X_t)\right)\cdot \mathbb{I}[X_t=y]\Pi_{s=0}^{t-1}\mathbb{I}[R_s=r_s]\right].
\end{align*}
It is straightforward to verify the following
\begin{align*}
    \mathcal{R}^{\pi,\mu}\eta(x,a) = \sum_{t=1}^{\infty} \sum_{y\in\mathcal{X}} \sum_{b\in\mathcal{A}}\sum_{f_t\in \mathcal{F}_t} \sum_{r_{0:t-1}\in\tilde{R}^t} w_{y,b,f_t,r_{0:t-1}} \left(\textrm{b}_{G_{0:t-1},\gamma^t}\right)_{\#} \eta(y,b).
\end{align*}
In addition, the combination coefficients $w_{y,b,f_t,r_{0:t-1}}$ sum to $1$ and are all non-negative.
\end{proof}

\propretracecontractive*
\begin{proof}
From the proof of Lemma~\ref{lemma:convexcombination}, we have
\begin{align*}
    \mathcal{R}^{\pi,\mu}\eta(x,a) = \sum_{t=1}^{\infty} \sum_{y\in\mathcal{X}} \sum_{b\in\mathcal{A}} \sum_{r_{0:t-1}\in\tilde{R}^t} w_{y,b,r_{0:t-1}} \left(\textrm{b}_{G_{0:t-1},\gamma^t}\right)_{\#} \eta(y,b).
\end{align*}
Now, we have for any $\eta_1,\eta_2\in\probspace_\infty(\mathbb{R})^{\mathcal{X}\times\mathcal{A}}$, for any fixed $(x,a)$, we have $W_p\left(\mathcal{R}^{\pi,\mu}\eta_1(x,a),\mathcal{R}^{\pi,\mu}\eta_2(x,a)\right)$ upper bounded as follows
\begin{align*}
     &\leq_{(a)} \sum_{t=1}^\infty \sum_{y\in\mathcal{X}}\sum_{b\in\mathcal{A}} w_{y,b,r_{0:t-1}} W_p\left(\left(\textrm{b}_{\sum_{s=0}^{t-1}\gamma^s r_s,\gamma^t}\right)_\#\eta_1(y,b),\left(\textrm{b}_{\sum_{s=0}^{t-1}\gamma^s r_s,\gamma^t}\right)_\#\eta_2(y,b)\right) \\ &\leq_{(b)} \sum_{t=1}^\infty \sum_{y\in\mathcal{X}}\sum_{b\in\mathcal{A}} w_{y,b,r_{0:t-1}} \gamma^t W_p\left(\eta_1(y,b),\eta_2(y,b)\right)
     \\ &\leq_{(c)} \sum_{t=1}^\infty \sum_{y\in\mathcal{X}}\sum_{b\in\mathcal{A}} w_{y,b,r_{0:t-1}} \gamma^t \bar{W}_p\left(\eta_1,\eta_2\right) 
\end{align*}
In the above, (a) follows by applying the convexity of the $p$-Wasserstein distance \citep{bdr2022}; (b) follows by the contraction property of the pushforward operation and $W_p$ \citep{bdr2022}; (c) follows from the definition of $\bar{W}_p$. By taking the maixmum over $(x,a)$ on both sides of the inequality, we obtain
\begin{align*}
    \bar{W}_p(\mathcal{R}^{\pi,\mu}\eta_1,\mathcal{R}^{\pi,\mu}\eta_2)\leq \beta \bar{W}_p(\eta_1,\eta_2).
\end{align*}
This concludes the proof.
\end{proof}

\begin{restatable}{lemma}{lemmapushforwardtranslation}\label{lemma:pushforwardtranslation} For any fixed $(x,a)$ and scalar $c\in\mathbb{R}$,
\begin{align}
    \left(\textrm{b}_{c,1}\right)_{\#}  \eta^\pi(x,a) = \mathbb{E}_\pi\left[ \left(\textrm{b}_{c+R_0,\gamma}\right)_{\#} \eta^\pi(X_1,A_1)\; \middle| \; X_0=x,A_0=a\right]. \label{eq:shifted-dist-bellman}
\end{align}
\end{restatable}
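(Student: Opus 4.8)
The plan is to recognize the claimed identity as nothing more than the distributional Bellman equation \eqref{eq:dist-bellman} after both sides have been pushed forward by the deterministic shift map $\textrm{b}_{c,1}(z)=c+z$. First I would start from
\begin{align*}
\eta^\pi(x,a) = \mathbb{E}_\pi\left[\left(\textrm{b}_{R_0,\gamma}\right)_{\#}\eta^\pi(X_1,A_1^\pi)\;\middle|\;X_0=x,A_0=a\right]
\end{align*}
and apply $\left(\textrm{b}_{c,1}\right)_{\#}$ to both sides. The left-hand side becomes exactly $\left(\textrm{b}_{c,1}\right)_{\#}\eta^\pi(x,a)$, which matches the left-hand side of the claim. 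I would also note in passing that, since the conditional expectation $\mathbb{E}_\pi$ averages $A_1\sim\pi(\cdot|X_1)$, the quantity $\eta^\pi(X_1,A_1^\pi)=\sum_a \pi(a|X_1)\eta^\pi(X_1,a)$ coincides with $\eta^\pi(X_1,A_1)$ once it sits inside $\mathbb{E}_\pi$, reconciling the two notations in the statement.

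The crux is to show that the right-hand side collapses to $\mathbb{E}_\pi[(\textrm{b}_{c+R_0,\gamma})_{\#}\eta^\pi(X_1,A_1)\mid\cdots]$. This rests on two elementary properties of the pushforward. First, because $\textrm{b}_{c,1}$ is a fixed map that does not depend on the random trajectory, its pushforward is a linear operation on measures that commutes with the (mixture) expectation, so that $\left(\textrm{b}_{c,1}\right)_{\#}\mathbb{E}_\pi[\,\cdot\,]=\mathbb{E}_\pi[\left(\textrm{b}_{c,1}\right)_{\#}\,\cdot\,]$. Second, pushforwards compose, $(g\circ f)_{\#}=g_{\#}\circ f_{\#}$, giving
\begin{align*}
\left(\textrm{b}_{c,1}\right)_{\#}\left(\textrm{b}_{R_0,\gamma}\right)_{\#}=\left(\textrm{b}_{c,1}\circ\textrm{b}_{R_0,\gamma}\right)_{\#}.
\end{align*}
A one-line computation $\textrm{b}_{c,1}\circ\textrm{b}_{R_0,\gamma}(z)=c+(R_0+\gamma z)=\textrm{b}_{c+R_0,\gamma}(z)$ then identifies the composed map, yielding the desired right-hand side and completing the argument.

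I expect the only genuine subtlety to be the first of these two properties, namely justifying the interchange of the pushforward with the conditional expectation. Since $c$ is a deterministic scalar, the shift map is identical across all realizations of the trajectory, so the interchange follows directly from the definition of pushforward (for each measurable set $B$, $\left(\textrm{b}_{c,1}\right)_{\#}\nu(B)=\nu(\textrm{b}_{c,1}^{-1}(B))$) combined with linearity of the expectation in its measure argument; the restriction to $\probspace_\infty(\mathbb{R})$ keeps every quantity finite and well defined. The composition step is purely algebraic and needs no care beyond the single substitution above.
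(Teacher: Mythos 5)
Your proof is correct, but it takes a genuinely different route from the paper's. The paper works at the level of CDFs: it notes that a measure on $\mathbb{R}$ is determined by its values on half-lines $B_y=\{z<y\}$, rewrites the distributional Bellman equation as the identity $F^\pi(y;x,a)=\mathbb{E}_\pi\left[F^\pi\left((y-R_0)/\gamma;X_1,A_1\right)\right]$, substitutes $y=y'-c$, and then recognizes the two sides of the resulting identity as the left- and right-hand measures of the claim evaluated on $B_{y'}$. Your argument instead stays at the level of measures: you apply $\left(\textrm{b}_{c,1}\right)_{\#}$ to both sides of the Bellman equation and invoke two functorial facts --- pushforward by a fixed map commutes with mixtures of measures, and pushforwards compose, $(g\circ f)_{\#}=g_{\#}\circ f_{\#}$ --- together with the one-line identity $\textrm{b}_{c,1}\circ\textrm{b}_{R_0,\gamma}=\textrm{b}_{c+R_0,\gamma}$. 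Both proofs encode the same underlying idea (shift the Bellman equation by $c$), but yours buys two things: it avoids the paper's slightly informal appeal to half-lines being ``dense in the sigma-field'' (which, done carefully, needs a $\pi$-system / uniqueness-of-measures argument, exactly the step you replace by the setwise definition of the pushforward), and it generalizes verbatim to pushforwards by any fixed measurable map rather than just translations. What the paper's CDF computation buys in exchange is concreteness: the change of variables is explicit and requires no discussion of interchanging pushforwards with expectations of random measures. Your side remark reconciling $\eta^\pi(X_1,A_1)$ with $\eta^\pi(X_1,A_1^\pi)$ inside $\mathbb{E}_\pi$ is also welcome, since the paper silently switches between the two notations in its own proof.
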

\begin{proof}
Let $B_y\coloneqq\{x<y|x\in\mathbb{R}\}$ be a subset of $\mathbb{R}$ indexed by $y\in\mathbb{R}$. Since the set of all such sets $\{B_y,y\in\mathbb{R}\}$ is dense in the sigma-field of $\mathbb{R}$ \citep{rudin1976principles}, if we can show for two measures $\eta_1,\eta_2$ 
\begin{align*}
    \eta_1\left(B_y\right)=\eta_2\left(B_y\right),\forall y
\end{align*}
then, $\eta_1(B)=\eta_2(B)$ for all Borel sets in $\mathbb{R}$. Hence, in the following, we seek to show
\begin{align}
   \left( \left(\textrm{b}_{c,1}\right)_{\#}  \eta^\pi(x,a)\right)B_y = \left(\mathbb{E}_\pi\left[ \left(\textrm{b}_{c+R_0,\gamma}\right)_{\#} \eta^\pi(X_1,A_1)\right]\right)B_y,\forall y\in\mathbb{R}\label{eq:proof2}
\end{align}
Let $F^\pi(y;x,a)\coloneqq P^\pi(G^\pi(x,a)\leq y)=\eta^\pi(x,a)(B_y),y\in\mathbb{R}$ be the CDF  
of random variable $G^\pi(x,a)$. The distributional Bellman equation in Equation~\eqref{eq:dist-bellman} implies
\begin{align*}
    F^\pi(y;x,a) = \mathbb{E}_\pi\left[F^\pi\left(\frac{y-R_0}{\gamma};X_1,A_1\right)\right],\forall y\in\mathbb{R}.
\end{align*}
For any constant $c\in\mathbb{R}$, let $y=y'-c$ and plug into the above equality,
\begin{align*}
    F^\pi(y'
    -c;x,a) = \mathbb{E}_\pi\left[F^\pi\left(\frac{y'-c-R_0}{\gamma};X_1,A_1\right)\right],\forall y'\in\mathbb{R}.
\end{align*}
Note the LHS is $\left((\textrm{b}_{c,1})_{\#}\eta^\pi(x,a)\right)(B_y)$ while the RHS is $\left(\mathbb{E}_\pi\left[\left(\textrm{b}_{c+R_0,\gamma}\right)_{\#}\eta^\pi(X_1,A_1)\right]\right)(B_y)$. This implies that Equation~\eqref{eq:proof2} holds and we conclude the proof.
\end{proof}

\propretracefixedpoint*

\begin{proof}
     To verify that $\eta^\pi$ is a fixed point, it is equivalent to show
\begin{align*}
     \mathbb{E}_{\mu}\left[ \sum_{t=0}^n c_{1:t}\left(\left(\textrm{b}_{G_{0:t},\gamma^{t+1}}\right)_{\#}\eta^\pi(X_{t+1},A_{t+1}^\pi) - \left(\textrm{b}_{G_{0:t-1},\gamma^{t}}\right)_{\#}\eta^\pi(X_t,A_t)\right)\right] = \mathbf{0}.
\end{align*}
Here, the RHS term $\mathbf{0}$ denotes the zero measure, a measure such that for all Borel sets $B\subset\mathbb{R}$, $ \mathbf{0}(B)=0$. We now verify that each of the summation term is a zero measure, i.e., 
\begin{align*}
    \mathbb{E}_{\mu}\left[  c_{1:t}\left(\left(\textrm{b}_{G_{0:t},\gamma^{t+1}}\right)_{\#}\eta^\pi(X_{t+1},A_{t+1}^\pi) - \left(\textrm{b}_{G_{0:t-1},\gamma^{t}}\right)_{\#}\eta^\pi(X_t,A_t)\right)\right] = \mathbf{0}.
\end{align*}
To see this, we follow the derivation below,
\begin{align}
    &\mathbb{E}_{\mu}\left[  c_{1:t}\left(\left(\textrm{b}_{G_{0:t},\gamma^{t+1}}\right)_{\#}\eta^\pi(X_{t+1},A_{t+1}^\pi) - \left(\textrm{b}_{G_{0:t-1},\gamma^{t}}\right)_{\#}\eta^\pi(X_t,A_t)\right)\right] \nonumber\\ 
    &=_{(a)}
   \mathbb{E}\left[\mathbb{E}\left[  c_{1:t} \left[\left(\left(\textrm{b}_{G_{0:t},\gamma^{t+1}}\right)_{\#}\eta^\pi(X_{t+1},A_{t+1}^\pi)\right] - \left(\textrm{b}_{G_{0:t-1},\gamma^{t}}\right)_{\#}\eta^\pi(X_t,A_t)\right) \; \middle| \;  \left(X_s,A_s,R_{s-1}\right)_{s=1}^t \right]\right] \nonumber\\
   &=_{(b)}  \mathbb{E}\left[c_{1:t} \mathbb{E}\left[  \left(\textrm{b}_{G_{0:t},\gamma^{t+1}}\right)_{\#}\eta^\pi(X_{t+1},A_{t+1}^\pi) \; \middle| \;  \left(X_s,A_s,R_{s-1}\right)_{s=1}^t \right]- c_{1:t} \left(\textrm{b}_{G_{0:t-1},\gamma^{t}}\right)_{\#}\eta^\pi(X_t,A_t) \right]\nonumber\\
   &=_{(c)} \mathbb{E}\left[c_{1:t} \underbrace{\mathbb{E}\left[  \left(\textrm{b}_{G_{0:t-1}+\gamma^t R_t,\gamma^{t+1}}\right)_{\#}\eta^\pi(X_{t+1},A_{t+1}^\pi) \; \middle| \;  \left(X_s,A_s,R_{s-1}\right)_{s=1}^t \right]}_{\text{first term}}- c_{1:t} \left(\textrm{b}_{G_{0:t-1},\gamma^{t}}\right)_{\#}\eta^\pi(X_t,A_t) \right].\label{eq:proof1}
\end{align}
In the above, in (a) we condition on $(X_s,A_s,R_s)_{s=1}^t$ and the equality follows from the tower property of expectations; in (b), we use the fact that the trace product $c_{1:t}$ and $\left(\textrm{b}_{G_{0:t-1},\gamma^t}\right)_{\#}\eta^\pi(X_t,A_t)$ are deterministic function of the conditioning variable $(X_s,A_s,R_s)_{s=1}^t$; in (c), we split the summation $G_{0:t}=G_{0:t-1} + \gamma R_t$. Now we examine the first term in Equation~\eqref{eq:proof1}, by applying Lemma~\ref{lemma:pushforwardtranslation}, we have
\begin{align*}
    \text{first\ term} = \left(\textrm{b}_{G_{0:t-1},\gamma^t}\right)_{\#}\eta^\pi(X_t,A_t).
\end{align*}

This implies Equation~\eqref{eq:proof1} evaluates to a zero measure. Hence $\eta^\pi$ is a fixed point of the operator $\mathcal{R}^{\pi,\mu}$. Because  $\mathcal{R}^{\pi,\mu}$ is also contractive by Proposition~\ref{prop:retracecontractive}, the fixed point is unique.
\end{proof}

\theoremquantilecontraction*
\begin{proof}
    The quantile projection $\Pi_\mathcal{Q}$ is a non-expansion under $\bar{W}_\infty$ \citep{dabney2018distributional}. Since $\mathcal{R}^{\pi,\mu}$ is $\beta$-contractive under $\bar{W}_p$ for all $p\geq 1$, the composed operator $\Pi_\mathcal{Q}\mathcal{R}^{\pi,\mu}$ is $\beta$-contractive under $\bar{W}_\infty$. Now, because (1) $\Pi_\mathcal{Q}\mathcal{R}^{\pi,\mu}\in\probspace_\infty(\mathbb{R})^{\mathcal{X}\times\mathcal{A}}$; (2) the space $\Pi_\mathcal{Q}\mathcal{R}^{\pi,\mu}\in\probspace_\infty(\mathbb{R})^{\mathcal{X}\times\mathcal{A}}$ is closed \citep{bdr2022}; (3) the operator is contractive, the iterate $\eta_k=\left(\Pi_\mathcal{Q}\mathcal{R}^{\pi,\mu}\right)^k\eta_0$ converges to a limiting distribution $\eta_\mathcal{R}^\pi\in\probspace_\infty(\mathbb{R})^{\mathcal{X}\times\mathcal{A}}$. Finally, by Proposition 5.28 in \citep{bdr2022}, we have $\bar{W}_\infty(\eta_\mathcal{R}^\pi,\eta^\pi)\leq (1-\beta)^{-1}\bar{W}_\infty(\Pi_\mathcal{Q}\eta^\pi,\eta^\pi)$.
\end{proof}

\lemmaunbiasedqr*
\begin{proof}
The QR loss $L_\theta^\tau(\eta)$ is defined for any distribution $\eta$ and scalar parameter $\theta$. Let $\nu=\sum_{i=1}^m w_i\eta_i$ be the linear combination of distributions $(\eta_i)_{i=1}^m$ where $w_i$s are potentially negative coefficients. In this case, $\nu$ is a signed measure. We define the generalized QR loss for $\nu$ as the linear combination of QR losses against $\eta_i$ weighted by $w_i$,
\begin{align*}
    L_\theta^\tau(\nu) \coloneqq \sum_{i=1}^m w_i L_\theta^\tau(\eta_i).
\end{align*}
Next, we note that the QR loss is linear in the input distribution (or signed measure). This means given any (potentially infinite) set of $N$ distributions or signed measures $\nu_i$ with coefficients $a_i$,
\begin{align*}
    L_\theta^\tau\left(\sum_{i=1}^N a_i \nu_i \right) = \sum_{i=1}^N a_i L_\theta^\tau( \nu_i).
\end{align*}
When $(a_i)_{i=1}^N$ denotes a distribution, the above is equivalently expressed as an exchange between expectation and the QR loss $L_\theta^\tau(\mathbb{E}[\nu_i])=\mathbb{E}[L_\theta^\tau(\nu_i)]$.
For notational convenience, we let $\theta=z_i(x,a)$ and $\tau=\tau_i$. Because the trajectory terminates within $H$ steps almost surely, since $c_{1:t}\leq \rho_{1:t}\leq \rho^H$ where $\rho\coloneqq\max_{x\in\mathcal{X},\mathcal{A}}\frac{\pi(a|x)}{\mu(a|x)}$, the estimate $\hat{L}_\theta^\tau\left(\mathcal{R}^{\pi,\mu}\eta(x,a)\right)$ is finite almost surely. Combining all results from above we obtain the following
\begin{align*}
    \mathbb{E}_\mu\left[\mathcal{R}^{\pi,\mu}\eta(x,a)\right] &=  \mathbb{E}_\mu\left[L_\theta^\tau \left(\eta(x,a)\right) + \sum_{t=0}^\infty c_{1:t} \left( L_\theta^\tau\left(\left(\textrm{b}_{t+1}\right)_{\#}\eta\left(X_{t+1},A_{t+1}^\pi\right)\right) - L_\theta^\tau\left(\left(\textrm{b}_t\right)_{\#}\eta(X_t,A_t)\right)\right)\right] \\
    &=_{(a)} \mathbb{E}_\mu\left[L_\theta^\tau\left(\hat{\mathcal{R}}^{\pi,\mu}\eta(x,a)\right)\right] =_{(b)} L_\theta^\tau\left(\mathbb{E}_\mu\left[\hat{\mathcal{R}}^{\pi,\mu}\eta(x,a)\right]\right) = L_\theta^\tau\left(\mathcal{R}^{\pi,\mu}\eta(x,a)\right).
\end{align*}
In the above, (a) follows from the definition of the generalized QR loss against signed measure the definition of $\mathcal{R}^{\pi,\mu}\eta(x,a)$; (c) follows from the linearity of the QR loss. 

Next, to show that the gradient estimate is unbiased too, the high level idea is to apply dominated convergence theorem (DCT) to justify the exhchange of gradient and expectation \citep{rudin1976principles}. Since the expected QR loss gradient $\nabla_\theta L_\theta^\tau\left(\mathcal{R}^{\pi,\mu}\eta(x,a)\right)$ exists, we deduce that the estimate $\nabla_\theta \hat{L}_\theta^\tau\left(\mathcal{R}^{\pi,\mu}\eta(x,a)\right)$ exists almost surely. Consider the absolute value of the gradient estimate $
    \left|\nabla_\theta \hat{L}_\theta^\tau\left(\mathcal{R}^{\pi,\mu}\eta(x,a)\right)\right|
$, which serves as an upper bound to the gradient estimate. In order to apply DCT, we need to show the expectation of the absolute gradient is finite. Note we have
\begin{align*}
    &\mathbb{E}_\mu\left[
    \left|\nabla_\theta \hat{L}_\theta^\tau\left(\mathcal{R}^{\pi,\mu}\eta(x,a)\right)\right|
\right] \\ &= \mathbb{E}_\mu\left[\left|\nabla_\theta L_\theta^\tau \left(\eta(x,a)\right) + \sum_{t=0}^H c_{1:t} \left( \nabla_\theta L_\theta^\tau\left(\left(\textrm{b}_{t+1}\right)_{\#}\eta\left(X_{t+1},A_{t+1}^\pi\right)\right) - \nabla_\theta L_\theta^\tau\left(\left(\textrm{b}_t\right)_{\#}\eta(X_t,A_t)\right)\right)\right|\right] \\
&\leq_{(a)} \mathbb{E}_\mu\left[\left|\nabla_\theta L_\theta^\tau \left(\eta(x,a)\right)\right| + \sum_{t=0}^H c_{1:t} \left| \nabla_\theta L_\theta^\tau\left(\left(\textrm{b}_{t+1}\right)_{\#}\eta\left(X_{t+1},A_{t+1}^\pi\right)\right) - \nabla_\theta L_\theta^\tau\left(\left(\textrm{b}_t\right)_{\#}\eta(X_t,A_t)\right)\right|\right] \\
&\leq_{(b)} \mathbb{E}_\mu\left[1 + \sum_{t=0}^H \rho^t \cdot 2\right] <\infty,
\end{align*}
where (a) follows from the application of triangle inequality; (b) follows from the fact that the QR loss gradient against a fixed distribution is bounded $\nabla_\theta L_\theta^\tau\left(\nu\right)\in[-1,1],\forall \nu\in\probspace_\infty(\mathbb{R})$ \citep{dabney2018distributional}.

With the application of DCT, we can exchange the gradient and expectation operator, which yields $ \mathbb{E}_\mu\left[\nabla_\theta \hat{L}_\theta^\tau\left(\mathcal{R}^{\pi,\mu}\eta(x,a)\right)\right] = \nabla_\theta \mathbb{E}_\mu\left[ \hat{L}_\theta^\tau\left(\mathcal{R}^{\pi,\mu}\eta(x,a)\right)\right] = \nabla_\theta L_\theta^\tau\left(\mathcal{R}^{\pi,\mu}\eta(x,a)\right)$.
\end{proof}

\end{document}